\icmltitlerunning{Kalman meets Bellman: Improving Policy Evaluation through Value Tracking} 
\newtheorem{theorem}{Theorem}
\newtheorem{assumption}{Assumption}
\newtheorem{corollary}{Corollary}
\newtheorem{lemma}{Lemma}
\begin{document} 
	
	\title{\Large \bf Kalman meets Bellman: Improving Policy Evaluation through Value Tracking}
	
	\author[]{Shirli Di-Castro Shashua \thanks{Technion, Israel. shirlidi@tx.technion.ac.il}}
	\author[]{Shie Mannor \thanks{Technion, Israel. shie@ee.technion.ac.il}}
	
	\affil[]{}
	\date{}

	
	\twocolumn
	
	
\maketitle
\thispagestyle{empty}
\begin{abstract}
	Policy evaluation is a key process in Reinforcement Learning (RL). It assesses a given policy by estimating the corresponding value function. When using parameterized value functions, common approaches minimize the sum of squared Bellman temporal-difference errors and receive a point-estimate for the parameters. Kalman-based and Gaussian-processes based frameworks were suggested to evaluate the policy by treating the value as a random variable. These frameworks can learn uncertainties over the value parameters and exploit them for policy exploration. When adopting these frameworks to solve deep RL tasks, several limitations are revealed: excessive computations in each optimization step, difficulty with handling batches of samples which slows training and the effect of memory in stochastic environments which prevents off-policy learning. In this work, we discuss these limitations and propose to overcome them by an alternative general framework, based on the extended Kalman filter. We devise an optimization method, called Kalman Optimization for Value Approximation (KOVA) that can be incorporated as a policy evaluation component in policy optimization algorithms. KOVA minimizes a regularized objective function that concerns both parameter and noisy return uncertainties.  We analyze the properties of KOVA and present its performance on deep RL control tasks. 
\end{abstract}

\section{Introduction}
\label{introduction}
Reinforcement Learning (RL) solves sequential decision-making problems by designing an agent that interacts with the environment and seeks an optimal policy \cite{sutton1998reinforcement}. During the learning process, the agent is required to evaluate its policies using a value function. In many real world RL domains, such as robotics, games and autonomous driving, state and action spaces are large; hence, the value is approximated by a parameterized function. Common approaches for estimating the value function parameters minimize the sum of squared Bellman Temporal Difference (TD) errors \cite{dann2014policy}. These approaches treat the value as a {\em deterministic function} and calculate a point-estimate of the parameters. Alternatively, treating the value or its parameters as {\em random variables} (RVs) has been proposed by \citet{engel2003bayes,engel2005reinforcement}. In their algorithm, called GPTD, they used Gaussian-Processes (GPs) for both the value and the return to capture their uncertainty in policy evaluation. However, GPTD does not consider the possible non-stationarity of the value, that can arise from changes in the policy, changes in the MDP and from bootstrapping \cite{phua2007tracking}. It encourages a {\em tracking} property rather than a convergence property, which has been shown to be desirable even in stationary environments \cite{sutton2007role}. 

To alleviate the effect of non-stationarity, \citet{geist2010kalman} proposed an algorithm, called KTD, which uses the Unscented Kalman filter (UKF) \cite{wan2000unscented}, a non-linear Kalman filter \cite{kalman1960new}, to learn the uncertainty in value parameters. Kalman filters are used for online tracking and for estimating states in dynamic environments through indirect noisy observations. These methods have been successfully applied to numerous control dynamic systems such as navigation and tracking targets \cite{sarkka2013bayesian}. KTD tracks the value parameters by modeling them as RVs following a random walk. 

Policy evaluation for {\em deep RL tasks} consist of several {\em challenges}: learning non-linear approximations such as Deep Neural Networks (DNNs); optimizing over batches of transitions, possibly independent; flexibility in evaluating the policy in both on-policy and off-policy settings and flexibility in using possibly multi-step transitions. Both GPTD and KTD enjoy desirable features of learning uncertainties and exploit them for policy exploration. However, when considering deep RL tasks, we will show that these algorithms suffer from some major limitations: (i) in stochastic environments, they are sensitive to a memory effect that prevent them from being used in off-policy settings. (ii) KTD cannot handle batches of samples, and thus suffers from slow training process. GPTD can handle only batches of successive transitions samples, and thus cannot randomly sample from an experience buffer. (iii) The UKF used in KTD requires excessive computations in each optimization step, which scales poorly with the parameters dimension. 

In this work we preserve the Kalman properties of learning uncertainties and tracking the value parameters while boosting these Kalman-based approaches to handle deep RL tasks. For this purpose we devise KOVA ({\bf K}alman {\bf O}ptimization for {\bf V}alue {\bf A}pproximation): a general framework for addressing uncertainties while approximating value-based functions in deep RL  domains. KOVA combines the Extended Kalman filter (EKF) \cite{anderson1979optimal,gelb1974applied} estimation techniques with RL principles to improve value approximation. KOVA can be incorporated as a policy evaluation component in popular policy optimization algorithms. Unlike GPTD and KTD, KOVA is feasible when using non-linear approximation functions as DNNs and can estimate the value in both on-policy and off-policy settings. To the best of our knowledge, this is the first attempt of adapting KTD and GPTD to deep RL tasks.  

Our main contributions are: (1) we propose an EKF-based framework for policy evaluation in RL, which accounts for both parameter and observation uncertainties; (2) we devise an optimization method, called KOVA, and prove it minimizes at each time step a regularized objective function. This optimizer can be easily incorporated as a policy evaluation component in popular policy optimization algorithms, and improve upon existing methods; (3) we clarify the differences between KOVA vs. GPTD and KTD and explain why KOVA adapts them to deep RL domains; (4) we demonstrate the improvement achieved by KOVA on several control tasks with continuous state and action spaces. 

\begin{table*}[t]
	\small
	\centering
	\caption{Examples for policy optimization algorithms and their Bellman TD error  $\delta(u; \boldsymbol{\theta}_t)$ type. The decomposition of $\delta(u; \boldsymbol{\theta}_t)$ into the observation function $h(u; \boldsymbol{\theta}_t)$ and the target label $y(u)$ in the EKF model (\ref{eq:Extended-Kalman}) enables the integration of our KOVA optimizer with policy optimization algorithms that include a policy evaluation phase. $\boldsymbol{\theta}'$ refers to a fixed parameter, different from the one being trained $\boldsymbol{\theta}_t$.}
	\begin{tabular}{|p{39mm}|p{30mm}|p{18mm}|p{26mm}|p{44mm}|}
		\hline
		{\it Algorithm} &  {\it Bellman TD error} \newline $\delta(u; \boldsymbol{\theta}_t)$ & {\it Transition} \newline {\it input} $u$ &{\it Observation function} \newline {\bf $h(u; \boldsymbol{\theta}_t)$ }&  {\it Target label} $y(u)$ \\ \hline \hline
		DQN \cite{mnih2013playing} & Optimality equation & $(s, a, r, s')$ & $Q(s, a; \boldsymbol{\theta}_t)$ & $r + \gamma \max_{a'} Q(s', a'; \boldsymbol{\theta}')$ \\ \hline
		DDPG \cite{lillicrap2015continuous} & $1$-step Q-evaluation & $(s, a, r, s')$ & $Q(s, a; \boldsymbol{\theta}_t)$ & $r + \gamma Q(s', \pi(s'); \boldsymbol{\theta}')$ \\ \hline
		A3C  \cite{mnih2016asynchronous}   &  $k$-step V-evaluation & $(s_m, r_m,\!\ldots,$ \newline $r_{m\!+\!k\!-1},  s_{m\!+\!k})$ & $V(s_m; \boldsymbol{\theta}_t)$  &  $\sum_{i=0}^{k-1}\gamma^i r_{m+i} + \gamma^k  V(s_{m + k}; \boldsymbol{\theta}')$ \\ \hline
		 TRPO \cite{schulman2015trust} \newline PPO \cite{schulman2017proximal}  & GAE \newline \cite{schulman2015high} & $(s_m, r_m,\!\ldots,$ \newline $r_{m\!+\!k\!-1},  s_{m\!+\!k})$ & $V(s_m; \boldsymbol{\theta}_t)$ & 
		$\sum_{i=0}^{\infty}(\gamma \lambda)^i \big(r_{m\!+\!i}\!+\!\gamma  V(s_{m\!+\!i\!+\!1}; \boldsymbol{\theta}') $ \newline $ - V(s_{m + i}; \boldsymbol{\theta}') \big)  + V(s_{m}; \boldsymbol{\theta}')$ \\ \hline
		SAC \cite{haarnoja2018soft} & $1$-step V-evaluation \newline $1$-step Q-evaluation &  $(s,a)$  \newline $(s, a, r)$ & $V(s; \boldsymbol{\theta}_t)$ \newline
		$Q(s, a; \tilde{\boldsymbol{\theta}}_t)$ & $\mathbb{E}_{a}[Q(s, a; \tilde{\boldsymbol{\theta}}_t) - \log \pi(a|s)]$ \newline $r + \gamma \mathbb{E}_{s'}[V(s';\boldsymbol{\theta}')]$\\ \hline
	\end{tabular}
	\label{vf_table}
\end{table*}

\section{Background}
\subsection{Extended Kalman Filter (EKF)}
\label{Section:EKF}
We briefly outline EKF \cite{anderson1979optimal, gelb1974applied}, that serves as a foundation of the formulation we present in this work. The EKF can be used for learning parameters $\boldsymbol{\theta}$ of a non-linear approximation function. It considers the following model:
\begin{equation}
\label{eq:Extended-Kalman}
\begin{cases}
\boldsymbol{\theta}_t = \boldsymbol{\theta}_{t-1} + {\bf v}_t\\
y({\bf u}_t) = h({\bf u}_t; \boldsymbol{\theta}_t) + {\bf n}_t
\end{cases},
\end{equation}
where $\boldsymbol{\theta}_t\!\in\!\mathbb{R}^{d\!\times\!1}$ is a parameter vector evaluated at time $t$, ${\bf v}_t$ is the evolution noise, ${\bf n}_t$ is the observation noise, both modeled as additive white noises with covariances ${\bm P}_{{\bm v}_t}$ and ${\bm P}_{{\bm n}_t}$, respectively. The {\em observations vector} is denoted as $y({\bf u}_t)\!=\![y(u_t^1),\!y(u_t^2),\!\ldots,\!y(u_t^N)]^\top\!\in\!\mathbb{R}^{N\!\times\!1}$ for some input vector ${\bf u}_t\!\in\!\mathbb{R}^{N\!\times\!1}$ at time $t$. The vectorial {\em observation function} is composed of $N$ non-linear functions $h(u_t^i; \boldsymbol{\theta}_t)$: $h({\bf u}_t; \boldsymbol{\theta}_t)\!=\![
h(u_t^1; \boldsymbol{\theta}_t), 
h(u_t^2; \boldsymbol{\theta}_t), 
\ldots,
h(u_t^N; \boldsymbol{\theta}_t)  
]^\top$. 

As seen in (\ref{eq:Extended-Kalman}), EKF treats the parameters $\boldsymbol{\theta}_t$ as {\em RVs}, similarly to Bayesian approaches that assume the parameters to belong to an uncertainty set $\Theta$ governed by the mean and covariance of the parameters distribution. The estimation at time t, denoted as $\boldsymbol{\hat{\theta}}_{t|\cdot}$ is the conditional expectation of the parameters with respect to the observed data: $\boldsymbol{\hat{\theta}}_{t|t}  \triangleq \mathbb{E} [ \boldsymbol{\theta}_t | y_{1:t}]$ and  $\boldsymbol{\hat{\theta}}_{t|t-1}  \triangleq \mathbb{E} [ \boldsymbol{\theta}_t | y_{1:t-1}]  = \boldsymbol{\hat{\theta}}_{t-1|t-1}$. With some abuse of notation, $y_{1:t'}$ are the observations gathered up to time $t'$: $y ({\bf u}_1), \ldots, y ({\bf u}_{t'})$. The {\it parameters error} is $\boldsymbol{\tilde{\theta}}_{t|\cdot}  \triangleq \boldsymbol{\theta}_t - \boldsymbol{\hat{\theta}}_{t|\cdot}$ and the conditional {\it error covariance} is ${\bf P}_{t|\cdot}  \triangleq \mathbb{E} \big[  \boldsymbol{\tilde{\theta}}_{t|\cdot}  \boldsymbol{\tilde{\theta}}_{t|\cdot}^\top | y_{1:\cdot}\big]$. EKF further defines the {\it observation innovation}, the {\it covariance of the innovation} and the {\it Kalman gain} in Equations (\ref{eq:The observation innovation}) - (\ref{eq:Kal_gain}) respectively. Then, it updates the parameter and covariance according to Equation (\ref{eq:EKF}):
\begin{align}
\label{eq:The observation innovation} & {\bf \tilde{y}}_{t|t-1} \triangleq h({\bf u}_t; \boldsymbol{\theta}_t) - \mathbb{E}[h({\bf u}_t; \boldsymbol{\theta}_t)|y_{1:t-1}]\\
\label{eq:The covariance of the innovation} & {\bf P}_{{\bf \tilde{y}}_t}  \triangleq \mathbb{E}[ {\bf \tilde{y}}_{t|t-1} {\bf \tilde{y}}_{t|t-1}^\top |y_{1:t-1}] + {\bf P}_{{\bf n}_t} \\
\label{eq:Kal_gain} & {\bf K}_t \triangleq \mathbb{E}[ \boldsymbol{\tilde{\theta}}_{t|t-1} {\bf \tilde{y}}_{t|t-1} |y_{1:t-1}] {\bf P}_{{\bf \tilde{y}}_t}^{-1}\\
& \label{eq:EKF} \begin{cases}
\boldsymbol{\hat{\theta}}_{t|t}^{\text{EKF}} = \boldsymbol{\hat{\theta}}_{t|t-1} +  {\bf K}_t \big( y({\bf u}_t) - h({\bf u}_t; \boldsymbol{\hat{\theta}}_{t|t-1}) \big),\\
{\bf P}_{t|t} = {\bf P}_{t|t-1} - {\bf K}_t {\bf P}_{{\bf \tilde{y}}_t} {\bf K}_t^\top.
\end{cases}
\end{align}

\subsection{Reinforcement Learning and MDPs}
\label{Reinforcement learning}
The standard RL setting considers interaction of an agent with an environment $\mathcal{E}$. The environment is modeled as a Markov Decision Process (MDP) $\{\mathcal{S}, \mathcal{A}, P, R, \gamma\}$ where $\mathcal{S}$ is a set of states, $\mathcal{A}$ is a set of actions, $P:\!\mathcal{S}\!\times\!\mathcal{A}\rightarrow\![0,1]^{|\mathcal{S}|}$ is the state transition probabilities for each state $s$ and action $a$, $R:\!\mathcal{S}\!\times\!\mathcal{A}\!\rightarrow\!\mathbb{R}$ is a reward function and $\gamma\!\in\![0,1)$ is a discount factor. At each time step $t$, the agent observes state $s_t\!\in\!\mathcal{S}$ and chooses action $a_t\!\in\!\mathcal{A}$ according to a policy $\pi:\!\mathcal{S}\!\times\!\mathcal{A}\!\rightarrow\![0,1]$. The agent receives an immediate reward $r_t(s_t, a_t)$ and the environment stochastically steps to state $s_{t+1}\!\in\!\mathcal{S}$ according to the probability distribution  $P(s_{t+1}|s_t, a_t)$. The state-value function and the state-action Q-function are used for evaluating the performance of a fixed policy $\pi$ \cite{sutton1998reinforcement}: $V^{\pi}(s)\!=\!\mathbb{E}^{\pi}\big[ \sum_{t=0}^{\infty} \gamma^t r(s_t, a_t) | s_0 = s \big]$ and $Q^{\pi}(s, a)\!=\!\mathbb{E}^{\pi}\big[ \sum_{t=0}^{\infty} \gamma^t r_t(s_t, a_t)  | s_0 = s, a_0 = a \big]$. These functions satisfy a recursion form defined by the following Bellman equations: $V^{\pi}(s)\!=\!\mathbb{E}^{\pi}\big[ r(s, a)\!+\!\gamma V^{\pi}(s') \big]$ and $Q^{\pi}(s, a)\!=\!\mathbb{E}^{\pi}\big[ r(s, a)\!+\!\gamma Q^{\pi}(s', a')\big]$, where $s'$ is the successive state after $s$, $a' \sim \pi(s')$ and the expectations are with respect to the state (state-action) distribution induced by transition law $P$ and policy $\pi$. 

\subsection{Value-based Function Estimation}
In this paper, we will use the term {\em value-based function} (VBF) to denote $V^{\pi}(s)$ and $Q^{\pi}(s,\!a)$. When the state or action space is large, the VBF can be approximated using a parameterized function, $h(\cdot; \boldsymbol{\theta})$. We focus on general, possibly non-linear approximations such as DNNs that can effectively learn complex approximations. We also use the notation $u$ to specify the {\it transition input} for the approximated VBF at time $t$, $h(u; \boldsymbol{\theta}_t)$, and for a {\it target label} $y(u)$. For example, for $h(u; \boldsymbol{\theta}_t)\!=\!V^{\pi}(s_m; \boldsymbol{\theta}_t)$, and $y(u)\!=\!r_m\!+\!\gamma V^{\pi}(s_{m+1}; \boldsymbol{\theta}')$,  $u\!=\!(s_m, r_m, a_m, s_{m+1})$ is the 1-step transition at time $m$.  In Table \ref{vf_table}, we provide examples of several options for $y(u)$ and $h(u; \boldsymbol{\theta}_t)$ which clarify how this general notation can be utilized in some common policy optimization algorithms. Note that $y(u)$ may exploit information from successive transitions since $u$ may contain multiple transitions.

Extensive research has been dedicated to learn approximated VBFs \cite{dann2014policy,maei2011gradient}, mostly by optimizing VBF parameters at each time step $t$ through minimization of the empirical mean squared {\it Bellman TD error}  $\delta (u; \boldsymbol{\theta}_t) \triangleq y(u)\!-\!h(u; \boldsymbol{\theta}_t)$, over a batch of $N$ samples generated from environment $\mathcal{E}$ under a given policy:
\begin{equation}
\label{eq:vf objective}
L_t^{\text{MLE}}(\boldsymbol{\theta}_t) = \frac{1}{2N} \sum_{i=1}^{N} \delta^2 (u_i; \boldsymbol{\theta}_t).
\end{equation} 
Traditionally, VBFs are trained by gradient methods: $\boldsymbol{\theta}_{t+1}\!\leftarrow\!\boldsymbol{\theta}_t + \alpha \mathbb{E}_{u \sim p(\cdot)} \big[ \big(y(u)\!-\!h(u; \boldsymbol{\theta}_t)\big) \nabla_{\boldsymbol{\theta}_t} h(u; \boldsymbol{\theta}_t) \big]$, where $\alpha$ is the learning rate and $p(\cdot)$ is the experience distribution. This training procedure seeks for a {\em point-estimate} of the parameters. 

\subsection{Kalman Approach for VBFs Estimation}
\label{Section:KTD}
Optimizing VBFs by looking for the parameter point-estimate, lacks a broader perspective of possible VBF uncertainties. This motivated \citet{engel2003bayes,engel2005reinforcement} to treat the value function and the observed rewards as Gaussian processes. In their algorithm, called GPTD, the VBF is a {\em random variable}. Although the value is no longer deterministic, we will keep using the term VBF for $V^{\pi}$ and $Q^{\pi}$ in order to be clear when comparing to point-estimate methods. 
\citet{geist2010kalman} extended GPTD to non-stationary environments and to parametric non-linear approximations. They proposed an algorithm called Kalman Temporal Difference (KTD) which uses the Unscented Kalman filter (UKF) \cite{julier1997new,wan2000unscented} to learn the uncertainty in VBF parameters. The formulations proposed in KTD and GPTD are appealing since they model both observations and parameter uncertainties during  estimation, which in turn improve the learned policy and encourage exploration \cite{geist2010kalman}.

Surprisingly, we did not see any attempt to apply these algorithms to deep RL. When looking at the challenges of policy evaluation in deep RL tasks, as we described in the Introduction, KTD, GPTD and their subsequent works \cite{pietquin2011sample,wang2014kalman,kitao2017model,ghavamzadeh2016bayesian} suffer from some major limitations:\\
{\bf 1. ``Off policy issue''}: In KTD and GPTD the observation function being optimized is the {\em difference} between VBFs of two successive states. When transitions are stochastic, the observation noise is colored, resulting a memory effect of previous transitions. This phenomenon prevents KTD and GPTD from performing in off-policy settings (see \cite{geist2010kalman} for more details)\footnote{GPTD can handle the colored noise in linear parametrization, however, for non-linear cases, this phenomenon still exists.}.\\ 
{\bf 2. ``Coupling issue''}: In KTD, the observation $r_t$ at time $t$ is a scalar. This formulation {\em couples} the parameters update index $t$ in $\boldsymbol{\theta}_t$ to the transition index in $(s_t, a_t, r_t, s_{t+1})$. This means that in KTD the parameters are updated with {\em every single transition} performed by the RL agent, enforcing training batch of size 1. \\
{\bf 3. ``Growing vector-size issue''}: According to GPTD formulation, the observations vector contains successive transitions from a single trajectory. This prevents it from using independent transitions from possibly different trajectories in a single batch. Moreover, the size of the batch vector grows with every new observed transition, resulting a varying batch size, which can cause computations issues when optimizing DNNs.\\
{\bf 4. ``Excessive computations issue''}: Through its formalization as UKF, KTD involves Cholesky decomposition applied on a $d\!\times\!d$ matrix and requires running $2\!*\!(2d\!+\!1)$ forward passes in each optimization step (see Figure \ref{fig:kova_vs_ktd}). Recall that $d$ denotes the parameter dimension. When we consider DNNs, these requirements are a major drawback which prevents KTD from being applicable to deep RL domains. 

These limitations motivate us to formulate a Kalman-based framework for approximating VBFs that would enjoy the advantages of Kalman optimizers while addressing the challenges of policy evaluation in deep RL tasks. In the next section, we discuss how adopting the EKF model can circumvent these limitations, and how we change the formulation of the model in order to adapt it to deep RL tasks. 

\begin{figure}[t]
	\centering{
		\includegraphics[width=1.0\linewidth,keepaspectratio]{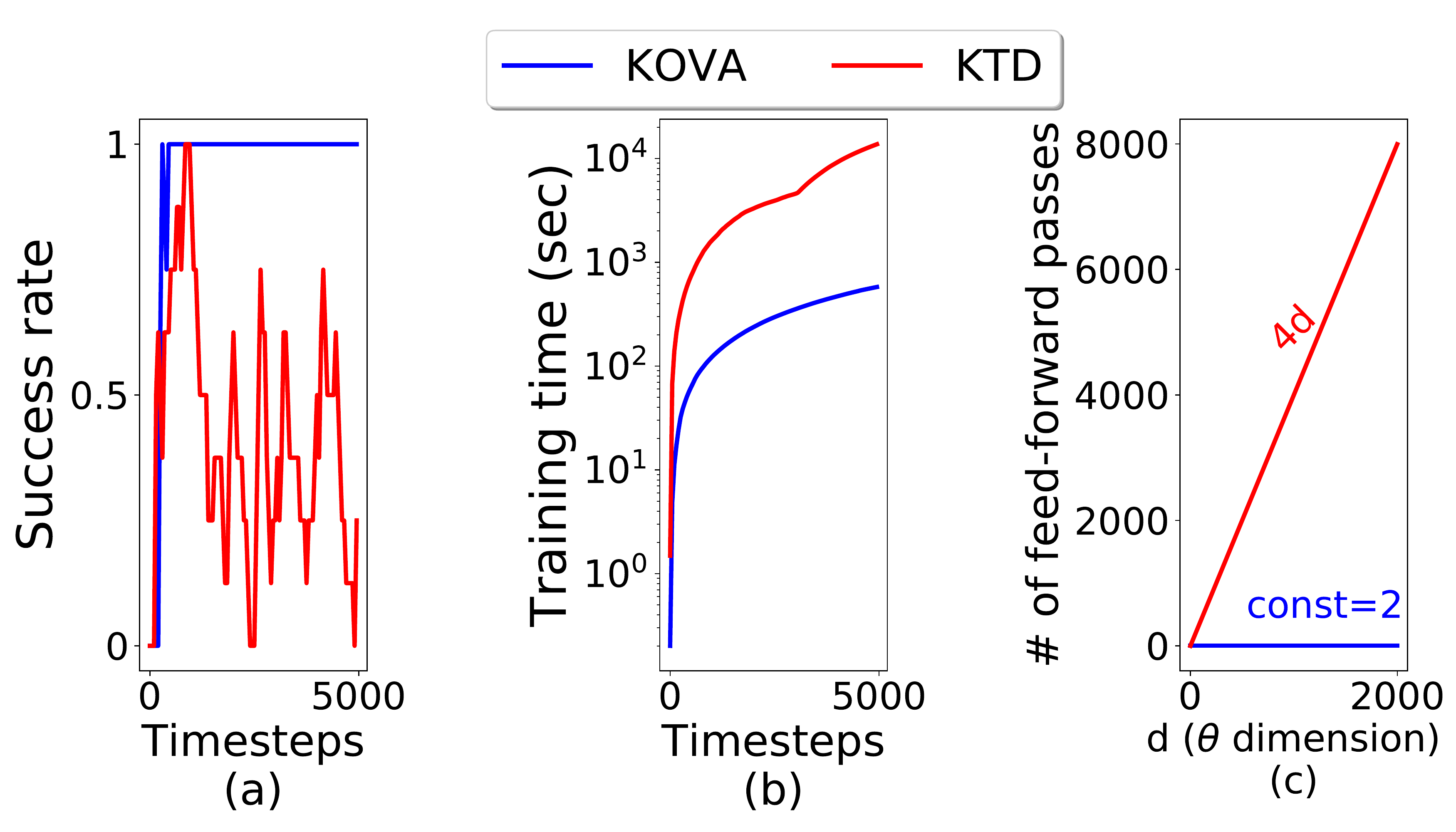}} 
	\caption{{\bf Motivating example} (a) The success rate of KOVA vs KTD for solving a simple $4\times4$ maze. The Q-function is modeled as a DNN with $d=340$ parameters. KOVA quickly solves the maze, while KTD fails. (b) Excessive computations required by KTD significantly slow training. (c) While KOVA requires only two DNN feed-forward passes in each optimization step, KTD requires $\sim\!4d$ passes. As $d$ grows, it becomes impossible to apply KTD in deep RL tasks.} 
	\label{fig:kova_vs_ktd}
\end{figure}

\section{KOVA for VBF Approximation}
\label{Section:EKF_RL}
Our goal is to estimate the VBFs parameters $\boldsymbol{\theta}_t$. One way is to learn them by maximum likelihood estimation (MLE) using stochastic gradient descent methods: $\boldsymbol{\theta}^{\text{MLE}} = \arg\max_{\boldsymbol{\theta}} \log p(y_{1:t}|\boldsymbol{\theta})$. This forms the objective function in Equation (\ref{eq:vf objective}). The MLE approach assumes that the parameters are deterministic. We propose a different approach for approximating VBFs: Assume $\boldsymbol{\theta}$ is a parameter vector modeled as a random process. In each time step $t$, $\boldsymbol{\theta}$ is an {\em RV} with prior $p(\boldsymbol{\theta})$. Applying Bayes rule, we can learn the parameters by calculating the maximum a-posteriori (MAP) estimator: 
$\boldsymbol{\theta}^{\text{MAP}}\!=\!\arg\max_{\boldsymbol{\theta}} \{\log p(\boldsymbol{\theta}|y_{1:t})\!=\!\arg\max_{\boldsymbol{\theta}} \log p(y_{1:t}|\boldsymbol{\theta}) + \log p(\boldsymbol{\theta}) \}$. We will use the iterative form of $\boldsymbol{\theta}_t^{\text{MAP}}$:   
\begin{align}
\label{eq:MAP_EKF}
\boldsymbol{\theta}_t^{\text{MAP}} = \arg\max_{\boldsymbol{\theta}_t} \{\log p(y_t|\boldsymbol{\theta}_t) + \log  p(\boldsymbol{\theta}_t|y_{1:t-1}) \}.
\end{align}
Here, instead of using the parameters prior, we use an equivalent derivation for the parameters posterior conditioned on $y_{1:t}$, based on the likelihood of a {\it single vector } of observations $y_t \triangleq y({\bf u}_t)$ and the posterior conditioned on $y_{1:t-1}$ \cite{van2004sigma}. This unique derivation of the MAP estimator is a key step for the vectorial incremental Kalman updates we present later in our KOVA algorithm and for further defining $L_t^{\text{EKF}}$ (\ref{eq:EKF-loss}). 

This alternative approach is modeled in Equation (\ref{eq:Extended-Kalman}). EKF assumes two sources of randomness in VBFs: The first equation in  (\ref{eq:Extended-Kalman}) models the parameters vector $\boldsymbol{\theta}_t$ as an RV. The second equation in (\ref{eq:Extended-Kalman}) models the observations vector $y({\bf u}_t)$ as an RV. These two uncertainty models are {\em independent} of each other. The first is the ``epistemic'' uncertainty which may model our uncertainty in the MDP model (state transition probabilities and reward function). The second is the ``aleatoric'' uncertainty which is related to the stochastic transitions in the trajectory and to the possibly random policy. The role and effect of these two distinguish uncertainties is carefully discussed by \citet{osband2018randomized}.

In order so solve the MAP estimation in Equation (\ref{eq:MAP_EKF}) we need to define the likelihood  $p(y_{t}|\boldsymbol{\theta}_t)$ and the posterior $p(\boldsymbol{\theta}_t|y_{1:t-1})$. We adopt the EKF model (\ref{eq:Extended-Kalman}), and make the following assumptions: 
\begin{assumption}
	\label{As:ConditionalIndependance2}
	The likelihood $p(y({\bf u}_t)|\boldsymbol{\theta}_t)$ is Gaussian: 
	$y({\bf u}_t)|\boldsymbol{\theta}_t \sim \mathcal{N}( h({\bf u}_t; \boldsymbol{\theta}_t), {\bf P}_{{\bf n}_t})$. 
\end{assumption}
\begin{assumption}
	\label{As:GaussianPosterior2}
	The posterior distribution $p(\boldsymbol{\theta}_t|y_{1:t-1})$ is Gaussian: $\boldsymbol{\theta}_t|y_{1:t-1} \sim \mathcal{N}(\boldsymbol{\hat{\theta}}_{t|t-1},{\bf P}_{t|t-1})$.
\end{assumption}
These assumptions are common when using EKF. In the context of RL, these assumptions add flexibility: the VBFs $h({\bf u}_t; \boldsymbol{\theta}_t)$ are treated as {\em RVs} and information is gathered on the uncertainty of their estimate. In addition, the noisy observations (target labels) can have different variances and can even be correlated. 

Before introducing Theorem \ref{theorem1}, we define the {\em Bellman TD error vector} as $\delta({\bf u}_t; \boldsymbol{\theta}_t)=\![\delta(u_t^1; \boldsymbol{\theta}_t), .., \delta(u_t^N; \boldsymbol{\theta}_t)]^\top=\!y({\bf u}_t) - h({\bf u}_t; \boldsymbol{\theta}_t)$. We also define the first order Taylor expansion for the observation function $h(\boldsymbol{\theta}_t)$: $h({\bf u}_t; \boldsymbol{\theta}_t) 
= h({\bf u}_t; \boldsymbol{\hat{\theta}})\!+\!  \nabla_{\boldsymbol{\theta}_t} h({\bf u}_t; \boldsymbol{\hat{\theta}})^\top \big( \boldsymbol{\theta}_{t}\!-\!\boldsymbol{\hat{\theta}} \big)$, where 
$ \nabla_{\boldsymbol{\theta}_t} h({\bf u}_t; \boldsymbol{\hat{\theta}}) 
= \begin{bmatrix} 
\nabla_{\boldsymbol{\theta}_t} h(u_t^1; \boldsymbol{\hat{\theta}}) , \nabla_{\boldsymbol{\theta}_t} h(u_t^2; \boldsymbol{\hat{\theta}}), \ldots , \nabla_{\boldsymbol{\theta}_t} h(u_t^N; \boldsymbol{\hat{\theta}})  \end{bmatrix} \in \mathbb{R}^{d \times N} $
and $\boldsymbol{\hat{\theta}}$ is typically chosen to be the previous estimation of the parameters at time $t\!-\!1$,  $\boldsymbol{\hat{\theta}}=\boldsymbol{\hat{\theta}}_{t|t-1}$. We can now derive a regularized objective function, $L_t^{\text{EXF}}$ (\ref{eq:EKF-loss}), and argue in its favor for optimizing VBFs in RL:

\begin{theorem}
	\label{theorem1}
	Under Assumptions \ref{As:ConditionalIndependance2} and \ref{As:GaussianPosterior2}, $\boldsymbol{\hat{\theta}}^{\text{EKF}}_{t|t}$  (\ref{eq:EKF}) minimizes at each time step $t$ the following regularized objective function: 
	\begin{align}
	\label{eq:EKF-loss}
	\nonumber L^{\text{EKF}}_t(\boldsymbol{\theta}_t) & =  \frac{1}{2}  \delta({\bf u}_t; \boldsymbol{\theta}_t)^\top {\bf P}_{{\bf n}_t}^{-1}  \delta({\bf u}_t; \boldsymbol{\theta}_t) \\
	& +  \frac{1}{2}(\boldsymbol{\theta}_t - \boldsymbol{\hat{\theta}}_{t|t-1})^\top {\bf P}_{t|t-1}^{-1} (\boldsymbol{\theta}_t - \boldsymbol{\hat{\theta}}_{t|t-1}).
	\end{align}
\end{theorem}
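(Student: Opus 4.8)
The plan is to show two things: that $L^{\text{EKF}}_t$ is, up to an additive constant, the negative log of the MAP objective \eqref{eq:MAP_EKF}; and that the unique minimizer of $L^{\text{EKF}}_t$, computed after the first-order linearization of $h$ introduced just before the theorem, coincides with the EKF update \eqref{eq:EKF}. The first part is essentially a substitution of the Gaussian assumptions; the second part is where the real work lies.

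First I would plug Assumptions \ref{As:ConditionalIndependance2} and \ref{As:GaussianPosterior2} into \eqref{eq:MAP_EKF}. The Gaussian likelihood $\mathcal{N}(h({\bf u}_t;\boldsymbol{\theta}_t),{\bf P}_{{\bf n}_t})$ contributes $-\tfrac12\,\delta({\bf u}_t;\boldsymbol{\theta}_t)^\top {\bf P}_{{\bf n}_t}^{-1}\delta({\bf u}_t;\boldsymbol{\theta}_t)$ plus a $\boldsymbol{\theta}_t$-independent constant, and the Gaussian posterior $\mathcal{N}(\boldsymbol{\hat{\theta}}_{t|t-1},{\bf P}_{t|t-1})$ contributes $-\tfrac12(\boldsymbol{\theta}_t-\boldsymbol{\hat{\theta}}_{t|t-1})^\top {\bf P}_{t|t-1}^{-1}(\boldsymbol{\theta}_t-\boldsymbol{\hat{\theta}}_{t|t-1})$ plus a constant. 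Hence maximizing the MAP objective is the same as minimizing $L^{\text{EKF}}_t$, which identifies the objective in the statement.

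Next I would compute the minimizer explicitly. Since $h$ is nonlinear I replace $\delta({\bf u}_t;\boldsymbol{\theta}_t)$ by the residual of its first-order Taylor expansion about $\boldsymbol{\hat{\theta}}=\boldsymbol{\hat{\theta}}_{t|t-1}$; writing ${\bf H}_t\triangleq\nabla_{\boldsymbol{\theta}_t}h({\bf u}_t;\boldsymbol{\hat{\theta}}_{t|t-1})^\top\in\mathbb{R}^{N\times d}$ and $\delta_t\triangleq y({\bf u}_t)-h({\bf u}_t;\boldsymbol{\hat{\theta}}_{t|t-1})$, the objective becomes a strictly convex quadratic in $\boldsymbol{\theta}_t$ whose Hessian ${\bf H}_t^\top {\bf P}_{{\bf n}_t}^{-1}{\bf H}_t+{\bf P}_{t|t-1}^{-1}$ is positive definite (because ${\bf P}_{t|t-1}^{-1}\succ 0$ by Assumption \ref{As:GaussianPosterior2} and ${\bf H}_t^\top {\bf P}_{{\bf n}_t}^{-1}{\bf H}_t\succeq 0$). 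Setting $\nabla_{\boldsymbol{\theta}_t}L^{\text{EKF}}_t=0$ gives $({\bf H}_t^\top {\bf P}_{{\bf n}_t}^{-1}{\bf H}_t+{\bf P}_{t|t-1}^{-1})(\boldsymbol{\theta}_t-\boldsymbol{\hat{\theta}}_{t|t-1})={\bf H}_t^\top {\bf P}_{{\bf n}_t}^{-1}\delta_t$, so the unique minimizer is $\boldsymbol{\theta}_t^\star=\boldsymbol{\hat{\theta}}_{t|t-1}+({\bf H}_t^\top {\bf P}_{{\bf n}_t}^{-1}{\bf H}_t+{\bf P}_{t|t-1}^{-1})^{-1}{\bf H}_t^\top {\bf P}_{{\bf n}_t}^{-1}\delta_t$.

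Finally I would show $\boldsymbol{\theta}_t^\star=\boldsymbol{\hat{\theta}}_{t|t}^{\text{EKF}}$. Under the same linearization, $\tilde{\bf y}_{t|t-1}\approx{\bf H}_t\boldsymbol{\tilde{\theta}}_{t|t-1}$, so \eqref{eq:The covariance of the innovation}–\eqref{eq:Kal_gain} reduce to ${\bf P}_{{\bf \tilde y}_t}={\bf H}_t{\bf P}_{t|t-1}{\bf H}_t^\top+{\bf P}_{{\bf n}_t}$ and ${\bf K}_t={\bf P}_{t|t-1}{\bf H}_t^\top {\bf P}_{{\bf \tilde y}_t}^{-1}$, and \eqref{eq:EKF} becomes $\boldsymbol{\hat{\theta}}_{t|t}^{\text{EKF}}=\boldsymbol{\hat{\theta}}_{t|t-1}+{\bf P}_{t|t-1}{\bf H}_t^\top({\bf H}_t{\bf P}_{t|t-1}{\bf H}_t^\top+{\bf P}_{{\bf n}_t})^{-1}\delta_t$. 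It then remains to verify the gain identity $({\bf H}_t^\top {\bf P}_{{\bf n}_t}^{-1}{\bf H}_t+{\bf P}_{t|t-1}^{-1})^{-1}{\bf H}_t^\top {\bf P}_{{\bf n}_t}^{-1}={\bf P}_{t|t-1}{\bf H}_t^\top({\bf H}_t{\bf P}_{t|t-1}{\bf H}_t^\top+{\bf P}_{{\bf n}_t})^{-1}$, i.e.\ the equivalence of the information-form and covariance-form Kalman gains, which follows from the Woodbury matrix identity (the ``push-through'' identity) and uses exactly the invertibility of ${\bf P}_{t|t-1}$ and ${\bf P}_{{\bf n}_t}$. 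This algebraic step is the one nontrivial point of the argument; everything else is substitution and completing the square. Chaining the three steps gives $\boldsymbol{\hat{\theta}}_{t|t}^{\text{EKF}}=\arg\min_{\boldsymbol{\theta}_t}L^{\text{EKF}}_t(\boldsymbol{\theta}_t)$, as claimed.
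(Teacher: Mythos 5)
Your proposal is correct and follows essentially the same route as the paper's proof: substitute the Gaussian assumptions into the MAP objective to identify $L_t^{\text{EKF}}$, linearize $h$ about $\boldsymbol{\hat{\theta}}_{t|t-1}$, solve the resulting normal equations for the information-form minimizer, and then convert the information-form gain $({\bf H}_t^\top {\bf P}_{{\bf n}_t}^{-1}{\bf H}_t+{\bf P}_{t|t-1}^{-1})^{-1}{\bf H}_t^\top {\bf P}_{{\bf n}_t}^{-1}$ into the covariance-form Kalman gain ${\bf K}_t$ (the paper does this by a direct multiply-by-identity manipulation rather than citing the push-through identity, but it is the same algebraic fact). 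Your explicit remark that the linearized objective is a strictly convex quadratic, so the stationary point is the unique minimizer, is a small tightening over the paper's presentation.
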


The proof for Theorem \ref{theorem1} appears in the supplementary material. It is based on the following key steps: (i) Solving the maximization problem in (\ref{eq:MAP_EKF}) using the EKF model (\ref{eq:Extended-Kalman}) combined with the Gaussian Assumptions \ref{As:ConditionalIndependance2} and \ref{As:GaussianPosterior2}; (ii) Using first order Taylor series linearization for the VBF $h({\bf u}_t; \boldsymbol{\theta}_t)$. 

Next, we explicitly write the expressions for the statistics of interest in Equations (\ref{eq:The observation innovation})-(\ref{eq:Kal_gain}). {\it The covariance of the innovation} and the {\em Kalman gain} become:
\begin{align}
\label{eq:statistics} {\bf P}_{{\bf \tilde{y}}_t}  & = \nabla_{\boldsymbol{\theta}_t} h({\bf u}_t; \boldsymbol{\hat{\theta}})^\top {\bf P}_{t|t-1} \nabla_{\boldsymbol{\theta}_t} h({\bf u}_t; \boldsymbol{\hat{\theta}})\!+\!{\bf P}_{{\bf n}_t}.\\
\label{eq:Kalman_gain}
{\bf K}_t & = {\bf P}_{t|t-1} \nabla_{\boldsymbol{\theta}_t} h({\bf u}_t,  \boldsymbol{\hat{\theta}})  {\bf P}_{{\bf \tilde{y}}_t}^{-1}.
\end{align}
This Kalman gain is used in the parameters update and the error covariance update in Equation (\ref{eq:EKF}). We can see that the Kalman gain propagates new information from the noisy target labels, back down into the parameters uncertainty set $\Theta$, before combining it with the estimated parameter value. The gain ${\bf K}_t$ can be interpreted as an adaptive learning rate for each individual parameter that implicitly incorporates the uncertainty of each parameter. This approach resembles familiar stochastic gradient optimizers such as Adagrad \cite{duchi2011adaptive}, AdaDelta \cite{zeiler2012adadelta}, RMSprop \cite{tieleman2012lecture}
and Adam \cite{kingma2014adam}, for different choices of ${\bf P}_{t|t-1}$ and ${\bf P}_{{\bf n}_t}$. \citet{ruder2016overview} discussed and compared these optimizers. 

The regularization term in $L_t^{\text{EKF}}$ can be seen as imposing a {\em trust-region} on the VBF parameters, similarly to trust-region over policy parameters in policy optimization methods \cite{schulman2015trust}.
We prove this trust-region property in the supplementary material. 

\subsection{Comparing $L_t^{\text{EKF}}$ and $L_t^{\text{MLE}}$ for Optimizing VBFs}
\label{Section:comparison}
We argue in favor of using $L^{\text{EKF}}_t(\boldsymbol{\theta}_t)$ (\ref{eq:EKF-loss}) for optimizing VBFs instead of the commonly used $L^{\text{MLE}}_t(\boldsymbol{\theta}_t)$ (\ref{eq:vf objective}).
The connection between these two objective functions can be summarized by the following Corollary:
\begin{corollary}
	\label{corollary1}
	Under Assumptions \ref{As:ConditionalIndependance2} and \ref{As:GaussianPosterior2}, consider a diagonal covariance ${\bf P}_{{\bf n}_t}$ with diagonal elements $\sigma_i = N$  and assume ${\bf P}_{0|0} = {\bf P}_{{\bf v}_t} = {\bf 0}$, then:
	$L^{\text{EKF}}_t(\boldsymbol{\theta}_t) = L^{\text{MLE}}_t(\boldsymbol{\theta}_t)$.
\end{corollary}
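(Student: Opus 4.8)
The plan is to start from the regularized objective $L_t^{\text{EKF}}(\boldsymbol{\theta}_t)$ in (\ref{eq:EKF-loss}) and substitute in the specific choices of the covariance matrices prescribed by the corollary. First I would handle the second (regularization) term: since ${\bf P}_{0|0} = {\bf P}_{{\bf v}_t} = {\bf 0}$, I would argue by induction on $t$ that ${\bf P}_{t|t-1} = {\bf 0}$ for all $t$. The base case follows because the propagation step gives ${\bf P}_{t|t-1} = {\bf P}_{t-1|t-1} + {\bf P}_{{\bf v}_t}$, and ${\bf P}_{1|0} = {\bf P}_{0|0} + {\bf P}_{{\bf v}_1} = {\bf 0}$; the inductive step uses the covariance update in (\ref{eq:EKF}), ${\bf P}_{t|t} = {\bf P}_{t|t-1} - {\bf K}_t {\bf P}_{{\bf \tilde y}_t} {\bf K}_t^\top$, which vanishes once ${\bf P}_{t|t-1} = {\bf 0}$ (the Kalman gain (\ref{eq:Kalman_gain}) also vanishes). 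Hence the quadratic regularization term $\tfrac12(\boldsymbol{\theta}_t - \boldsymbol{\hat\theta}_{t|t-1})^\top {\bf P}_{t|t-1}^{-1}(\boldsymbol{\theta}_t - \boldsymbol{\hat\theta}_{t|t-1})$ drops out (interpreting the degenerate prior as an uninformative one, i.e. the MAP estimator reduces to the MLE, equivalently ${\bf P}_{t|t-1}^{-1} \to {\bf 0}$).

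Next I would treat the first term. With ${\bf P}_{{\bf n}_t}$ diagonal and each diagonal entry equal to $N$, we have ${\bf P}_{{\bf n}_t}^{-1} = \tfrac1N {\bf I}_N$, so
\begin{align}
\tfrac12 \delta({\bf u}_t; \boldsymbol{\theta}_t)^\top {\bf P}_{{\bf n}_t}^{-1} \delta({\bf u}_t; \boldsymbol{\theta}_t)
= \tfrac{1}{2N} \sum_{i=1}^N \delta^2(u_t^i; \boldsymbol{\theta}_t),
\end{align}
which is exactly $L_t^{\text{MLE}}(\boldsymbol{\theta}_t)$ as defined in (\ref{eq:vf objective}). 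Combining the two observations yields $L_t^{\text{EKF}}(\boldsymbol{\theta}_t) = L_t^{\text{MLE}}(\boldsymbol{\theta}_t)$.

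The one delicate point — and the place I would be most careful about in the write-up — is the degeneracy ${\bf P}_{t|t-1} = {\bf 0}$, which makes ${\bf P}_{t|t-1}^{-1}$ formally undefined, so the equality $L_t^{\text{EKF}} = L_t^{\text{MLE}}$ must be read as a limiting statement: as the prior covariance shrinks to zero its inverse (the precision) shrinks to zero, the regularizer contributes nothing, and the EKF objective collapses onto the MLE objective. An equivalent and perhaps cleaner framing is to note that in this regime the MAP estimator (\ref{eq:MAP_EKF}) loses its $\log p(\boldsymbol{\theta}_t | y_{1:t-1})$ term (an uninformative/flat effective prior on the relevant directions) and reduces to the MLE estimator, so by Theorem \ref{theorem1} the objective minimized by $\boldsymbol{\hat\theta}_{t|t}^{\text{EKF}}$ is precisely $L_t^{\text{MLE}}$. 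I would state this interpretation explicitly rather than manipulate ${\bf P}_{t|t-1}^{-1}$ symbolically. Everything else is a one-line substitution, so no real obstacle remains beyond getting this limiting-prior convention stated cleanly.
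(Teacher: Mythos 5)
Your treatment of the first term is exactly the paper's: with ${\bf P}_{{\bf n}_t} = N{\bf I}$ the quadratic form collapses to $\frac{1}{2N}\sum_{i=1}^{N}\delta^2(u_t^i;\boldsymbol{\theta}_t) = L_t^{\text{MLE}}(\boldsymbol{\theta}_t)$, and the overall structure of your argument (kill the regularizer, identify the data term with the MLE loss) is the same route the paper takes, so on that level there is nothing to add.

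The problem is in how you dispose of the regularizer. You correctly flag that ${\bf P}_{t|t-1}={\bf 0}$ makes ${\bf P}_{t|t-1}^{-1}$ undefined, but your resolution inverts the limit: if the prior covariance shrinks to zero, its inverse (the precision) diverges to infinity; it does not shrink to zero. A zero-covariance Gaussian is a Dirac mass at $\boldsymbol{\hat{\theta}}_{t|t-1}$ --- the most informative prior possible, not an uninformative one --- and in that limit the penalty $\frac{1}{2}(\boldsymbol{\theta}_t-\boldsymbol{\hat{\theta}}_{t|t-1})^\top{\bf P}_{t|t-1}^{-1}(\boldsymbol{\theta}_t-\boldsymbol{\hat{\theta}}_{t|t-1})$ becomes an infinitely strong constraint pinning $\boldsymbol{\theta}_t$ to $\boldsymbol{\hat{\theta}}_{t|t-1}$, which is the opposite of ``the regularizer contributes nothing.'' The regime in which the MAP objective degenerates to the MLE objective is ${\bf P}_{t|t-1}^{-1}\to{\bf 0}$, i.e.\ ${\bf P}_{t|t-1}\to\infty$ (a flat prior). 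To be fair, the paper's own proof is no more careful: it simply asserts that with ${\bf P}_{0|0}={\bf P}_{{\bf v}_t}={\bf 0}$ the error covariance ``does not change'' and concludes the equality, the intended reading (from the surrounding text) being that deterministic parameters carry no regularization term at all. But the specific limiting argument you wrote down, taken literally, proves the wrong thing. If you want a defensible version, either take ${\bf P}_{t|t-1}^{-1}={\bf 0}$ (zero precision, improper flat prior) as the hypothesis, or state as a convention that a degenerate prior contributes no $\log p(\boldsymbol{\theta}_t|y_{1:t-1})$ term --- do not derive it from ``covariance $\to 0$ implies precision $\to 0$.''
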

The proof is given in the supplementary material. We can see that the two objectives are the same if we assume that the parameters are deterministic and that the noisy target labels have a fixed variance. We conclude that $L^{\text{MLE}}_t$ does not consider two types of uncertainties: (i) parameters uncertainty since it lacks regularization with ${\bf P}_{t|t-1}$. Note that when adding a standard $L_2$ regularization to $L^{\text{MLE}}_t$, often common in DNNs, it favors staying close to a prior assumption of $\boldsymbol{\theta}\!\sim ({\bf 0}, {\bf I})$ but without updating the prior. (ii) observations uncertainty since it lacks regularization with ${\bf P}_{{\bf n}_t}$ which reflect the amount of confidence we have in the observations. We further discuss these uncertainties in Section \ref{Sec:uncertainties}. 

\begin{algorithm}[t] 
	{\small 
		\caption{KOVA Optimizer}
		\label{alg:KOVA}
		\begin{algorithmic}[1]
			\REQUIRE ${\bf P}_{0|0}$, ${\bf P}_{{\bf v}_t}$,  ${\bf P}_{{\bf n}_t}$, $\alpha$,  $\mathcal{R}$. \ \ {\bf Initialize: } $\boldsymbol{\hat{\theta}}_{0|0}$.
			\FOR {$t=1, \ldots, T$}
			\STATE Set predictions: $\begin{cases}
			\boldsymbol{\hat{\theta}} = \boldsymbol{\hat{\theta}}_{t|t-1} = \boldsymbol{\hat{\theta}}_{t-1|t-1}\\
			{\bf P}_{t|t-1} = {\bf P}_{t-1|t-1} + {\bf P}_{{\bf v}_t}
			\end{cases}$
			\STATE Sample N tuples $\{y(u^i), h(u^i; \boldsymbol{\hat{\theta}})\}_{i=1}^N$ from $\mathcal{R}$.
			\STATE Construct $N$-dim vectors $y({\bf u}_t)$ and $ h({\bf u}_t,\boldsymbol{\hat{\theta}})$. 
			\STATE Compute $(d \times N)$-dim matrix $\nabla_{\boldsymbol{\theta}} h({\bf u}_t; \boldsymbol{\hat{\theta}})$.  
			\STATE Compute ${\bf P}_{{\bf \tilde{y}_t}}$ (\ref{eq:statistics}) and ${\bf K}_t$ (\ref{eq:Kalman_gain}).
			\STATE Set updates: \\
			$\begin{cases}
			\boldsymbol{\hat{\theta}}_{t|t} = \boldsymbol{\hat{\theta}}_{t|t-1} + \alpha {\bf K}_t \big( y({\bf u}_t) - h({\bf u}_t; \boldsymbol{\hat{\theta}}_{t|t-1}) \big)\\
			{\bf P}_{t|t} = {\bf P}_{t|t-1} - \alpha {\bf K}_t {\bf P}_{{\bf \tilde{y}}_t} {\bf K}_t^\top
			\end{cases}$
			\ENDFOR
			\ENSURE $\boldsymbol{\hat{\theta}}_{t|t}$ and ${\bf P}_{t|t}$
		\end{algorithmic}
	}
\end{algorithm}

\subsection{Practical algorithm: KOVA optimizer} 
\label{section:KOVA}
We now present the KOVA optimizer in Algorithm \ref{alg:KOVA} for approximating VBFs. KOVA calculates at each optimization step the parameter estimate $\boldsymbol{\hat{\theta}}_{t|t}$ and the conditional error covariance ${\bf P}_{t|t}$. Notice that $\mathcal{R}$ is a sample generator whose structure depends on the policy algorithm for which KOVA is used as a policy evaluation component. $\mathcal{R}$ can contain trajectories from a fixed policy or it can be an experience buffer which contains transitions from several different policies.

The estimated parameter $\boldsymbol{\hat{\theta}}_{t|t}$ is a minimizer of $L^{\text{EKF}}_t$ (\ref{eq:EKF-loss}). Recall that $d$ is the dimension of the parameters and $N$ is the batch size.  Directly optimizing $L^{\text{EKF}}_t$ is hard since it requires inversing the $(d\!\times\!d)$-dimensional matrix ${\bf P}_{t|t-1}$. Alternatively, we use in practice the update Equations (\ref{eq:EKF}) and the Kalman gain equations in (\ref{eq:statistics})-(\ref{eq:Kalman_gain}) in order to avoid this inversion. In addition, we add a fixed learning rate $\alpha$ to smooth the update.

{\bf Algorithm complexity:} For a $d$-dimensional parameter vector $\boldsymbol{\theta}$, our algorithm requires $\mathcal{O}(d^2)$  extra space to store the covariance matrix and $\mathcal{O}(d^2)$ computations for matrix multiplications. This complexity is typical to second-order optimization methods. However, our update method does not require inverting the $\small (d\!\times\!d)$ matrix ${\bf P}_{t|t-1}$ in the update process, but only requires inverting the  $\small (N\!\times\!N)$ matrix $\small \big(\nabla h(\boldsymbol{\hat{\theta}})^\top {\bf P}_{t|t-1} \nabla h( \boldsymbol{\hat{\theta}})\!+\!{\bf P}_{{\bf n}_t} \big)^{-1}$. Usually, $ {\small N\!\ll\!d}$. The extra time and memory requirements can be tolerated for small networks of size $d$. However, they can be considered as drawbacks for large network sizes. Nevertheless, there are several options for relaxing these drawbacks: {\bf (a)} GPUs for matrix multiplications can accelerate the computation time. {\bf (b)} We can assume correlations only between parameters in the same DNN layer and apply layer factorization. This can significantly reduce the computation cost and memory requirements \cite{puskorius1991decoupled,zhang2017learning,wu2017scalable}. {\bf (c)} We can apply KOVA only on the last layer in large DNNs, similarly to \citet{levine2017shallow} who optimized the last layer using linear least squares optimization methods. Yet, our approach scales to continuous state and action spaces, e.g. in control problems. 

\subsection{Comparing KOVA to KTD and GPTD}

In this section we highlight the adaptations we made to the models proposed in KTD and GPTD in order to meet the challenges of policy evaluation in deep RL tasks. The integration of all these important adaptations ensures that KOVA can be incorporated as the policy evaluation component in other common policy optimization algorithms. \\ 
{\bf 1. Decomposing the Bellman TD error:} The main idea is to decompose the Bellman TD error vector $\delta({\bf u}_t; \boldsymbol{\theta}_t)$ into two parts: 
$\delta({\bf u}_t; \boldsymbol{\theta}_t)\!=\!y({\bf u}_t)\!-\!h({\bf u}_t; \boldsymbol{\theta}_t)$. The first part is the {\it target-label vector} at time $t$, $y({\bf u}_t)$, which contains $N$ target labels as described in Section \ref{Section:EKF}. The second part is the {\it observation function vector} $h({\bf u}_t; \boldsymbol{\theta}_t)$ which contains $N$ VBFs $h(u; \boldsymbol{\theta}_t)$ for $N$ different inputs. 
In Table \ref{vf_table} we provide several examples for the Bellman TD error decomposition according to the chosen policy optimization algorithm.\\
\underline{{\em Comparison to KTD /\ GPTD}}: In KTD and GPTD formulations, the observation function is the {\em differnece} between VBFs of two succesive states. In KOVA, by decomposing $\delta({\bf u}_t; \boldsymbol{\theta}_t)$, the observation function if only a VBF of a single state, while the VBF of the successive state is considered in the target label. Therefore, the observation function is {\em differential}, allowing us to use first order Taylor expansion linearization and observation noise can now be assumed white and not colored. This enables KOVA to learn in both off-policy and on-policy manners, and solves the ``off policy issue'' we discussed above. In addition it enables KOVA to use target parameters $\boldsymbol{\theta}'$ for $h(s'; \boldsymbol{\theta}')$ instead of trainable parameters for stabilizing the training process.\\ 
{\bf 2. Vectorization of observations:} Our formulation enables handling mini-batches through vectorization of $y({\bf u}_t)$ and $ h({\bf u}_t; \boldsymbol{\theta}_t)$. This decouples the connection between the update index $t$ and the transition index in the trajectory: observations of different inputs from different trajectories can be considered in each update time. This vectorization is {\em nontrivial}, since observation vectors in successive time steps are {\em input-independent}, unlike classic formulations of EKF. \\ 
\underline{{\em Comparison to KTD /\ GPTD}}: The vectorization of observations solves the ``coupling issue'' and allows to use $N$ transition samples from trajectories as opposed to a single transition in KTD in every optimization step $t$. It also solves the ``growing vector size  issue'': $y({\bf u})$ has a fixed size $N$ as opposed to the growing size vectors in GPTD. This facilitates the training process with DNNs as required by deep RL algorithms. In addition, this vectorization enables integrating both correlated and i.i.d transitions through ${\bf P}_{{\bf n}_t}$.\\  
{\bf 3. EKF vs UKF:} The use of EKF in KOVA dramatically reduces the computations compared to UKF in KTD. Based on first order Taylor linearization of the VBF $h(\cdot; \boldsymbol{\theta})$, KOVA only requires computation of the VBF gradient, solving the problems arised in ``excessive computations issue''.   

\subsection{Uncertainties in KOVA}
\label{Sec:uncertainties}

In KOVA, the hyper parameters are incorporated in the covariances ${\bf P}_{{\bf v}_t}$ and ${\bf P}_{{\bf n}_t}$, which means the hyper-parameters have a meaning. We now explain some guidelines for choosing the values and structures of these matrices.  

{\bf ${\bf P}_{{\bf v}_t}$:} This is the evolution noise covariance matrix which models a possibly non stationary parameters of the VBF and adds a {\em tracking} property to KOVA. When the policy that generates the samples is changing, the associated VBF changes too \cite{geist2010kalman}. Adding evolution noise proportional to conditional error covariance at each step (for example ${\bf P}_{{\bf v}_t} = \frac{\eta}{1-\eta} {\bf P}_{t-1|t-1}$ for some small values of $\eta$) corresponds to a decay factor for the weight of previous observations \cite{ollivier2018online}, which stabilize the estimating process. 

{\bf ${\bf P}_{{\bf n}_t}$:} This is the covariance matrix of vectorized target labels. If the mini-batch of transitions are sampled independently, then this matrix should be diagonal. KOVA general framework offers two possible extensions: (i) keeping the independence assumption of the observations, however setting different variances to each noisy observation. The estimated TD variances as suggested by \citet{sherstan2018directly} may be considered here. (ii) if the transition samples are taken as successive samples of a trajectory, than information about correlations between these samples can be expressed in ${\bf P}_{{\bf n}_t}$. For example, the states (action-states) dependent kernel suggested by \citet{feng2019kernel} can be considered as ${\bf P}_{{\bf n}_t}$ in our framework. Furthermore,  ${\bf P}_{{\bf n}_t}$ can incorporate importance sampling ratio in off-policy learning \cite{mahmood2014weighted}. We leave the investigation of these interesting avenues to future work. 

\section{Experiments}
\label{Section:experiments}

We now present experiments that illustrate the performance attained by KOVA \footnote{Code is available at https://github.com/sdicastro/KOVA.}. Technical details on policy and VBF networks, on hyper-parameters grid search, on the hyper-parameters we used and on the running time of the algorithms are described in the supplementary material.

{\bf Maze environment}: We demonstrate KOVA performance in a policy evaluation task. An episode begins when an agent is positioned at the top-left corner (start point) of a $(10\!\times\!10)$-maze. The agent receives $-0.04$ reward for arriving to a new cell in the maze, while it receives $-0.25$ reward for returning to pre-visited cells. A success is defined if the agent arrives to the exit at the bottom-right corner. A loss is defined when the total reward is below $-50$ and the agent haven't arrived yet to the exit. In both cases the episode ends, the agent is positioned at the start point and begins a new episode. States are represented as images of the maze and the VBF (Q-function) is approximated with a DNN. The agent uses an $\epsilon$-greedy policy for selecting actions: 'top', 'down', 'right' or 'left. We tested double Q-learning \cite{van2016deep}, an off-policy algorithm for learning the Q-function parameters. We compared KOVA vs. Adam \cite{kingma2014adam} for policy evaluation. KOVA minimizes $L_t^{\text{EKF}}$ while Adam minimizes $L_t^{\text{MLE}}$ in each optimization step. For KOVA, we set ${\bf P}_{{\bf n}_t} = N {\bf I}$ and tried different evolution noises, calculated as ${\bf P}_{{\bf v}_t}\!=\!\frac{\eta}{1\!-\!\eta} {\bf P}_{t-1|t-1}$ with different values of $\eta$. As discussed in Section \ref{Sec:uncertainties}, it corresponds to a decay factor for the weight of previous observations. This is similar to the decaying learning rate in Adam optimizer, therefore we tested different constant and decaying learning rates for Adam. In figure \ref{fig:maze_exp} we present the agent success rate during training calculated over last 50 episodes. We can see that KOVA outperforms Adam in this task. The different values of $\eta$ affected the time it took the agent to complete the task. 

\begin{figure}[t]
	\centering{
		\includegraphics[width=.9\linewidth,keepaspectratio]{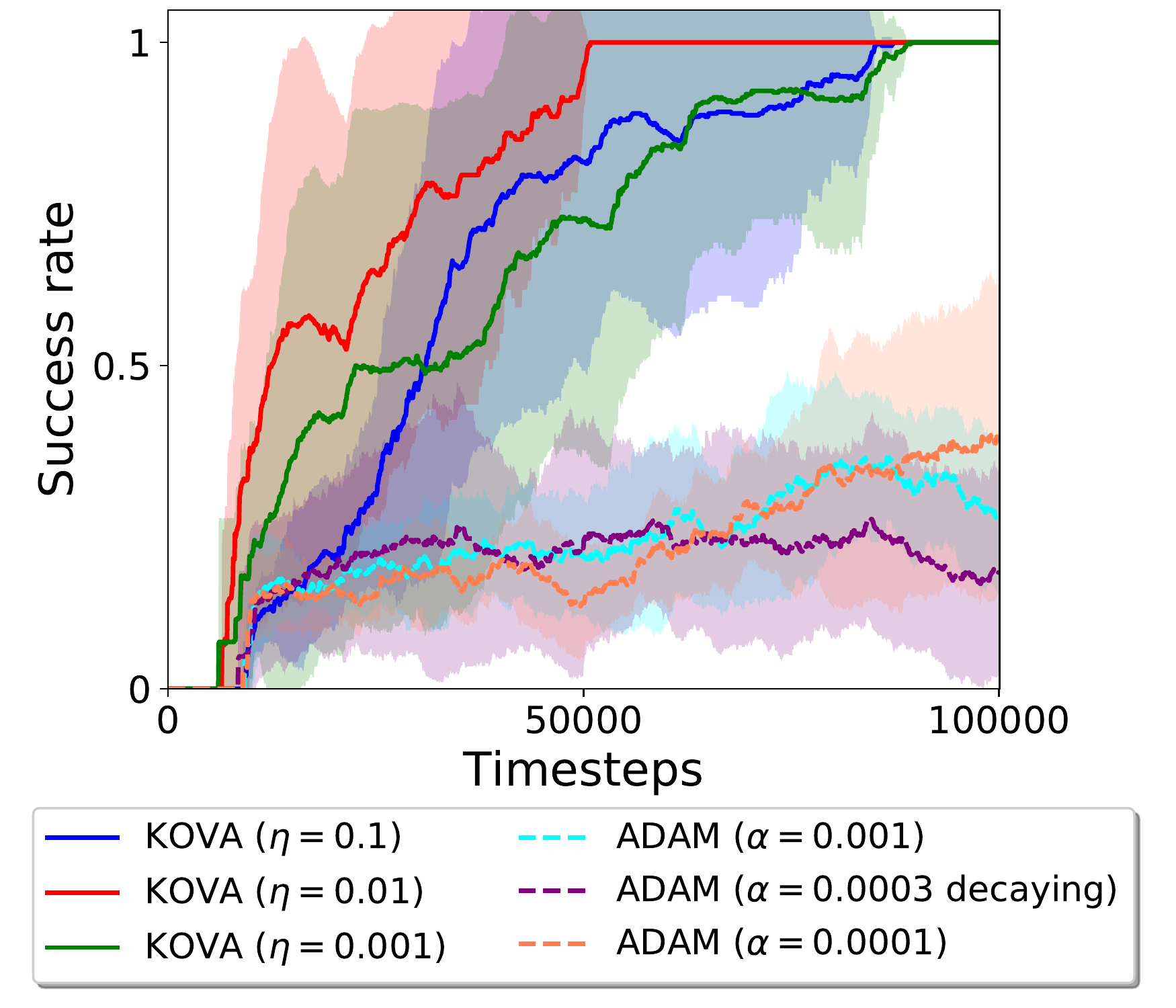}} 
	\caption{Success rate during training over $10\!\times\!10$ maze. We present the average (lines) and standard deviation (shaded area) of the success rate over 8 runnings, generated from random seeds.} 
	\label{fig:maze_exp}
\end{figure}

\begin{figure*}[t]
	\centering{
		\includegraphics[width=1.0\linewidth,height=0.6\textheight,keepaspectratio]{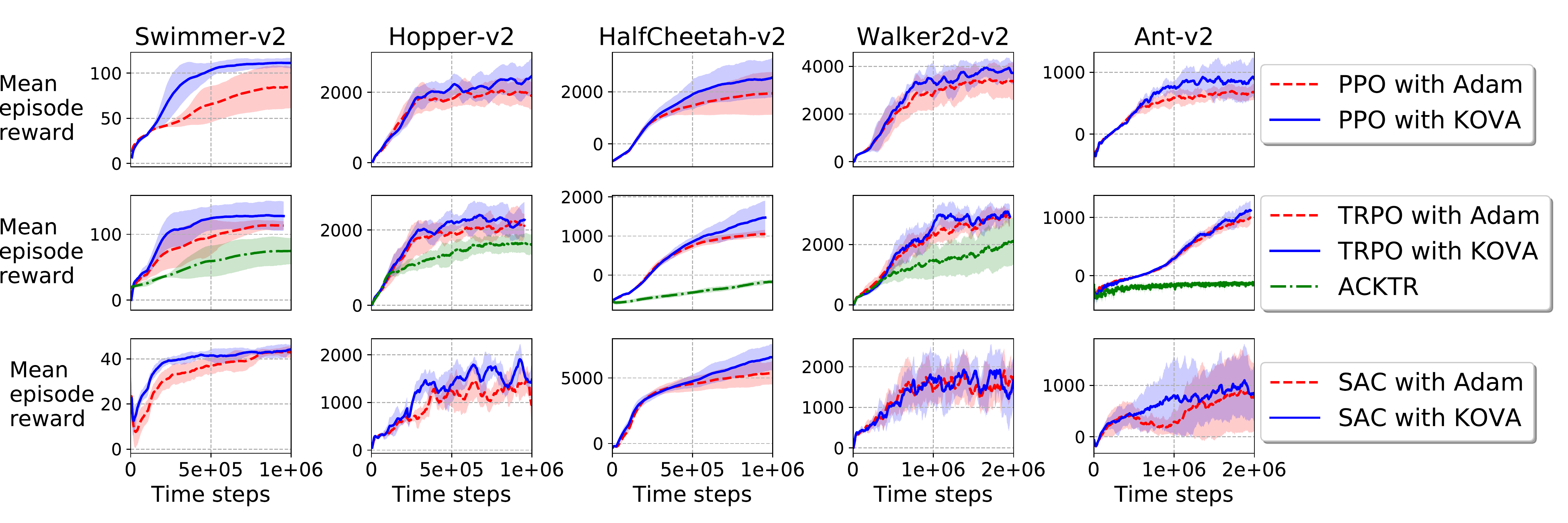}} 
	\caption{Mean episode reward during training for Mujoco environments. We compare KOVA vs. Adam for policy evaluation in different policy optimization algorithms: PPO ({\bf top row}), TRPO ({\bf middle row}), and SAC ({\bf bottom row}). ACKTR in presented as well in the middle row. We present the average (solid lines) and standard deviation (shaded area) of the mean episode reward over 8 runnings, generated from random seeds.} 
	\label{fig:mujoco_reward}
\end{figure*} 

{\bf Mujoco environment:} We test the performance of KOVA in domains with continuous state and action spaces: the robotic tasks benchmarks implemented in OpenAI Gym \cite{brockman2016openai}, which use the MuJoCo physics engine \cite{todorov2012mujoco}. In this experiment we test KOVA which is incorporated as the policy evaluation component in both on-policy and off-policy algorithms for policy training: PPO \cite{schulman2017proximal} and TRPO \cite{schulman2015trust} with their baselines implementations \cite{baselines}; SAC \cite{haarnoja2018soft} with stable baselines implementation \cite{stable-baselines}. For VBF training we replace the originally used Adam optimizer with our KOVA optimizer (Algoritm \ref{alg:KOVA}) and compare their effect on the mean episode reward in each environment. In addition, we test the performance of ACKTR \cite{wu2017scalable} which uses second order optimization for both policy and value functions. Unfortunately, comparison to KTD and GPTD is not possible since they are not applicable in these domains, as we explained in Section \ref{Section:KTD}. For KOVA with PPO and TRPO in the on-policy setting, we set a diagonal ${\bf P}_{{\bf n}_t}$ with $\sigma_i=N \max (1, \frac{1}{\frac{\pi_{\text{old}} (a_i|s_i)}{\pi_{\text{new}}(a_i|s_i)} + \epsilon})$. For KOVA with SAC in the off-policy setting we set ${\bf P}_{{\bf n}_t} = N {\bf I}$. We used the same update rule for ${\bf P}_{{\bf v}_t}$ as in the maze environment. Additional details can be found in the supplementary material. 

During training, each episode reward was recorded and in every optimization update we calculated the mean over the last 100 episodes. An optimization update is executed after several timesteps, defined by the horizon hyper-parameter of the algorithm. The results are presented in Figure \ref{fig:mujoco_reward}, where we chose to use timesteps in order to compare between different algorithms on the same graph. We can see that KOVA improves the agent's performance on most of the environments, while on others, it keeps approximately the same performance. On all environments, KOVA outperforms ACKTR. 

\section{Related Work}
\label{Section:related work}

{\bf Kalman filters:} The use of Kalman filters as optimizers is discussed by \citet{haykin2001kalman,vuckovic2018kalman,gomez2018decoupled}. In the RL context \citet{wilson2009neural} solved the dynamics of each parameter with Kalman filtering. \citet{wang2018batch} used Kalman filter for normalizing batches. \citet{choi2006generalized} used a kalman filter for fixed point approximation of the value function, however only linear parametrization was considered. Our work extends GPTD \cite{engel2003bayes,engel2005reinforcement,ghavamzadeh2016bayesian} and KTD \cite{geist2010kalman} for deep RL tasks. We discussed and compared KOVA to these methods through the paper. We have found that the improvement we made were crucial in order to enjoy the Kalman perspective in modern RL tasks. 

{\bf Bayesian Neural Networks (BNNs):} Bayesian methods place uncertainty on the approximator parameters \cite{blundell2015weight,gal2016dropout,khan2018fast}. \citet{depeweg2016learning,depeweg2017decomposition} have used BNNs for learning MDP dynamics in RL tasks. In these works a fully factorized Gaussian distribution on parameters is assumed  while we consider possible correlations between parameters. In addition, BNNs require sampling the parameters, and running several feed-forward runs for each sample. Our method avoids multiple samples of the parameters, since the uncertainty is propagated with every optimization update. 

{\bf Trust region and second order optimization methods:} EKF is connected with the incremental Gauss-Newton method \cite{bertsekas1996incremental} and with the on-line natural gradient \cite{amari1998natural,ollivier2018online}. Natural gradient in RL is mostly used in policy gradient algorithms to estimate policy parameters \cite{kakade2002natural,peters2008natural,schulman2015trust}, forming a trust region over these parameters \cite{schulman2015trust,schulman2017proximal}. For policy evaluation,  \citet{givchi2015quasi} and \citet{pan2017accelerated} proposed a quasi-newton method to approximate value functions with linear parametrization. Unfortunately, second order methods for non-linear parametrized value-based functions are rarely presented in the RL literature. \citet{wu2017scalable} suggested to apply the natural gradient method also on the critic in the actor-critic framework, using Kronecker-factored approximations (ACKTR). \citet{schulman2015high} suggested to apply Gauss-Newton method to estimate the VBF.  

Comparing our work with \citet{wu2017scalable} and \citet{schulman2015high}, our work is novel for two reasons: first, their works lack the analysis and formulation of the underlying model and assumptions that lead to the regularization in the objective function, while this is the focus of our work; second, KOVA avoids inverting ${\bf P}_{t|t-1}$ and hence is more computationally efficient in its update procedure, as discussed in Section \ref{section:KOVA}.

{\bf Distributional perspective on values and observations:} Distributional RL \cite{bellemare2017distributional} treats general distributions of total return, and considers VBF parameters as deterministic. In our work we assume Gaussian distribution over the total return and over the VBF parameters. The line of research on Bellman uncertainties \cite{o2017uncertainty} and on using randomized priors in VBFs estimation process \cite{osband2014generalization,osband2017deep,osband2018randomized,janz2019successor} share some similar concepts as ours, mainly on treating the VBF as an RV and on the difference between evolution noise and observation noise. However, their methods rely on excessive sampling from different distributions while our algorithm does not involve sampling. KOVA may be extended by sampling from the covariances it learns, although this remains for future work.

\section{Discussion \& Conclusion}
In this work we presented a Kalman-based framework for policy evaluation in RL and proposed a regularized objective function for optimizing value based functions. We have succeeded in using a Kalman-based approach for evaluating policies in deep RL tasks, circumventing the limitations of KTD and GPTD. The incorporation of KOVA as a policy evaluation component in popular policy optimization algorithms is straightforward. Our empirical results illustrate how KOVA improves the performance of various agents in deep RL tasks, both in on-policy and off-policy settings. These improvements demonstrate the importance of incorporating uncertainty estimation in value function approximation and suggest that EKF should not be neglected and should be considered as a better optimizer for VBFs.
We believe that the improvements come from properly designing noise covariances ${\bf P}_{{\bf v}_t}$ and ${\bf P}_{{\bf n}_t}$. Further investigation of their roles and possibly learning these covariances, may further improve the performance of KOVA. For future work, it would be interesting to investigate the connection between KOVA and complex prior methods that use prior knowledge to encourage exploration. The relation between trust-region over value parameters and trust-region over policy parameters may be an additional interesting avenue to improve exploration. 

{\small 
\bibliography{KalmanOptimizationforValueApproximation}
\bibliographystyle{icml2020}}

\onecolumn
\appendix
\numberwithin{equation}{section}
\counterwithin{table}{section}
\section*{\LARGE Supplementary Material}
\section{Theoretical Results}
\subsection{Extended Kalman Filter (EKF)}
\label{suppSection:EKF}
In this section we briefly outline the Extended Kalman filter \cite{anderson1979optimal,gelb1974applied}. The EKF considers the following model:
\begin{equation}
\label{suppeq:Extended-Kalman}
\begin{cases}
\boldsymbol{\theta}_t = \boldsymbol{\theta}_{t-1} + {\bf v}_t\\
y ({\bf u}_t) = h({\bf u}_t; \boldsymbol{\theta}_t) + {\bf n}_t
\end{cases},
\end{equation}
where $\boldsymbol{\theta}_t \in \mathbb{R}^{d \times 1}$ is a parameter evaluated at time $t$, $y ({\bf u}_t) = [ 
y(u_t^1),  y(u_t^2), \ldots, y(u_t^N)  ]^\top \in \mathbb{R}^{N \times 1}$ is the $N$-dimensional observation vector at time $t$ for some {\it input vector} ${\bf u}_t \in \mathbb{R}^{N \times 1}$, and $h({\bf u}_t; \boldsymbol{\theta}_t) = [
h(u_t^1; \boldsymbol{\theta}_t),
h(u_t^2; \boldsymbol{\theta}_t), 
\ldots,
h(u_t^N; \boldsymbol{\theta}_t)  
^\top \in \mathbb{R}^{N \times 1}$
where $h(u; \boldsymbol{\theta})$ is a non-linear observation function with input $u$ and parameters $\boldsymbol{\theta}$. The evolution noise ${\bf v}_t$ is white ($\mathbb{E}[{\bf v}_t] = {\bf 0}$) with covariance ${\bf P}_{{\bf v}_t} \triangleq \mathbb{E}[{\bf v}_t {\bf v}_t^\top]$, $\ \mathbb{E}[{\bf v}_t {\bf v}_{t'}^\top] = {\bf 0}, \quad \forall t \neq t'$. The observation noise ${\bf n}_t$ is white ($\mathbb{E}[{\bf n}_{t}]={\bf 0}$) with covariance ${\bf P}_{{\bf n}_t} \triangleq \mathbb{E}[{\bf n}_t {\bf n}_t^\top]$, $\ \mathbb{E}[{\bf n}_t {\bf n}_{t'}^\top] = {\bf 0}, \quad \forall t \neq t'$.

The EKF sets the estimation of the parameters $\boldsymbol{\theta}$ at time $t$ according to the conditional expectation: 
\begin{align}
\label{suppeq:weight_estimation}
\nonumber \boldsymbol{\hat{\theta}}_{t|t} & \triangleq \mathbb{E} [ \boldsymbol{\theta}_t | y_{1:t}] \\ 
\boldsymbol{\hat{\theta}}_{t|t-1} & \triangleq \mathbb{E} [ \boldsymbol{\theta}_t | y_{1:t-1}]  = \boldsymbol{\hat{\theta}}_{t-1|t-1} 
\end{align}

where with some abuse of notation, $y_{1:t'}$ are the observations gathered up to time $t'$: $y ({\bf u}_1), \ldots, y ({\bf u}_t')$. The {\it parameters errors} are defined by: 
\[\boldsymbol{\tilde{\theta}}_{t|t}  \triangleq \boldsymbol{\theta}_t - \boldsymbol{\hat{\theta}}_{t|t} \]
\begin{equation}
\label{suppeq:weights_error}
\boldsymbol{\tilde{\theta}}_{t|t-1} \triangleq \boldsymbol{\theta}_t - \boldsymbol{\hat{\theta}}_{t|t-1}
\end{equation}

The conditional {\it error covariances} are given by: 
\begin{align*}
{\bf P}_{t|t}  & \triangleq \mathbb{E} \big[  \boldsymbol{\tilde{\theta}}_{t|t}  \boldsymbol{\tilde{\theta}}_{t|t}^\top | y_{1:t}\big], \\
{\bf P}_{t|t-1} & \triangleq \mathbb{E} \big[  \boldsymbol{\tilde{\theta}}_{t|t-1}  \boldsymbol{\tilde{\theta}}_{t|t-1}^\top | y_{1:t-1}\big] \\
& = \mathbb{E} \big[  (\boldsymbol{\theta}_t - \boldsymbol{\hat{\theta}}_{t|t-1}) (\boldsymbol{\theta}_t - \boldsymbol{\hat{\theta}}_{t|t-1})^\top | y_{1:t-1}\big]\\
& = \mathbb{E} \big[  (\boldsymbol{\theta}_{t-1} + {\bf v}_t - \boldsymbol{\hat{\theta}}_{t-1|t-1}) (\boldsymbol{\theta}_{t-1} + {\bf v}_t - \boldsymbol{\hat{\theta}}_{t-1|t-1})^\top | y_{1:t-1}\big] \\
&  = \mathbb{E} \big[  (\boldsymbol{\tilde{\theta}}_{t-1|t-1} + {\bf v}_t) (\boldsymbol{\tilde{\theta}}_{t-1|t-1} + {\bf v}_t)^\top | y_{1:t-1}\big]\\
& = \mathbb{E} \big[  (\boldsymbol{\tilde{\theta}}_{t-1|t-1} \boldsymbol{\tilde{\theta}}_{t-1|t-1}^\top | y_{1:t-1}\big] + 2 \mathbb{E} \big[ \boldsymbol{\tilde{\theta}}_{t-1|t-1} {\bf v}_t^\top| y_{1:t-1} \big]  + \mathbb{E} \big[ {\bf v}_t {\bf v}_t^\top | y_{1:t-1}\big]\\
& = {\bf P}_{t-1|t-1} + {\bf P}_{{\bf v}_t}.
\end{align*}
\begin{equation}
\label{suppeq:error_covariance}
\boxed{{\bf P}_{t|t-1} = {\bf P}_{t-1|t-1} + {\bf P}_{{\bf v}_t}}
\end{equation}
\\
\\

EKF considers several statistics of interest at each time step.

{\it The prediction of the observation function}:
\[{\bf \hat{y}}_{t|t-1} \triangleq  \mathbb{E}[h({\bf u}_t; \boldsymbol{\theta}_t)|y_{1:t-1}].\]
{\it The observation innovation}:
\[{\bf \tilde{y}}_{t|t-1} \triangleq h({\bf u}_t; \boldsymbol{\theta}_t) - {\bf \hat{y}}_{t|t-1}.\]
{\it The covariance between the parameters error and the innovation}:
\[{\bf P}_{\boldsymbol{\tilde{\theta}}_t,{\bf \tilde{y}}_{t}}  \triangleq \mathbb{E}[ \boldsymbol{\tilde{\theta}}_{t|t-1} {\bf \tilde{y}}_{t|t-1}^\top |y_{1:t-1}].\]
{\it The covariance of the innovation}:
\[ {\bf P}_{{\bf \tilde{y}}_t}  \triangleq \mathbb{E}[( {\bf \tilde{y}}_{t|t-1} {\bf \tilde{y}}_{t|t-1}^\top|y_{1:t-1}] + {\bf P}_{{\bf n}_t}. \]
The {\it Kalman gain}:
\[{\bf K}_t \triangleq {\bf P}_{\boldsymbol{\tilde{\theta}}_t,{\bf \tilde{y}}_{t}}  {\bf P}_{{\bf \tilde{y}}_t}^{-1}.\]
The above statistics serve for the update of the parameters and the error covariance:
\begin{equation}
\label{suppeq:kalman update}
\begin{cases}
\boldsymbol{\hat{\theta}}_{t|t} = \boldsymbol{\hat{\theta}}_{t|t-1} +    {\bf K}_t \big( y({\bf u}_t) - h({\bf u}_t; \boldsymbol{\hat{\theta}}_{t|t-1}) \big),\\
{\bf P}_{t|t} = {\bf P}_{t|t-1} - {\bf K}_t  {\bf P}_{{\bf \tilde{y}}_t} {\bf K}_t^\top.
\end{cases}
\end{equation}

\subsection{EKF for Value-Based Function Estimation}
\label{suppSection:EKF_Qlearning}
When applying the EKF formulation to value-based function (VBF) approximation, the observation at time $t$ is  the target label $y({\bf u}_t)$ (see Table 1 in the main article), and the observation function $h$ can be the state value function or the state-action value function.   

The EKF uses a first order Taylor series linearization for the observation function:
\begin{equation}
\label{suppeq:Linearization}
h({\bf u}_t; \boldsymbol{\theta}_t) 
= h({\bf u}_t; \boldsymbol{\hat{\theta}}) +   \nabla_{\boldsymbol{\theta}_t} h({\bf u}_t; \boldsymbol{\hat{\theta}})^\top \big( \boldsymbol{\theta}_{t} - \boldsymbol{\hat{\theta}} \big),
\end{equation}
where $\nabla_{\boldsymbol{\theta}_t} h({\bf u}_t; \boldsymbol{\hat{\theta}})  = \begin{bmatrix} 
\nabla_{\boldsymbol{\theta}_t} h(u_t^1; \boldsymbol{\hat{\theta}}) , \ldots , \nabla_{\boldsymbol{\theta}_t} h(u_t^N; \boldsymbol{\hat{\theta}})  \end{bmatrix} \in \mathbb{R}^{d \times N}$
and $\boldsymbol{\hat{\theta}}$ is typically chosen to be the previous estimation of the parameters at time $t-1$,  $\boldsymbol{\hat{\theta}}_{t|t-1}$. This linearization helps in computing the statistics of interest. Recall that the expectation here is over the random variable $\boldsymbol{\theta}_t$ where $\boldsymbol{\hat{\theta}}_{t|t-1}$ is fixed. For simplicity, we keep to write $\boldsymbol{\hat{\theta}}$. 
{\it The prediction of the observation function} is:
\begin{align*}
{\bf \hat{y}}_{t|t-1} & \triangleq  \mathbb{E}[h({\bf u}_t; \boldsymbol{\theta}_t)|y_{1:t-1}]\\
& \underbrace{=}_{(\ref{suppeq:Linearization})} \mathbb{E}\Big[h({\bf u}_t,\boldsymbol{\hat{\theta}})
+   \nabla_{\boldsymbol{\theta}_t} h({\bf u}_t; \boldsymbol{\hat{\theta}})^\top \big( \boldsymbol{\theta}_{t} - \boldsymbol{\hat{\theta}} \big) |y_{1:t-1}\Big]\\
& = h({\bf u}_t; \boldsymbol{\hat{\theta}})  +   \nabla_{\boldsymbol{\theta}_t} h({\bf u}_t; \boldsymbol{\hat{\theta}})^\top \big( \mathbb{E}[  \boldsymbol{\theta}_{t} |y_{1:t-1}] - \boldsymbol{\hat{\theta}} \big)\\
& \underbrace{=}_{(\ref{suppeq:weight_estimation})} h({\bf u}_t; \boldsymbol{\hat{\theta}})  +    \nabla_{\boldsymbol{\theta}_t} h({\bf u}_t; \boldsymbol{\hat{\theta}})^\top \big( \boldsymbol{\hat{\theta}} - \boldsymbol{\hat{\theta}} \big)\\
& = h({\bf u}_t; \boldsymbol{\hat{\theta}}) = h({\bf u}_t; \boldsymbol{\hat{\theta}}_{t|t-1})
\end{align*}
We conclude that:
\begin{equation}
\label{suppeq:prediction_observation}
\boxed{{\bf \hat{y}}_{t|t-1}   = h({\bf u}_t; \boldsymbol{\hat{\theta}}_{t|t-1}) }
\end{equation}
{\it The observation innovation}:
\begin{equation}
\label{suppeq:observation_innovation}
\boxed{ {\bf \tilde{y}}_{t|t-1} \triangleq h({\bf u}_t; \boldsymbol{\theta}_t) -{\bf \hat{y}}_{t|t-1}  \underbrace{=}_{(\ref{suppeq:prediction_observation})} h({\bf u}_t,  \boldsymbol{\theta}_t) - h({\bf u}_t; \boldsymbol{\hat{\theta}}_{t|t-1}) }
\end{equation} 
Let's simplify the following:
\begin{align}
\label{suppeq:simplify}
\nonumber  h({\bf u}_t,  \boldsymbol{\theta}_t) - h({\bf u}_t; \boldsymbol{\hat{\theta}}) & \underbrace{=}_{(\ref{suppeq:Linearization})} \big( \cancel{h({\bf u}_t; \boldsymbol{\hat{\theta}})}  +     \nabla_{\boldsymbol{\theta}_t} h({\bf u}_t; \boldsymbol{\hat{\theta}})^\top \big( \boldsymbol{\theta}_{t} - \boldsymbol{\hat{\theta}} \big) - \cancel{h({\bf u}_t; \boldsymbol{\hat{\theta}}) \big)}\\
& = \nabla_{\boldsymbol{\theta}_t} h({\bf u}_t; \boldsymbol{\hat{\theta}})^\top \big( \boldsymbol{\theta}_{t} - \boldsymbol{\hat{\theta}} \big)
\end{align}
{\it The covariance between the parameters error and the innovation} (here we also denote $\boldsymbol{\hat{\theta}} = \boldsymbol{\hat{\theta}}_{t|t-1}$):
\begin{align*}
{\bf P}_{\boldsymbol{\tilde{\theta}}_t,{\bf \tilde{y}}_{t}}  & \triangleq \mathbb{E}[ \boldsymbol{\tilde{\theta}}_{t|t-1} {\bf \tilde{y}}_{t|t-1}^\top |y_{1:t-1}]\\
& \underbrace{=}_{(\ref{suppeq:weights_error}) + (\ref{suppeq:observation_innovation})} \mathbb{E}[ \big(\boldsymbol{\theta}_t - \boldsymbol{\hat{\theta}} \big) \big( h({\bf u}_t; \boldsymbol{\theta}_t) - h({\bf u}_t; \boldsymbol{\hat{\theta}}) \big)^\top |y_{1:t-1}]\\
& \underbrace{=}_{(\ref{suppeq:simplify})} \mathbb{E}[ \big(\boldsymbol{\theta}_t - \boldsymbol{\hat{\theta}} \big)  \big( \boldsymbol{\theta}_{t} - \boldsymbol{\hat{\theta}} \big)^\top \nabla_{\boldsymbol{\theta}_t} h({\bf u}_t; \boldsymbol{\hat{\theta}})  |y_{1:t-1}]\\
& \underbrace{=}_{(\ref{suppeq:weights_error})} \mathbb{E}[ \boldsymbol{\tilde{\theta}}_{t|t-1}  \boldsymbol{\tilde{\theta}}_{t|t-1}^\top  |y_{1:t-1}] \nabla_{\boldsymbol{\theta}_t} h({\bf u}_t; \boldsymbol{\hat{\theta}})\\
& \underbrace{=}_{(\ref{suppeq:error_covariance})} {\bf P}_{t|t-1} \nabla_{\boldsymbol{\theta}_t} h({\bf u}_t; \boldsymbol{\hat{\theta}}_{t|t-1})
\end{align*}
\begin{equation}
\label{suppeq:covariance_weights_innovation}
\boxed{{\bf P}_{\boldsymbol{\tilde{\theta}}_t,{\bf \tilde{y}}_{t}} = {\bf P}_{t|t-1} \nabla_{\boldsymbol{\theta}_t} h({\bf u}_t; \boldsymbol{\hat{\theta}}_{t|t-1})}
\end{equation}
{\it The covariance of the innovation}:
\begin{align*}
{\bf P}_{{\bf \tilde{y}}_t}  & \triangleq \mathbb{E}[( {\bf \tilde{y}}_{t|t-1} {\bf \tilde{y}}_{t|t-1}^\top|y_{1:t-1}] + {\bf P}_{{\bf n}_t}\\
& \underbrace{=}_{(\ref{suppeq:observation_innovation})} \mathbb{E}[ \big( h({\bf u}_t,  \boldsymbol{\theta}_t) - h({\bf u}_t; \boldsymbol{\hat{\theta}}) \big) \big( h({\bf u}_t,  \boldsymbol{\theta}_t) - h({\bf u}_t; \boldsymbol{\hat{\theta}}) \big)^\top |y_{1:t-1}] + {\bf P}_{{\bf n}_t}\\
& \underbrace{=}_{(\ref{suppeq:simplify})} \mathbb{E} \Big[     \nabla_{\boldsymbol{\theta}_t} h({\bf u}_t; \boldsymbol{\hat{\theta}})^\top \big( \boldsymbol{\theta}_{t} - \boldsymbol{\hat{\theta}} \big)  \big( \boldsymbol{\theta}_{t} - \boldsymbol{\hat{\theta}} \big)^\top   \nabla_{\boldsymbol{\theta}_t} h({\bf u}_t; \boldsymbol{\hat{\theta}})  |y_{1:t-1} \Big] + {\bf P}_{{\bf n}_t}\\
& \underbrace{=}_{(\ref{suppeq:weights_error})} \nabla_{\boldsymbol{\theta}_t} h({\bf u}_t,   \boldsymbol{\hat{\theta}})^\top \mathbb{E}[ \boldsymbol{\tilde{\theta}}_{t|t-1}  \boldsymbol{\tilde{\theta}}_{t|t-1}^\top    |y_{1:t-1}] \nabla_{\boldsymbol{\theta}_t} h({\bf u}_t,  \boldsymbol{\hat{\theta}}) + {\bf P}_{{\bf n}_t}\\
& =\nabla_{\boldsymbol{\theta}_t} h({\bf u}_t,   \boldsymbol{\hat{\theta}})^\top {\bf P}_{t|t-1} \nabla_{\boldsymbol{\theta}_t} h({\bf u}_t,   \boldsymbol{\hat{\theta}})^\top + {\bf P}_{{\bf n}_t}
\end{align*}
\begin{equation}
\label{suppeq:covariance_innovation}
\boxed{ {\bf P}_{{\bf \tilde{y}}_t}   = \nabla_{\boldsymbol{\theta}_t} h({\bf u}_t; \boldsymbol{\hat{\theta}}_{t|t-1})^\top {\bf P}_{t|t-1} \nabla_{\boldsymbol{\theta}_t} h({\bf u}_t; \boldsymbol{\hat{\theta}}_{t|t-1}) + {\bf P}_{{\bf n}_t}}
\end{equation}
The {\it Kalman gain}:
\begin{align}
\label{suppeq:kalman_gain}
\nonumber {\bf K}_t &   \triangleq {\bf P}_{\boldsymbol{\tilde{\theta}}_t,{\bf \tilde{y}}_{t}}  {\bf P}_{{\bf \tilde{y}}_t}^{-1}\\
& \underbrace{=}_{(\ref{suppeq:covariance_weights_innovation}) + (\ref{suppeq:covariance_innovation})} {\bf P}_{t|t-1} \nabla_{\boldsymbol{\theta}_t} h({\bf u}_t,  \boldsymbol{\hat{\theta}}_{t|t-1}) \Big( \nabla_{\boldsymbol{\theta}_t} h({\bf u}_t,  \boldsymbol{\hat{\theta}}_{t|t-1})^\top {\bf P}_{t|t-1} \nabla_{\boldsymbol{\theta}_t} h({\bf u}_t,  \boldsymbol{\hat{\theta}}_{t|t-1})  + {\bf P}_{{\bf n}_t} \Big)^{-1}
\end{align}

and the update for the parameters of the VBF and the error covariance are the same as in Equation (\ref{suppeq:kalman update}) as we prove in Theorem 1.

\newpage
\subsection{The MAP estimator}
We adopt the Bayesian approach in which we are interested in finding the optimal set of parameters $\boldsymbol{\theta}_t$ that maximizes the posterior distribution of the parameters given the observations we have gathered up to time $t$, denoted as the  $y_{1:t}$.

According to Bayes rule, the posterior distribution is defined as: 
\[p(\boldsymbol{\theta}_t|y_{1:t}) = \frac{p(y_{1:t}|\boldsymbol{\theta}_t) p(\boldsymbol{\theta}_t)}{p(y_{1:t})}\]
where $p(y_{1:t}|\boldsymbol{\theta})$ is the {\it likelihood} of the observations given the parameters $\boldsymbol{\theta}$ and $p(\boldsymbol{\theta})$ is the {\it prior} distribution over $\boldsymbol{\theta}$.	We will expend the term of the posterior \cite{van2004sigma}:
\begin{align}
\nonumber p(\boldsymbol{\theta}_t|y_{1:t}) & = \frac{p(y_{1:t}|\boldsymbol{\theta}_t)p(\boldsymbol{\theta}_t)}{p(y_{1:t})} \\
& = \frac{p(y_t|y_{1:t-1},\boldsymbol{\theta}_t) p(y_{1:t-1}|\boldsymbol{\theta}_t)p(\boldsymbol{\theta}_t)}{p(y_{1:t})} \label{eq:PosteriorWithNoise1} \\ 
& = \frac{p(y_t|\boldsymbol{\theta}_t) p(y_{1:t-1}|\boldsymbol{\theta}_t)p(\boldsymbol{\theta}_t)  }{p(y_{1:t})} \cdot \frac{p(y_{1:t-1})}{p(y_{1:t-1})}  \label{eq:PosteriorWithNoise2}\\ 
& = \frac{p(y_t|\boldsymbol{\theta}_t)p(\boldsymbol{\theta}_t|y_{1:t-1}) p(y_{1:t-1}) }{p(y_{1:t})} \label{eq:PosteriorWithNoise3}
\end{align}
The transition in (\ref{eq:PosteriorWithNoise1}) is according to the conditional probability:
\begin{align*}
p(y_{1:t}|\boldsymbol{\theta}_t) & = p(y_t, y_{1:t-1}|\boldsymbol{\theta}_t)\\
& = \frac{p(y_t, y_{1:t-1},\boldsymbol{\theta}_t)}{p(\boldsymbol{\theta}_t)} \\
& = \frac{p(y_{1:t-1},\boldsymbol{\theta}_t)p(y_t|y_{1:t-1},\boldsymbol{\theta}_t)}{p(\boldsymbol{\theta}_t)} \\
& = p(y_{1:t-1}|\boldsymbol{\theta}_t)p(y_t|y_{1:t-1},\boldsymbol{\theta}_t) 
\end{align*}
The transition in (\ref{eq:PosteriorWithNoise2}) is according to the conditional independence: $p(y_t|y_{1:t-1},\boldsymbol{\theta}_t) = p(y_t|\boldsymbol{\theta}_t)$, and we multiplied the numerator and the dominator by $p(y_{1:t-1})$.\\
The transition in (\ref{eq:PosteriorWithNoise3}) is according to Bayes rule: $p(\boldsymbol{\theta}_t|y_{1:t-1}) = \frac{p(y_{1:t-1}|\boldsymbol{\theta}_t)p(\boldsymbol{\theta}_t)}{p(y_{1:t-1})}$.

The MAP estimator for $\boldsymbol{\theta}_t$ is the one who maximizes the posterior distribution described in (\ref{eq:PosteriorWithNoise3}).
\begin{align}
\nonumber \boldsymbol{\theta}_{t}^{MAP}  = & \arg\max_{\boldsymbol{\theta}_t}  \big\{ p(\boldsymbol{\theta}_t|y_{1:t}) \big\}\\
\nonumber = &\arg\max_{\boldsymbol{\theta}_t} \Big\{  \frac{p(y_t|\boldsymbol{\theta}_t)p(\boldsymbol{\theta}_t|y_{1:t-1}) p(y_{1:t-1}) }{p(y_{1:t})} \Big\}\\
\nonumber = &\arg\max_{\boldsymbol{\theta}_t} \big\{  p(y_t|\boldsymbol{\theta}_t)p(\boldsymbol{\theta}_t|y_{1:t-1})  \big\}\\
\nonumber = &\arg\max_{\boldsymbol{\theta}_t} \big\{ \log \big(  p(y_t|\boldsymbol{\theta}_t)p(\boldsymbol{\theta}_t|y_{1:t-1})  \big) \big\} \\
\nonumber = &\arg\max_{\boldsymbol{\theta}_t} \big\{ \log   p(y_t|\boldsymbol{\theta}_t) + \log p(\boldsymbol{\theta}_t|y_{1:t-1})   \big\} \\
= &\arg\min_{\boldsymbol{\theta}_t} \big\{ - \log   p(y_t|\boldsymbol{\theta}_t) - \log p(\boldsymbol{\theta}_t|y_{1:t-1})   \big\} \label{eq:MAPln}
\end{align}
In (\ref{eq:MAPln}) We used the derivation in ($\ref{eq:PosteriorWithNoise3}$) and the fact that the argument which maximizes the posterior is the same as the argument that maximizes the $\log (\cdot)$ of the posterior. In addition this argument also minimizes the negative $\log(\cdot)$. 

We will replace here $y_t = y({\bf u}_t)$ and receive:
\begin{equation}
\label{eq:MAPln2}
\boldsymbol{\theta}_{t}^{MAP} = \arg\min_{\boldsymbol{\theta}_t}  \big\{ - \log   p(y({\bf u}_t)|\boldsymbol{\theta}_t) - \log p(\boldsymbol{\theta}_t|y_{1:t-1})   \big\}
\end{equation}

In order to solve (\ref{eq:MAPln2}), we consider the EKF formulation for the VBF parameters.

\subsection{Gaussian assumptions}
When estimating using the EKF, it is common to make the following assumptions regarding the likelihood and the posterior in Equation (\ref{eq:MAPln2}):
\begin{assumption}
	\label{suppAs:ConditionalIndependance}
	The likelihood $p(y({\bf u}_t)|\boldsymbol{\theta}_t)$ is assumed to be Gaussian: 
	$y({\bf u}_t)|\boldsymbol{\theta}_t \sim \mathcal{N}(h({\bf u}_t,  \boldsymbol{\theta}_t), {\bf P}_{{\bf n}_t})$. 
\end{assumption}

\begin{assumption}
	\label{suppAs:GaussianPosterior}
	The posterior distribution $p(\boldsymbol{\theta}_t|y_{1:t-1})$ is assumed to be Gaussian: $\boldsymbol{\theta}_t|y_{1:t-1} \sim \mathcal{N}(\boldsymbol{\hat{\theta}}_{t|t-1},{\bf P}_{t|t-1})$.
\end{assumption}
Following are the calculations for the means and covariances in Assumptions \ref{suppAs:ConditionalIndependance} and \ref{suppAs:GaussianPosterior}. For the likelihood $p(y({\bf u}_t)|\boldsymbol{\theta}_t)$:
\begin{align}
\label{suppeq:expected_y}
\mathbb{E} \big[y({\bf u}_t)|\boldsymbol{\theta}_t \big] & \underbrace{=}_{(\ref{suppeq:Extended-Kalman})} \mathbb{E} \big[h({\bf u}_t; \boldsymbol{\theta}_t) + {\bf n}_t|\boldsymbol{\theta}_t \big]  = \mathbb{E} \big[h({\bf u}_t; \boldsymbol{\theta}_t) |\boldsymbol{\theta}_t \big] + \underbrace{\mathbb{E} \big[ {\bf n}_t |\boldsymbol{\theta}_t\big]}_{={\bf 0}}  = h({\bf u}_t; \boldsymbol{\theta}_t)
\end{align}
Let's evaluate the following:
\begin{align}
\label{suppeq:diff_y}
y({\bf u}_t) - \mathbb{E} \big[y({\bf u}_t)|\boldsymbol{\theta}_t \big]  & \underbrace{=}_{(\ref{suppeq:Extended-Kalman}) + (\ref{suppeq:expected_y})} h({\bf u}_t; \boldsymbol{\theta}_t) + {\bf n}_t - h({\bf u}_t; \boldsymbol{\theta}_t) = {\bf n}_t
\end{align}
\begin{align*}
& Cov(y({\bf u}_t)|\boldsymbol{\theta}_t)  \triangleq 
\mathbb{E} \big[\big(y({\bf u}_t) - \mathbb{E} \big[y({\bf u}_t)|\boldsymbol{\theta}_t \big] \big)  \big(y({\bf u}_t) - \mathbb{E} \big[y({\bf u}_t)|\boldsymbol{\theta}_t \big] \big)^\top|\boldsymbol{\theta}_t \big]  \underbrace{=}_{(\ref{suppeq:diff_y})} \mathbb{E} \big[  {\bf n}_t {\bf n}_t^\top  |\boldsymbol{\theta}_t \big] = {\bf P}_{{\bf n}_t}
\end{align*}
For the posterior $p(\boldsymbol{\theta}_t|y_{1:t-1})$:  
$\mathbb{E}_{\boldsymbol{\theta}_t} \big[ \boldsymbol{\theta}_t|y_{1:t-1}\big] \underbrace{=}_{(\ref{suppeq:weight_estimation})} \boldsymbol{\hat{\theta}}_{t|t-1}$.
\begin{align*}
Cov(\boldsymbol{\theta}_t|y_{1:t-1}) \triangleq  
& \mathbb{E}_{\boldsymbol{\theta}_t} \big[ \big(\boldsymbol{\theta}_t - \boldsymbol{\hat{\theta}}_{t|t-1} \big) \big(\boldsymbol{\theta}_t - \boldsymbol{\hat{\theta}}_{t|t-1} \big)^\top |y_{1:t-1} \big] = \mathbb{E}_{\boldsymbol{\theta}_t} \big[ \boldsymbol{\tilde{\theta}}_{t|t-1} \boldsymbol{\tilde{\theta}}_{t|t-1}^\top |y_{1:t-1} \big] \underbrace{=}_{(\ref{suppeq:error_covariance})} {\bf P}_{t|t-1}
\end{align*}

\subsection{Proof of Theorem 1}
Based on the Gaussian assumptions, we can derive the following Theorem:
\begin{theorem}
	\label{supp:theorem1}
	Under Assumptions \ref{suppAs:ConditionalIndependance} and \ref{suppAs:GaussianPosterior}, $\boldsymbol{\hat{\theta}}^{\text{EKF}}_{t|t}$  (\ref{suppeq:kalman update}) minimizes at each time step $t$ the following regularized objective function:
	\begin{align}
	\label{suppeq:objective_EKF}
	L^{\text{EKF}}_t(\boldsymbol{\theta}_t) & =  \frac{1}{2}  \big(\delta ({\bf u}_t; \boldsymbol{\theta}_t) \big)^\top {\bf P}_{{\bf n}_t}^{-1} \big(\delta ({\bf u}_t; \boldsymbol{\theta}_t) \big) +  \frac{1}{2}(\boldsymbol{\theta}_t - \boldsymbol{\hat{\theta}}_{t|t-1})^\top {\bf P}_{t|t-1}^{-1} (\boldsymbol{\theta}_t - \boldsymbol{\hat{\theta}}_{t|t-1}).
	\end{align}
\end{theorem}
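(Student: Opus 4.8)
The plan is to prove Theorem~\ref{supp:theorem1} in two stages. First I would identify $L^{\text{EKF}}_t$ with the negative log-posterior that defines the MAP estimator (\ref{eq:MAPln2}); second, under the first-order Taylor linearization (\ref{suppeq:Linearization}) I would show that the unique minimizer of this (now quadratic) objective coincides with the EKF update (\ref{suppeq:kalman update}).

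\textbf{Stage 1 ($L^{\text{EKF}}_t$ is the MAP objective).} Starting from (\ref{eq:MAPln2}), substitute the two Gaussian densities supplied by Assumptions \ref{suppAs:ConditionalIndependance} and \ref{suppAs:GaussianPosterior}. The log-likelihood contributes $-\log p(y({\bf u}_t)|\boldsymbol{\theta}_t) = \frac{1}{2}\big(y({\bf u}_t)-h({\bf u}_t;\boldsymbol{\theta}_t)\big)^\top {\bf P}_{{\bf n}_t}^{-1}\big(y({\bf u}_t)-h({\bf u}_t;\boldsymbol{\theta}_t)\big) + c_1$, and since $\delta({\bf u}_t;\boldsymbol{\theta}_t)=y({\bf u}_t)-h({\bf u}_t;\boldsymbol{\theta}_t)$ this is exactly the first term of $L^{\text{EKF}}_t$ up to the constant $c_1=\frac{1}{2}\log\det(2\pi{\bf P}_{{\bf n}_t})$, which is independent of $\boldsymbol{\theta}_t$. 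The log-prior contributes $-\log p(\boldsymbol{\theta}_t|y_{1:t-1}) = \frac{1}{2}(\boldsymbol{\theta}_t-\boldsymbol{\hat{\theta}}_{t|t-1})^\top {\bf P}_{t|t-1}^{-1}(\boldsymbol{\theta}_t-\boldsymbol{\hat{\theta}}_{t|t-1}) + c_2$, matching the second term of $L^{\text{EKF}}_t$ up to the $\boldsymbol{\theta}_t$-free constant $c_2$. Dropping $c_1,c_2$ leaves the argmin unchanged, so $\arg\min_{\boldsymbol{\theta}_t}\{-\log p(y_t|\boldsymbol{\theta}_t)-\log p(\boldsymbol{\theta}_t|y_{1:t-1})\}=\arg\min_{\boldsymbol{\theta}_t}L^{\text{EKF}}_t(\boldsymbol{\theta}_t)$.

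\textbf{Stage 2 (the minimizer equals $\boldsymbol{\hat{\theta}}^{\text{EKF}}_{t|t}$).} Write $\boldsymbol{\hat{\theta}}\equiv\boldsymbol{\hat{\theta}}_{t|t-1}$, $H\equiv\nabla_{\boldsymbol{\theta}_t}h({\bf u}_t;\boldsymbol{\hat{\theta}})\in\mathbb{R}^{d\times N}$ and $e\equiv y({\bf u}_t)-h({\bf u}_t;\boldsymbol{\hat{\theta}})$. Substituting the linearization (\ref{suppeq:Linearization}) gives $\delta({\bf u}_t;\boldsymbol{\theta}_t)=e-H^\top(\boldsymbol{\theta}_t-\boldsymbol{\hat{\theta}})$, so $L^{\text{EKF}}_t$ becomes a strictly convex quadratic in $\boldsymbol{\theta}_t$ with Hessian ${\bf P}_{t|t-1}^{-1}+H{\bf P}_{{\bf n}_t}^{-1}H^\top\succ 0$ (using positive definiteness of ${\bf P}_{t|t-1}$ and ${\bf P}_{{\bf n}_t}$). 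Setting $\nabla_{\boldsymbol{\theta}_t}L^{\text{EKF}}_t=0$ gives the normal equation $\big({\bf P}_{t|t-1}^{-1}+H{\bf P}_{{\bf n}_t}^{-1}H^\top\big)(\boldsymbol{\theta}_t-\boldsymbol{\hat{\theta}}) = H{\bf P}_{{\bf n}_t}^{-1}e$, whence the unique minimizer is $\boldsymbol{\theta}_t^\star = \boldsymbol{\hat{\theta}}+\big({\bf P}_{t|t-1}^{-1}+H{\bf P}_{{\bf n}_t}^{-1}H^\top\big)^{-1}H{\bf P}_{{\bf n}_t}^{-1}e$. I would then invoke the matrix inversion lemma in the form $\big({\bf P}_{t|t-1}^{-1}+H{\bf P}_{{\bf n}_t}^{-1}H^\top\big)^{-1}H{\bf P}_{{\bf n}_t}^{-1} = {\bf P}_{t|t-1}H\big(H^\top {\bf P}_{t|t-1}H+{\bf P}_{{\bf n}_t}\big)^{-1}$, whose right-hand side is precisely the Kalman gain ${\bf K}_t$ of (\ref{suppeq:kalman_gain}). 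Hence $\boldsymbol{\theta}_t^\star = \boldsymbol{\hat{\theta}}_{t|t-1}+{\bf K}_t\big(y({\bf u}_t)-h({\bf u}_t;\boldsymbol{\hat{\theta}}_{t|t-1})\big)=\boldsymbol{\hat{\theta}}^{\text{EKF}}_{t|t}$, as claimed.

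\textbf{Anticipated obstacle.} The only nonroutine step is the matrix-identity manipulation closing Stage 2: verifying that the ``information-form'' solution of the normal equations is the same vector as the ``gain-form'' update of the Kalman recursion. This is a direct application of the Woodbury/matrix-inversion lemma, but one must track the two different inverse sizes ($d\times d$ versus $N\times N$) and rely on the standing assumption that ${\bf P}_{t|t-1}$ and ${\bf P}_{{\bf n}_t}$ are positive definite (the same assumption underwriting strict convexity, hence uniqueness of the minimizer). Everything else --- substituting the Gaussian log-densities, discarding $\boldsymbol{\theta}_t$-free constants, and differentiating a quadratic --- is bookkeeping.
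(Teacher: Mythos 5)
Your proposal is correct and follows essentially the same route as the paper's proof: substitute the Gaussian densities into the MAP objective to recover $L^{\text{EKF}}_t$ up to additive constants, linearize $h$, solve the resulting normal equations, and convert the information-form solution to the gain-form update via the push-through/matrix-inversion identity (the paper verifies that identity by multiplying through by ${\bf P}_{{\bf \tilde y}_t}{\bf P}_{{\bf \tilde y}_t}^{-1}$ rather than citing Woodbury directly, but the algebra is the same). Your explicit remark on strict convexity and uniqueness of the minimizer is a small, welcome addition that the paper leaves implicit.
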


\begin{proof}
	We solve the minimization problem in (\ref{eq:MAPln2}) by substituting the Gaussian Assumptions \ref{suppAs:ConditionalIndependance} and \ref{suppAs:GaussianPosterior}. We show that this minimization problem is equivalent to minimize the objective function $L_t^{\text{EKF}}$ in Theorem \ref{supp:theorem1}.
	\begin{align*}
	& \nonumber  \boldsymbol{\hat{\theta}}_{t|t}^{\text{MAP}}  \underbrace{=}_{(\ref{eq:MAPln2})} \arg\min_{\boldsymbol{\theta}_t}  \big\{  - \log \Big( p(y({\bf u}_t)|\boldsymbol{\theta}_t) \Big) - \log \Big( p(\boldsymbol{\theta}_t|y_{1:t-1}) \Big) \big\}\\
	\nonumber & = \arg\min_{\boldsymbol{\theta}_t}  \Big\{ -\log \bigg( \frac{1}{(2\pi)^{N/2} |{\bf P}_{{\bf n}_t}|^{1/2} }  \exp \Big( - \frac{1}{2} \big( y({\bf u}_t) - h({\bf u}_t; \boldsymbol{\theta}_t) \big)^\top  {\bf P}_{{\bf n}_t}^{-1} \big( y({\bf u}_t) - h({\bf u}_t; \boldsymbol{\theta}_t) \big) \Big) \bigg) \\
	& -\log \bigg( \frac{1}{(2 \pi)^{d/2} |{\bf P}_{t|t-1}|^{1/2}} \exp \Big( - \frac{1}{2} (\boldsymbol{\theta}_t - \boldsymbol{\hat{\theta}}_{t|t-1})^\top  {\bf P}_{t|t-1}^{-1} (\boldsymbol{\theta}_t -  \boldsymbol{\hat{\theta}}_{t|t-1})  \Big) \bigg)  \\
	\nonumber & = \arg\min_{\boldsymbol{\theta}_i}  \Big\{ \frac{1}{2}  \big( y({\bf u}_t) - h({\bf u}_t; \boldsymbol{\theta}_t) \big)^\top  {\bf P}_{{\bf n}_t}^{-1} \big( y({\bf u}_t) - h({\bf u}_t; \boldsymbol{\theta}_t) \big)  -\log \Big( \frac{1}{(2\pi)^{N/2} |{\bf P}_{{\bf n}_t}|^{1/2} } \Big)  \\
	& + \frac{1}{2} (\boldsymbol{\theta}_t - \boldsymbol{\hat{\theta}}_{t|t-1})^\top  {\bf P}_{t|t-1}^{-1} (\boldsymbol{\theta}_t -  \boldsymbol{\hat{\theta}}_{t|t-1})  -\log \Big( \frac{1}{(2 \pi)^{d/2} |{\bf P}_{t|t-1}|^{1/2}}  \Big) \Big\} \\
	\nonumber & = \arg\min_{\boldsymbol{\theta}_i}  \Big\{ \frac{1}{2} \big( y({\bf u}_t) - h({\bf u}_t; \boldsymbol{\theta}_t) \big)^\top  {\bf P}_{{\bf n}_t}^{-1} \big( y({\bf u}_t) - h({\bf u}_t; \boldsymbol{\theta}_t) \big) + \frac{1}{2} (\boldsymbol{\theta}_t - \boldsymbol{\hat{\theta}}_{t|t-1})^\top  {\bf P}_{t|t-1}^{-1} (\boldsymbol{\theta}_t -  \boldsymbol{\hat{\theta}}_{t|t-1}) \Big\} 
	\end{align*}
	where $|\cdot|$ denotes the determinant. We receive the following objective function:
	\begin{align}
	\label{eq:Objective}
	L_t(\boldsymbol{\theta}_t) & =  \frac{1}{2} \big( y({\bf u}_t) - h({\bf u}_t; \boldsymbol{\theta}_t) \big)^\top  {\bf P}_{{\bf n}_t}^{-1} \big( y({\bf u}_t) - h({\bf u}_t; \boldsymbol{\theta}_t) \big) + \frac{1}{2} (\boldsymbol{\theta}_t - \boldsymbol{\hat{\theta}}_{t|t-1})^\top  {\bf P}_{t|t-1}^{-1} (\boldsymbol{\theta}_t -  \boldsymbol{\hat{\theta}}_{t|t-1})  
	\end{align}
	which is exactly the objective function (\ref{suppeq:objective_EKF}) in Theorem \ref{supp:theorem1}, with: $\delta ({\bf u}_t; \boldsymbol{\theta}_t) = y({\bf u}_t) - h({\bf u}_t; \boldsymbol{\theta}_t)$. To minimize this objective function we take the derivative of $L^{\text{EKF}}_t(\boldsymbol{\theta}_t)$ with respect to $\boldsymbol{\theta}_t$:
	\begin{align*}
	\nabla_{\boldsymbol{\theta}_t} L^{\text{EKF}}_t(\boldsymbol{\theta}_t) & = -     \nabla_{\boldsymbol{\theta}_t}  h({\bf u}_t,  \boldsymbol{\theta}_t) {\bf P}_{{\bf n}_t}^{-1} \big( y({\bf u}_t) - h({\bf u}_t; \boldsymbol{\theta}_t) \big)  + {\bf P}_{t|t-1}^{-1} (\boldsymbol{\theta}_t - \boldsymbol{\hat{\theta}}_{t|t-1})  = 0
	\end{align*}
	We use the linearization of the value function in Equation (\ref{suppeq:Linearization}):
	\begin{align*}
	{\bf P}_{t|t-1}^{-1} (\boldsymbol{\theta}_t - \boldsymbol{\hat{\theta}}) &  =  \nabla_{\boldsymbol{\theta}_t}  \big( h({\bf u}_t; \boldsymbol{\hat{\theta}}) +   \nabla_{\boldsymbol{\theta}_t} h({\bf u}_t; \boldsymbol{\hat{\theta}})^\top \big( \boldsymbol{\theta}_{t} - \boldsymbol{\hat{\theta}} \big)\big) {\bf P}_{{\bf n}_t}^{-1} \Big( y({\bf u}_t) -  h({\bf u}_t; \boldsymbol{\hat{\theta}}) -   \nabla_{\boldsymbol{\theta}_t} h({\bf u}_t; \boldsymbol{\hat{\theta}})^\top \big( \boldsymbol{\theta}_{t} - \boldsymbol{\hat{\theta}} \big)  \Big)    \\
	& =  \nabla_{\boldsymbol{\theta}_t} h({\bf u}_t; \boldsymbol{\hat{\theta}}) {\bf P}_{{\bf n}_t}^{-1} \Big( y({\bf u}_t) -  h({\bf u}_t; \boldsymbol{\hat{\theta}}) \Big) - \nabla_{\boldsymbol{\theta}_t} h({\bf u}_t; \boldsymbol{\hat{\theta}}) {\bf P}_{{\bf n}_t}^{-1} \nabla_{\boldsymbol{\theta}_t} h({\bf u}_t; \boldsymbol{\hat{\theta}})^\top  \big( \boldsymbol{\theta}_{t} - \boldsymbol{\hat{\theta}} \big)
	\end{align*}
	We receive that:
	\begin{align*}
	\Big( {\bf P}_{t|t-1}^{-1} + \nabla_{\boldsymbol{\theta}_t} h({\bf u}_t; \boldsymbol{\hat{\theta}}) {\bf P}_{{\bf n}_t}^{-1} \nabla_{\boldsymbol{\theta}_t} h({\bf u}_t; \boldsymbol{\hat{\theta}})^\top\Big)  (\boldsymbol{\theta}_t - \boldsymbol{\hat{\theta}}) =  \nabla_{\boldsymbol{\theta}_t} h({\bf u}_t; \boldsymbol{\hat{\theta}}) {\bf P}_{{\bf n}_t}^{-1} \Big( y({\bf u}_t) -  h({\bf u}_t; \boldsymbol{\hat{\theta}}) \Big)
	\end{align*}
	and finally:
	\begin{align}
	\label{suppeq:weight_MAP}
	\boldsymbol{\theta}_t & = \boldsymbol{\hat{\theta}} +  \Big( {\bf P}_{t|t-1}^{-1} +  \nabla_{\boldsymbol{\theta}_t} h({\bf u}_t; \boldsymbol{\hat{\theta}}) {\bf P}_{{\bf n}_t}^{-1} \nabla_{\boldsymbol{\theta}_t} h({\bf u}_t; \boldsymbol{\hat{\theta}})^\top\Big)^{-1}  \nabla_{\boldsymbol{\theta}_t} h({\bf u}_t; \boldsymbol{\hat{\theta}}) {\bf P}_{{\bf n}_t}^{-1} \Big( y({\bf u}_t) -  h({\bf u}_t; \boldsymbol{\hat{\theta}}) \Big)
	\end{align}
	For simplicity we denote:
	$\nabla {\bf h} = \nabla_{\boldsymbol{\theta}_t} h({\bf u}_t; \boldsymbol{\hat{\theta}})$. 
	We will now simplify the following term:
	\begin{align}
	\label{suppeq:Kt_derivation}
	& \nonumber \Big( {\bf P}_{t|t-1}^{-1} + \nabla {\bf h} {\bf P}_{{\bf n}_t}^{-1} \nabla {\bf h}^\top\Big)^{-1} \nabla {\bf h} {\bf P}_{{\bf n}_t}^{-1} \\
	\nonumber & = \Big( {\bf P}_{t|t-1}^{-1} + \nabla {\bf h} {\bf P}_{{\bf n}_t}^{-1} \nabla {\bf h}^\top\Big)^{-1} \nabla {\bf h} {\bf P}_{{\bf n}_t}^{-1} \Big( \nabla {\bf h}^\top {\bf P}_{t|t-1} \nabla {\bf h} + {\bf P}_{{\bf n}_t}\Big) \Big( \nabla {\bf h}^\top {\bf P}_{t|t-1} \nabla {\bf h} + {\bf P}_{{\bf n}_t}\Big)^{-1}\\
	\nonumber & = \Big( {\bf P}_{t|t-1}^{-1} + \nabla {\bf h} {\bf P}_{{\bf n}_t}^{-1} \nabla {\bf h}^\top\Big)^{-1}  \Big( \nabla {\bf h} {\bf P}_{{\bf n}_t}^{-1} \nabla {\bf h}^\top {\bf P}_{t|t-1} \nabla {\bf h}  + \nabla {\bf h} {\bf P}_{{\bf n}_t}^{-1} {\bf P}_{{\bf n}_t}\Big) \Big( \nabla {\bf h}^\top {\bf P}_{t|t-1} \nabla {\bf h} + {\bf P}_{{\bf n}_t}\Big)^{-1}\\
	\nonumber & = \Big( {\bf P}_{t|t-1}^{-1} + \nabla {\bf h} {\bf P}_{{\bf n}_t}^{-1} \nabla {\bf h}^\top\Big)^{-1}  \Big( \nabla {\bf h} {\bf P}_{{\bf n}_t}^{-1} \nabla {\bf h}^\top  +  {\bf P}_{t|t-1}^{-1} \Big) {\bf P}_{t|t-1} \nabla {\bf h} \Big( \nabla {\bf h}^\top {\bf P}_{t|t-1} \nabla {\bf h} + {\bf P}_{{\bf n}_t}\Big)^{-1} \\
	& = {\bf P}_{t|t-1} \nabla {\bf h} \Big( \nabla {\bf h}^\top {\bf P}_{t|t-1} \nabla {\bf h} + {\bf P}_{{\bf n}_t}\Big)^{-1}  \underbrace{=}_{(\ref{suppeq:covariance_weights_innovation})+(\ref{suppeq:covariance_innovation})} {\bf P}_{\boldsymbol{\tilde{\theta}}_t,{\bf \tilde{y}}_{t}}  {\bf P}_{{\bf \tilde{y}}_t}^{-1}  \underbrace{=}_{(\ref{suppeq:kalman_gain})} {\bf K}_t
	\end{align}
	Substituting this results in Equation (\ref{suppeq:weight_MAP}), we receive the EKF update for the parameters:
	\begin{align}
	\label{suppeq:EKF_weight}
	\boldsymbol{\hat{\theta}}^{\text{EKF}}_{t|t}  = \boldsymbol{\hat{\theta}}_{t|t-1} +  {\bf K}_t \big( y({\bf u}_t) - h({\bf u}_t; \boldsymbol{\hat{\theta}}_{t|t-1}) \big) 
	\end{align}
	which is exactly as in Equation (\ref{suppeq:kalman update}).
	
	We will now develop the term $\Big( {\bf P}_{t|t-1}^{-1} +\nabla {\bf h} {\bf P}_{{\bf n}_t}^{-1} \nabla {\bf h}^\top\Big)^{-1}$ that appears in (\ref{suppeq:weight_MAP}) by using the matrix inversion lemma:
	\begin{equation}
	\label{suppeq:MatrixInversionLemma}
	({\bf B}^{-1} + {\bf C}{\bf D}^{-1}{\bf C}^\top)^{-1} = {\bf B} - {\bf BC}({\bf D}+ {\bf C}^\top {\bf BC})^{-1} {\bf C}^\top{\bf B}
	\end{equation}
	where ${\bf B}$ is a square symmetric positive-definite (and hence invertible) matrix. For this purpose we assume that the error covariance matrix of $\boldsymbol{\theta}_t$, ${\bf P}_{t|t-1}$, is symmetric and positive-definite.
	\begin{align*}
	& \nonumber \Big( {\bf P}_{t|t-1}^{-1} + \nabla {\bf h} {\bf P}_{{\bf n}_t}^{-1} \nabla {\bf h}^\top\Big)^{-1}\\
	& \underbrace{=}_{(\ref{suppeq:MatrixInversionLemma})} {\bf P}_{t|t-1} - {\bf P}_{t|t-1} \nabla {\bf h}({\bf P}_{{\bf n}_t} + \nabla {\bf h}^\top {\bf P}_{t|t-1} \nabla {\bf h})^{-1}\nabla {\bf h}^\top {\bf P}_{t|t-1}  \underbrace{=}_{(\ref{suppeq:Kt_derivation})} {\bf P}_{t|t-1} - {\bf K}_t \nabla {\bf h}^\top {\bf P}_{t|t-1}\\
	\nonumber & \underbrace{=}_{(\ref{suppeq:covariance_weights_innovation})} {\bf P}_{t|t-1} - {\bf K}_t {\bf P}_{\boldsymbol{\tilde{\theta}}_t,{\bf \tilde{y}}_{t}}^\top \underbrace{=}_{(\ref{suppeq:kalman_gain})} {\bf P}_{t|t-1} - {\bf K}_t  {\bf P}_{{\bf \tilde{y}}_t} {\bf K}_t^\top
	\end{align*} 
	
	We can write the update of the parameters error covariance as:
	\begin{equation}
	\label{suppeq:error_covariance_update}
	\boxed { {\bf P}_{t|t} = {\bf P}_{t|t-1} - {\bf K}_t  {\bf P}_{{\bf \tilde{y}}_t} {\bf K}_t^\top }
	\end{equation}
	
	We conclude the proof by stating that the optimal parameter $\boldsymbol{\hat{\theta}}_{t|t}^{\text{EKF}}$ in (\ref{suppeq:kalman update}) is the solution to the minimization of the objective function in (\ref{suppeq:objective_EKF}):
	\[\boldsymbol{\hat{\theta}}_{t|t}^{\text{EKF}} \in \arg\min_{\boldsymbol{\theta}_{t}} L_t^{\text{EKF}}(\boldsymbol{\theta}_t)\] 
\end{proof}

\subsection{Proof of Colloraly 1}
\begin{proof}
	If ${\bf P}_{{\bf n}_t}$ is diagonal with diagonal elements $\sigma_i=N$, where $N$ is the number of samples in a batch, then:
	\begin{align*}
	\frac{1}{2}  \delta({\bf u}_t; \boldsymbol{\theta}_t)^\top {\bf P}_{{\bf n}_t}^{-1}  \delta({\bf u}_t; \boldsymbol{\theta}_t) & = \frac{1}{2 N} \sum_{i=1}^N  \delta^2(u_t^i,\boldsymbol{\theta}_t) = L^{\text{MLE}}_t (\boldsymbol{\theta}_t)
	\end{align*}
	
	If in addition, ${\bf P}_{0|0} = {\bf 0}$, and ${\bf P}_{{\bf v}_t} = {\bf 0}$ then the the initial error covariance matrix does not change and $L^{\text{EKF}}_t (\boldsymbol{\theta}_t) = L^{\text{MLE}}_t (\boldsymbol{\theta}_t)$ for each $t$.
\end{proof}

\subsection{Theorem 2}
The following Theorem formalizes the connection between $L_t^{\text{EKF}}$ and two separate KL-divergences ($D_{\text{KL}} (P || Q) = \int_{-\infty}^{\infty} p(x) \log (p(x)/q(x)) dx $): 
\begin{theorem}
	\label{supp:theorem2}
	Assume the inputs $u$ are drawn independently from a training distribution $\hat{Q}_{u}$, and the observations $y$ are drawn from a conditional training distribution $\hat{Q}_{y|u}$. Let $P_{u,y}(\boldsymbol{\theta})$ and $P_{y|u}(\boldsymbol{\theta})$ be the learned joint and conditional distributions, respectively. Define $\small C = \log \big(\frac{1}{(2\pi)^{N/2} |{\bf P}_{{\bf n}_t}|^{1/2} } \big)$. Under Assumptions \ref{suppAs:ConditionalIndependance} and \ref{suppAs:GaussianPosterior}, consider ${\bf P}_{{\bf n}_t}$ with diagonal elements $\sigma_i = N$, then:
	\begin{align*}
	L^{\text{EKF}}_t(\boldsymbol{\theta}_t)   = C +  N \mathbb{E}_{\hat{Q}_u} [D_{\text{KL}} \big(\hat{Q}_{y|u} || P_{y|u}(\boldsymbol{\theta}) \big)] +  t \cdot D_{\text{KL}} \big(P_{u,y}(\boldsymbol{\theta} + \Delta \boldsymbol{\theta})|| P_{u,y}(\boldsymbol{\theta}) \big)  + \mathcal{O}(\| \Delta \boldsymbol{\theta}\|^3)
	\end{align*}
\end{theorem}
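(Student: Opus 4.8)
The plan is to split $L^{\text{EKF}}_t(\boldsymbol{\theta}_t)$ into its data-fit term $\tfrac12\,\delta({\bf u}_t;\boldsymbol{\theta}_t)^\top {\bf P}_{{\bf n}_t}^{-1}\delta({\bf u}_t;\boldsymbol{\theta}_t)$ and its regularization term $\tfrac12(\boldsymbol{\theta}_t-\boldsymbol{\hat{\theta}}_{t|t-1})^\top {\bf P}_{t|t-1}^{-1}(\boldsymbol{\theta}_t-\boldsymbol{\hat{\theta}}_{t|t-1})$ and to match each summand to one of the two KL terms. For the data-fit term: since ${\bf P}_{{\bf n}_t}$ is diagonal with $\sigma_i=N$, it equals $L^{\text{MLE}}_t(\boldsymbol{\theta}_t)=\tfrac1{2N}\sum_{i=1}^N\delta^2(u^i;\boldsymbol{\theta}_t)$ exactly as in the proof of Corollary~1. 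Under Assumption~1 the per-sample model is $P_{y|u}(y;\boldsymbol{\theta})=\mathcal{N}(y;h(u;\boldsymbol{\theta}),N)$, so $-\log P_{y|u}(y^i;\boldsymbol{\theta})=\tfrac12\log(2\pi N)+\tfrac{\delta^2(u^i;\boldsymbol{\theta})}{2N}$. Treating $\hat{Q}_{y|u}$ as the empirical (point-mass) conditional gives $D_{\text{KL}}(\hat{Q}_{y|u^i}\|P_{y|u^i}(\boldsymbol{\theta}))=-\log P_{y|u^i}(y^i;\boldsymbol{\theta})$; summing over the $N$ inputs drawn from $\hat{Q}_u$ and writing the sum as $N\,\mathbb{E}_{\hat{Q}_u}[D_{\text{KL}}(\hat{Q}_{y|u}\|P_{y|u}(\boldsymbol{\theta}))]$ yields $N\,\mathbb{E}_{\hat{Q}_u}[D_{\text{KL}}]=\tfrac N2\log(2\pi N)+L^{\text{MLE}}_t(\boldsymbol{\theta}_t)$. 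Since $|{\bf P}_{{\bf n}_t}|=N^N$ we have $C=-\tfrac N2\log(2\pi N)$, so the data-fit term equals $C+N\,\mathbb{E}_{\hat{Q}_u}[D_{\text{KL}}(\hat{Q}_{y|u}\|P_{y|u}(\boldsymbol{\theta}))]$. (As is standard when the reference is an empirical measure, the Dirac-entropy term is set to zero, so this ``KL'' is really the conditional cross-entropy up to the constant $C$.)

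For the regularization term, set $\Delta\boldsymbol{\theta}=\boldsymbol{\theta}_t-\boldsymbol{\hat{\theta}}_{t|t-1}$, so $P_{u,y}(\boldsymbol{\theta}+\Delta\boldsymbol{\theta})=P_{u,y}(\boldsymbol{\theta}_t)$ and $P_{u,y}(\boldsymbol{\theta})=P_{u,y}(\boldsymbol{\hat{\theta}}_{t|t-1})$. The plan has two pieces. First, a standard second-order Taylor expansion of the KL divergence: $D_{\text{KL}}(P_{u,y}(\boldsymbol{\theta}+\Delta\boldsymbol{\theta})\|P_{u,y}(\boldsymbol{\theta}))=\tfrac12\Delta\boldsymbol{\theta}^\top\mathcal{F}(\boldsymbol{\theta})\,\Delta\boldsymbol{\theta}+\mathcal{O}(\|\Delta\boldsymbol{\theta}\|^3)$, where the zeroth- and first-order terms vanish (the first-order term because the score has zero mean) and the Hessian is the Fisher information $\mathcal{F}(\boldsymbol{\theta})$ of the batch model $P_{u,y}$; for the Gaussian-mean family of Assumption~1 this Fisher equals $\nabla_{\boldsymbol{\theta}}h({\bf u}_t;\boldsymbol{\theta})\,{\bf P}_{{\bf n}_t}^{-1}\,\nabla_{\boldsymbol{\theta}}h({\bf u}_t;\boldsymbol{\theta})^\top$ evaluated on the realized batch. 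Second, I would identify ${\bf P}_{t|t-1}^{-1}$ with $t$ copies of this Fisher by unrolling the EKF covariance recursion.

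To carry out the second piece I would use the information-form update obtained inside the proof of Theorem~1, ${\bf P}_{s|s}^{-1}={\bf P}_{s|s-1}^{-1}+\nabla_{\boldsymbol{\theta}}h({\bf u}_s;\boldsymbol{\hat{\theta}})\,{\bf P}_{{\bf n}_s}^{-1}\,\nabla_{\boldsymbol{\theta}}h({\bf u}_s;\boldsymbol{\hat{\theta}})^\top$, together with the prediction step ${\bf P}_{s|s-1}={\bf P}_{s-1|s-1}+{\bf P}_{{\bf v}_s}$. Neglecting the evolution noise (${\bf P}_{{\bf v}_s}\approx{\bf 0}$) and starting from a diffuse prior (${\bf P}_{0|0}^{-1}\approx{\bf 0}$), the recursion unrolls to ${\bf P}_{t|t-1}^{-1}\approx\sum_{s=1}^{t-1}\nabla_{\boldsymbol{\theta}}h({\bf u}_s;\cdot)\,{\bf P}_{{\bf n}_s}^{-1}\,\nabla_{\boldsymbol{\theta}}h({\bf u}_s;\cdot)^\top$, and in the (approximately) stationary regime where each per-step term equals the same batch Fisher $\mathcal{F}(\boldsymbol{\theta})$ this is $\approx t\,\mathcal{F}(\boldsymbol{\theta})$ (absorbing the harmless off-by-one into the approximation). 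Substituting, $\tfrac12\Delta\boldsymbol{\theta}^\top {\bf P}_{t|t-1}^{-1}\Delta\boldsymbol{\theta}\approx t\cdot\tfrac12\Delta\boldsymbol{\theta}^\top\mathcal{F}(\boldsymbol{\theta})\,\Delta\boldsymbol{\theta}=t\,D_{\text{KL}}(P_{u,y}(\boldsymbol{\theta}+\Delta\boldsymbol{\theta})\|P_{u,y}(\boldsymbol{\theta}))+\mathcal{O}(\|\Delta\boldsymbol{\theta}\|^3)$. Adding the two terms gives the stated identity.

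The main obstacle is this last step: ${\bf P}_{t|t-1}^{-1}\approx t\,\mathcal{F}(\boldsymbol{\theta})$ is not an exact algebraic identity but rests on the regularity conditions just listed (negligible evolution noise, diffuse initial covariance, and approximately constant Fisher/gradients across the $t$ updates), and it is mildly in tension with the ${\bf P}_{0|0}={\bf 0}$ choice used in Corollary~1 — so the cleanest presentation states these as standing assumptions for the asymptotic identity rather than claiming an exact equality. By contrast, the data-fit term is routine once the (standard but worth flagging) convention for cross-entropy versus KL against an empirical conditional is fixed, and the KL-to-Fisher expansion is the classical computation.
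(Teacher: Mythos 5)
Your proposal is correct and follows the same overall architecture as the paper's proof: the same split of $L^{\text{EKF}}_t$ into the data-fit and regularization terms, the same identification of the data-fit term with $C + N\,\mathbb{E}_{\hat{Q}_u}[D_{\text{KL}}(\hat{Q}_{y|u}\,\|\,P_{y|u}(\boldsymbol{\theta}))]$ via the point-mass empirical conditional (the paper's Lemma on this point likewise sets $\hat{q}(y|u)=1$ so that the ``KL'' is really the negative log-likelihood, exactly the cross-entropy convention you flag), and the same second-order Taylor expansion of $D_{\text{KL}}(P_{u,y}(\boldsymbol{\theta}+\Delta\boldsymbol{\theta})\,\|\,P_{u,y}(\boldsymbol{\theta}))$ into $\tfrac12\Delta\boldsymbol{\theta}^\top\hat{\bf F}\Delta\boldsymbol{\theta}+\mathcal{O}(\|\Delta\boldsymbol{\theta}\|^3)$ with vanishing zeroth- and first-order terms. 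The one place you genuinely diverge is the link between ${\bf P}_{t|t-1}^{-1}$ and $t\,\hat{\bf F}$: the paper simply imports the relation $\hat{\bf F}_{t|t-1}=\tfrac1t{\bf P}_{t|t-1}^{-1}$ as a citation to Ollivier (2018) and substitutes it, whereas you derive it by unrolling the information-form covariance recursion under explicit standing assumptions (negligible evolution noise, diffuse prior, approximately stationary per-step Fisher). Your version is more self-contained and more honest about the approximate, conditional nature of that identity --- including the tension with the ${\bf P}_{0|0}={\bf 0}$ choice in Corollary~1, which the paper does not acknowledge --- at the cost of turning the stated equality into an asymptotic/approximate one; the paper's version is shorter but hides all of these regularity conditions inside the citation.
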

Theorem \ref{supp:theorem2} illustrates how EKF minimizes two separate KL-divergences. The first is the KL divergence between two conditional distributions and it is equivalent to the loss in $L_t^{\text{MLE}}$. The second is the KL divergence between two different parameterizations of the joint learned distribution $P_{u,y}$. This term imposes {\em trust-region} on the VBF parameters in $L_t^{\text{EKF}}$, similarly to trust-region methods in policy optimization \cite{schulman2015trust}. 

To prove Theorem \ref{supp:theorem2} let's first define the distributions of interest. We adopt the notation from \cite{martens2014new}. Assume the inputs $u$ are drawn independently from a target distribution $Q_{u}$ with density function $q(u)$, and assume the corresponding outputs $y$ are drawn from a conditional target distribution $Q_{y|u}$ with density function $q(y|u)$. The target joint distribution is $Q_{u,y}$ whose density is $q(u,y) = q(y|u)q(u)$, and the learned distribution is $P_{u,y}(\boldsymbol{\theta})$, whose density is $p(u,y|\boldsymbol{\theta}) = p(y|u, \boldsymbol{\theta})q(u)$. 

\begin{lemma}
	\label{supp:Lemma1}
	If ${\bf P}_{{\bf n}_t}$ is diagonal with diagonal elements $\sigma_i=N$, then:
	\[\frac{1}{2}  \delta({\bf u}_t; \boldsymbol{\theta}_t) ^\top  {\bf P}_{{\bf n}_t}^{-1} \delta({\bf u}_t; \boldsymbol{\theta}_t) =  C +  N \mathbb{E}_{\hat{Q}_u} [D_{\text{KL}}(\hat{Q}_{y|u} || P_{y|u}(\boldsymbol{\theta}))]\]
\end{lemma}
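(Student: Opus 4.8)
The plan is to unpack the definition of the KL divergence between the empirical conditional target distribution and the learned Gaussian conditional, observe that for a fixed-variance Gaussian the negative log-likelihood of a single observation equals a scaled squared Bellman TD error plus a $\boldsymbol{\theta}$-independent constant, and then match the two sides of the claimed identity term by term. Since the proof of Corollary~1 already establishes $\tfrac12\,\delta({\bf u}_t;\boldsymbol{\theta}_t)^\top{\bf P}_{{\bf n}_t}^{-1}\delta({\bf u}_t;\boldsymbol{\theta}_t)=\tfrac{1}{2N}\sum_{i=1}^N\delta(u_t^i;\boldsymbol{\theta}_t)^2$ when ${\bf P}_{{\bf n}_t}$ is diagonal with entries $\sigma_i=N$ (using ${\bf P}_{{\bf n}_t}^{-1}=\tfrac1N{\bf I}$), it suffices to show that $N\,\mathbb{E}_{\hat{Q}_u}\big[D_{\text{KL}}(\hat{Q}_{y|u}\,||\,P_{y|u}(\boldsymbol{\theta}))\big]$ equals this same sum of squares shifted by $C$.

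First I would make the right-hand side concrete. By the Gaussian likelihood assumption (Assumption~\ref{suppAs:ConditionalIndependance}) together with the diagonal hypothesis on ${\bf P}_{{\bf n}_t}$, the learned scalar conditional for a single input $u$ has density $p(y|u,\boldsymbol{\theta})=(2\pi N)^{-1/2}\exp\!\big(-\delta(u;\boldsymbol{\theta})^2/(2N)\big)$, so $-\log p(y(u)|u,\boldsymbol{\theta})=\tfrac12\log(2\pi N)+\delta(u;\boldsymbol{\theta})^2/(2N)$. On the training side, $\hat{Q}_u$ is the uniform empirical distribution on the batch inputs $u_t^1,\dots,u_t^N$ and each empirical conditional $\hat{Q}_{y|u_t^i}$ is the atom at the observed target $y(u_t^i)$. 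Expanding $D_{\text{KL}}(\hat{Q}_{y|u}\,||\,P_{y|u}(\boldsymbol{\theta}))=-H(\hat{Q}_{y|u})-\mathbb{E}_{\hat{Q}_{y|u}}[\log p(y|u,\boldsymbol{\theta})]$ and using that a single-atom distribution has (discrete) entropy zero, the expectation collapses to one evaluation, giving $D_{\text{KL}}(\hat{Q}_{y|u_t^i}\,||\,P_{y|u_t^i}(\boldsymbol{\theta}))=\tfrac12\log(2\pi N)+\delta(u_t^i;\boldsymbol{\theta})^2/(2N)$. Averaging over $\hat{Q}_u$, multiplying by $N$, and using $|{\bf P}_{{\bf n}_t}|=N^N$ so that $C=-\tfrac{N}{2}\log(2\pi N)$, the $\log(2\pi N)$ terms contribute exactly $-C$ and the residual is $\tfrac{1}{2N}\sum_{i=1}^N\delta(u_t^i;\boldsymbol{\theta})^2$; rearranging gives the identity.

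The only genuinely delicate point is the type mismatch in $D_{\text{KL}}(\hat{Q}_{y|u}\,||\,P_{y|u}(\boldsymbol{\theta}))$: the first argument is a discrete empirical measure while the second is a continuous density. I would handle this in the standard way used in the maximum-likelihood and natural-gradient literature (cf.\ the reference \cite{martens2014new} whose notation is adopted here), interpreting the divergence as the cross-entropy of $P_{y|u}(\boldsymbol{\theta})$ under $\hat{Q}_{y|u}$ with the entropy of the empirical target taken to be zero; this is precisely the convention under which the stated equality holds exactly rather than merely up to a further $\boldsymbol{\theta}$-independent constant. Everything else is a routine one-line Gaussian log-density computation.
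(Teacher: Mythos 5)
Your proposal is correct and follows essentially the same route as the paper's proof: both reduce the KL term to a negative log-likelihood via the atomic-empirical-distribution convention ($\hat q(y|u)=1$, i.e.\ zero entropy for the single-sample conditional), invoke the Gaussian likelihood with the diagonal covariance $\sigma_i=N$, and absorb the normalization into $C=-\tfrac{N}{2}\log(2\pi N)$. The only cosmetic difference is that you work per-sample from the outset while the paper first writes the joint $N$-dimensional Gaussian and then factors it by independence; your explicit flagging of the discrete-versus-continuous KL convention is exactly the convention the paper uses implicitly.
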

\begin{proof}
	By definition:
	\[D_{\text{KL}}(Q_{u,y} || P_{u,y}(\boldsymbol{\theta})) = \int q(u, y) \log \frac{q(u, y)}{p(u,y|\boldsymbol{\theta})} dudy\]
	This is equivalent to the expected KL divergence over the conditional distributions:
	\[\mathbb{E}_{Q_u} [D_{\text{KL}}(Q_{y|u} || P_{y|u}(\boldsymbol{\theta}))] \]
	since:
	\begin{align*}
	& \mathbb{E}_{Q_u} [D_{\text{KL}}(Q_{y|u} || P_{y|u}(\boldsymbol{\theta}))] = \int q(u) \int q(y|u) \log \frac{q(y|u)}{p(y|u,\boldsymbol{\theta})} dydu\\
	& = \int q(u, y) \log \frac{q(y|u)q(u)}{p(y|u,\boldsymbol{\theta})q(u)} dudy = D_{\text{KL}}(Q_{u,y} || P_{u,y}(\boldsymbol{\theta}))
	\end{align*}
	Since we don't have access to $Q_u$ we substitute an empirical training distribution $\hat{Q}_u$ for $Q_u$ which is given by a set $\mathcal{S}_u$ of samples from $Q_u$. Then we define:
	\begin{align*}
	\mathbb{E}_{\hat{Q}_u} [D_{\text{KL}}(Q_{y|u} || P_{y|u}(\boldsymbol{\theta}))] & = \frac{1}{|\mathcal{S}|} \sum_{u \in \mathcal{S}_u} D_{\text{KL}}(Q_{y|u} || P_{y|u}(\boldsymbol{\theta}))
	\end{align*}
	In our training setting, we only have access to a single sample $y$ from $Q_{y|u}$ for each $u \in \mathcal{S}_u$, giving an empirical training distribution $\hat{Q}_{y|u}$. Then:
	\begin{align*}
	\mathbb{E}_{\hat{Q}_u} [D_{\text{KL}}(\hat{Q}_{y|u} || P_{y|u}(\boldsymbol{\theta}))] & = \frac{1}{|\mathcal{S}|} \sum_{(u, y) \in \mathcal{S}} 1 \log \frac{1}{p(y|u,\boldsymbol{\theta})} = - \frac{1}{|\mathcal{S}|} \sum_{(u, y) \in \mathcal{S}}  \log p(y|u,\boldsymbol{\theta})
	\end{align*}
	since $\hat{q}(y|u) = 1$.
	Now, back to our EKF notations. Assume that the $N$ observations in $y({\bf u}_t)$ are independent, then:
	\[\log p(y({\bf u}_t)|\boldsymbol{\theta}) = \log \Big(\prod_{i=1}^{N} p(y(u_t^i) | \boldsymbol{\theta}) \Big) = \sum_{i=1}^{N} \log p(y|u_t^i, \boldsymbol{\theta})\]
	where we changed the notation: $p(y(u_t^i) | \boldsymbol{\theta}) = p(y|u_t^i, \boldsymbol{\theta})$. Now let's write it explicitly for Gaussian distributions: 
	\begin{align*}
	\log p(y({\bf u}_t)|\boldsymbol{\theta}) &= \log \bigg( \frac{1}{(2\pi)^{N/2} |{\bf P}_{{\bf n}_t}|^{1/2} } \exp \Big( - \frac{1}{2} \big( y({\bf u}_t) - h({\bf u}_t; \boldsymbol{\theta}_t) \big)^\top  {\bf P}_{{\bf n}_t}^{-1} \big( y({\bf u}_t) - h({\bf u}_t; \boldsymbol{\theta}_t) \big) \Big) \bigg)\\
	& = C - \frac{1}{2} \big( y({\bf u}_t) - h({\bf u}_t; \boldsymbol{\theta}_t) \big)^\top  {\bf P}_{{\bf n}_t}^{-1} \big( y({\bf u}_t) - h({\bf u}_t; \boldsymbol{\theta}_t) \big)\\
	& = C - \frac{1}{2}  \delta({\bf u}_t; \boldsymbol{\theta}_t) ^\top  {\bf P}_{{\bf n}_t}^{-1} \delta({\bf u}_t; \boldsymbol{\theta}_t)
	\end{align*}
	where $C = \log \big(\frac{1}{(2\pi)^{N/2} |{\bf P}_{{\bf n}_t}|^{1/2} } \big)$ is constant with respect to $\boldsymbol{\theta}$.
	Then we have that:
	\begin{align*}
	& \frac{1}{2}  \delta({\bf u}_t; \boldsymbol{\theta}_t) ^\top  {\bf P}_{{\bf n}_t}^{-1} \delta({\bf u}_t; \boldsymbol{\theta}_t) = C - \log p(y({\bf u}_t)|\boldsymbol{\theta})\\
	& = C -  \sum_{i=1}^{N} \log p(y|u_t^i, \boldsymbol{\theta}) = C +  N \mathbb{E}_{\hat{Q}_u} [D_{\text{KL}}(\hat{Q}_{y|u} || P_{y|u}(\boldsymbol{\theta}))]
	\end{align*}
	
	In conclusion:
	\[ \boxed{\frac{1}{2}  \delta({\bf u}_t; \boldsymbol{\theta}_t) ^\top  {\bf P}_{{\bf n}_t}^{-1} \delta({\bf u}_t; \boldsymbol{\theta}_t) =  C +  N \mathbb{E}_{\hat{Q}_u} [D_{\text{KL}}(\hat{Q}_{y|u} || P_{y|u}(\boldsymbol{\theta}))]}\]
\end{proof}

\begin{lemma}
	\label{supp:Lemma2}
	For the empirical Fisher information matrix ${\bf \hat{F}}$:
	\begin{align*}
	D_{\text{KL}} \big(P_{u,y}(\boldsymbol{\theta} + \Delta \boldsymbol{\theta})|| P_{u,y}(\boldsymbol{\theta}) \big) & = \frac{1}{2}  (\boldsymbol{\theta} - \boldsymbol{\hat{\theta}})^T {\bf \hat{F}} (\boldsymbol{\theta} - \boldsymbol{\hat{\theta}}) + \mathcal{O}(\|\Delta \boldsymbol{\theta}\|^3)
	\end{align*}
\end{lemma}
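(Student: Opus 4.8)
The plan is to recognize this as the textbook second-order Taylor expansion of a KL divergence: its value and gradient vanish at coincidence, and the Hessian there is the Fisher information matrix; the ``empirical'' qualifier merely records that we substitute the empirical input distribution $\hat{Q}_u$ (and the single realized draw of $y$) for the population quantities, exactly in the spirit of \citet{martens2014new}. Write $\boldsymbol{\theta}'=\boldsymbol{\theta}+\Delta\boldsymbol{\theta}$ for the perturbed parameter and $\boldsymbol{\hat{\theta}}$ for the fixed reference (the statement overloads $\boldsymbol{\theta}$ for both roles). First I would use the factorization $p(u,y|\boldsymbol{\theta}) = p(y|u,\boldsymbol{\theta})\,q(u)$ recorded just before Lemma~\ref{supp:Lemma1} to eliminate the $\boldsymbol{\theta}$-independent input density:
\[
D_{\text{KL}}\big(P_{u,y}(\boldsymbol{\theta}')\,\|\,P_{u,y}(\boldsymbol{\hat{\theta}})\big) = \mathbb{E}_{(u,y)\sim P_{u,y}(\boldsymbol{\theta}')}\!\big[\log p(y|u,\boldsymbol{\theta}') - \log p(y|u,\boldsymbol{\hat{\theta}})\big] \;\triangleq\; g(\Delta\boldsymbol{\theta}),
\]
so that $g$ depends on $\boldsymbol{\theta}'$ only through the conditional model and the Fisher that appears is automatically the conditional one.

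Next I would Taylor-expand $g$ about $\Delta\boldsymbol{\theta}=\mathbf{0}$, differentiating under the integral sign. We have $g(\mathbf{0})=0$ at once. For $\nabla g(\mathbf{0})$, the product-rule term carrying $\nabla p$ is multiplied by the log-ratio $\log p(y|u,\boldsymbol{\hat{\theta}})-\log p(y|u,\boldsymbol{\hat{\theta}})=0$, and the remaining term is $\int q(u)\,\nabla_{\boldsymbol{\theta}}p(y|u,\boldsymbol{\hat{\theta}})\,du\,dy=\nabla_{\boldsymbol{\theta}}\!\int p\,du\,dy=\mathbf{0}$ (the score-function identity); hence $\nabla g(\mathbf{0})=\mathbf{0}$. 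Differentiating once more, the $\nabla^2 p$-term again carries the vanishing log-ratio, leaving
\[
\nabla^2 g(\mathbf{0}) = \mathbb{E}_{(u,y)\sim P_{u,y}(\boldsymbol{\hat{\theta}})}\!\big[\nabla_{\boldsymbol{\theta}}\log p(y|u,\boldsymbol{\hat{\theta}})\,\nabla_{\boldsymbol{\theta}}\log p(y|u,\boldsymbol{\hat{\theta}})^\top\big] = \mathbb{E}_{Q_u}\!\big[\mathbf{F}_{y|u}(\boldsymbol{\hat{\theta}})\big],
\]
the expected conditional Fisher information. Thus $g(\Delta\boldsymbol{\theta}) = \tfrac{1}{2}\Delta\boldsymbol{\theta}^\top \mathbb{E}_{Q_u}[\mathbf{F}_{y|u}(\boldsymbol{\hat{\theta}})]\,\Delta\boldsymbol{\theta} + \mathcal{O}(\|\Delta\boldsymbol{\theta}\|^3)$, and replacing $Q_u$ by $\hat{Q}_u$ over the sampled inputs and each inner $y|u$-expectation by its realized observation turns $\mathbb{E}_{Q_u}[\mathbf{F}_{y|u}(\boldsymbol{\hat{\theta}})]$ into ${\bf \hat{F}} = \tfrac{1}{|\mathcal{S}|}\sum_{(u,y)\in\mathcal{S}}\nabla_{\boldsymbol{\theta}}\log p(y|u,\boldsymbol{\hat{\theta}})\,\nabla_{\boldsymbol{\theta}}\log p(y|u,\boldsymbol{\hat{\theta}})^\top$, giving the claimed identity with $\boldsymbol{\theta}'-\boldsymbol{\hat{\theta}}=\Delta\boldsymbol{\theta}$.

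I expect the main obstacle to be bookkeeping rather than anything deep. Two points need care: (i) justifying the interchange of differentiation and integration and a genuine third-order remainder requires $\log p(y|u,\cdot)$ to be $C^3$ with derivatives dominated by an integrable envelope, which I would state as a standing regularity assumption (satisfied for the Gaussian conditional model of Assumption~\ref{suppAs:ConditionalIndependance} with smooth $h$); and (ii) one must observe that the object actually arising as the Hessian is the outer-product-of-scores form taken under the \emph{model} $P_{u,y}(\boldsymbol{\hat{\theta}})$ itself, so no separate ``Fisher equals negative expected Hessian'' equality is invoked, and the gap between evaluating ${\bf \hat{F}}$ at $\boldsymbol{\hat{\theta}}$ versus at $\boldsymbol{\theta}'$ is $\mathcal{O}(\|\Delta\boldsymbol{\theta}\|)$ and, multiplying a quadratic, is absorbed into the $\mathcal{O}(\|\Delta\boldsymbol{\theta}\|^3)$ term.
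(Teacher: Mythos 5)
Your proposal is correct and follows essentially the same route as the paper: a second-order Taylor expansion of the KL divergence about the reference parameter, with the constant and linear terms killed by the score-function identity (differentiating $\int p(y|u,\boldsymbol{\theta})\,dy=1$) and the quadratic term identified with the Fisher information via the same $\nabla^2 p$-versus-outer-product-of-scores decomposition the paper uses. The only small divergence is in the definition of $\hat{\bf F}$: the paper empiricalizes only over $u$ (keeping the inner expectation $\mathbb{E}_{P_{y|u}(\boldsymbol{\hat{\theta}})}$ over the model), whereas you also replace the inner $y|u$-expectation by the realized observation, but this does not affect the validity of the stated approximation.
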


\begin{proof}
	According to the KL-divergence definition:
	\begin{align*}
	& D_{\text{KL}} \big(P_{u,y}(\boldsymbol{\theta} + \Delta \boldsymbol{\theta})|| P_{u,y}(\boldsymbol{\theta}) \big)  =  \int p(u,y| \boldsymbol{\theta} + \Delta \boldsymbol{\theta}) \log p(u, y|\boldsymbol{\theta} + \Delta \boldsymbol{\theta}) dudy  - \int p(u, y| \boldsymbol{\theta} + \Delta \boldsymbol{\theta}) \log p(u,y| \boldsymbol{\theta}) dudy.
	\end{align*}
	
	According to Taylor expansion:
	\begin{align*}
	\nonumber \log p(u,y|\boldsymbol{\theta}) &= \log p(u,y | \boldsymbol{\theta} + \Delta \boldsymbol{\theta} ) - {\bf g}^T \Delta \boldsymbol{\theta} + \frac{1}{2} \Delta \boldsymbol{\theta}^T {\bf H} \Delta \boldsymbol{\theta} + \mathcal{O}(\|\Delta \boldsymbol{\theta}\|^3)
	\end{align*} 
	where ${\bf g}$ is the gradient of $\log p(u,y|\boldsymbol{\theta})$ at the point $\boldsymbol{\theta} + \Delta \boldsymbol{\theta}$:
	\[{\bf g} = \nabla_{\boldsymbol{\theta}} \log p(u,y| \boldsymbol{\theta})_{|\boldsymbol{\theta} + \Delta \boldsymbol{\theta}}.\]
	Note that $p(u,y| \boldsymbol{\theta})=p(y| u, \boldsymbol{\theta} + \Delta \boldsymbol{\theta}) q(u)$. Since $q(u)$ does not depend on $\boldsymbol{\theta}$ then $\nabla_{\boldsymbol{\theta}} \log p(u,y| \boldsymbol{\theta}) = \nabla_{\boldsymbol{\theta}} \log p(y|u, \boldsymbol{\theta})$. Therefore, we can write ${\bf g}$ as:
	\begin{equation*}
	{\bf g} = \nabla_{\boldsymbol{\theta}} \log p(y| u, \boldsymbol{\theta})_{|\boldsymbol{\theta} + \Delta \boldsymbol{\theta}} = \begin{bmatrix}
	\frac{\partial \log p(y|u, \boldsymbol{\theta} + \Delta \boldsymbol{\theta})}{\partial \theta_1} \\ \vdots \\ \frac{\partial \log  p(y|u, \boldsymbol{\theta} + \Delta \boldsymbol{\theta})}{\partial \theta_d}
	\end{bmatrix}
	\end{equation*}
	Similarly, the Hessian ${\bf H}$ can be written as:
	\begin{align*}
	{\bf H} & = \nabla^2_{\boldsymbol{\theta}} \log p(u,y|\boldsymbol{\theta})_{|\boldsymbol{\theta} + \Delta \boldsymbol{\theta}} = \nabla^2_{\boldsymbol{\theta}} \log p(y|u, \boldsymbol{\theta})_{|\boldsymbol{\theta} + \Delta \boldsymbol{\theta}} = \begin{bmatrix}
	\frac{\partial^2 \log p(y|u, \boldsymbol{\theta}+ \Delta \boldsymbol{\theta})}{\partial\theta_1^2} & \ldots & \frac{\partial^2 \log p(y|u, \boldsymbol{\theta} + \Delta \boldsymbol{\theta})}{\partial\theta_1 \partial\theta_d} \\ \vdots & \vdots & \vdots \\ \frac{\partial^2 \log p(y|u, \boldsymbol{\theta} + \Delta \boldsymbol{\theta})}{\partial\theta_d \partial\theta_1} & \ldots & \frac{\partial^2 \log p(y|u, \boldsymbol{\theta} + \Delta \boldsymbol{\theta})}{\partial \theta_d^2}
	\end{bmatrix}
	\end{align*}
	We use this Taylor expansion in the KL-divergence term, and use the notation: $\boldsymbol{\hat{\theta}} = \boldsymbol{\theta} + \Delta \boldsymbol{\theta} \quad \rightarrow \boldsymbol{\theta} -  \boldsymbol{\hat{\theta}} = - \Delta \boldsymbol{\theta}$.
	{\small 
		\begin{align*}
		& D_{\text{KL}} \big(P_{u,y}(\boldsymbol{\theta} + \Delta \boldsymbol{\theta})|| P_{u,y}(\boldsymbol{\theta}) \big) 
		=  \int p(u,y| \boldsymbol{\hat{\theta}}) \log p(u,y| \boldsymbol{\hat{\theta}}) dudy - \int p(u,y| \boldsymbol{\hat{\theta}}) \Big( \log p(u,y|  \boldsymbol{\hat{\theta}} ) - {\bf g}^T \Delta \boldsymbol{\theta} + \frac{1}{2} \Delta \boldsymbol{\theta}^T {\bf H} \Delta \boldsymbol{\theta} \Big)  dudy  + \mathcal{O}(\|\Delta \boldsymbol{\theta}\|^3)  \\
		& =  \underbrace{\int p(u,y| \boldsymbol{\hat{\theta}}) \log p(u,y| \boldsymbol{\hat{\theta}}) dudy - \int p(u,y| \boldsymbol{\hat{\theta}}) \log p(u,y| \boldsymbol{\hat{\theta}} ) dudy }_{=0} \\
		& + \underbrace{\int p(u,y|\boldsymbol{\hat{\theta}}) \sum_{i=1}^{d} \frac{\partial \log p(y|u, \boldsymbol{\hat{\theta}})}{\partial \theta_i} \Delta \theta_i dudy}_{=0, see (*)}  \underbrace{- \frac{1}{2} \int  p(u,y| \boldsymbol{\hat{\theta}})  \sum_{i=1}^{d} \sum_{j=1}^{d} \Delta \theta_i \Delta \theta_j \frac{\partial^2 \log p(y|u,\boldsymbol{\hat{\theta}})}{\partial\theta_i \partial\theta_j}  dudy}_{=\frac{1}{2}  \Delta \boldsymbol{\theta}^T {\bf F} \Delta \boldsymbol{\theta}, see (**)} + \mathcal{O}(\|\Delta \boldsymbol{\theta}\|^3)\\
		& = \frac{1}{2}  \Delta \boldsymbol{\theta}^T {\bf F} \Delta \boldsymbol{\theta} + \mathcal{O}(\|\Delta \boldsymbol{\theta}\|^3)\\
		\end{align*}
	}
	We explain (*), according to regularities in the Leibniz integral rule (switching derivation and integral):
	{\small 
		\begin{align*}
		\int p(u,y| \boldsymbol{\hat{\theta}}) \sum_{i=1}^{d} \frac{\partial \log p(y|u,\boldsymbol{\hat{\theta}})}{\partial \theta_i} \Delta \theta_i dudy & = \int q(u) \int p(y|u, \boldsymbol{\hat{\theta}}) \sum_{i=1}^{d} \frac{1}{p(y|u, \boldsymbol{\hat{\theta}})} \frac{\partial p(y|u, \boldsymbol{\hat{\theta}})}{\partial \theta_i} \Delta \theta_i dydu \\
		& = \int q(u) \sum_{i=1}^{d} \Delta \theta_i   \underbrace{\frac{\partial}{\partial \theta_i} \underbrace{\int p(y|u,\boldsymbol{\hat{\theta}}) dy}_{=1}}_{=0} du = 0
		\end{align*}
	}
	We explain (**):
	{\small 
		\begin{align*}
		& - \frac{1}{2} \int  p(u,y| \boldsymbol{\hat{\theta}})  \sum_{i=1}^{d} \sum_{j=1}^{d} \Delta \theta_i \Delta \theta_j \frac{\partial^2 \log p(y|u, \boldsymbol{\hat{\theta}})}{\partial\theta_i \partial\theta_j} du dy \\
		& = - \frac{1}{2} \int q(u) \sum_{i=1}^{d} \sum_{j=1}^{d} \Delta \theta_i \Delta \theta_j \cdot \int  p(y|u, \boldsymbol{\hat{\theta}})   \frac{\partial }{\partial\theta_i} \Big(\frac{1}{ p(y|u, \boldsymbol{\hat{\theta}})}\frac{\partial p(y|u, \boldsymbol{\hat{\theta}})}{ \partial\theta_j} \Big)  dy du\\
		& = - \frac{1}{2} \int q(u) \sum_{i=1}^{d} \sum_{j=1}^{d} \Delta \theta_i \Delta \theta_j \int  p(y|u, \boldsymbol{\hat{\theta}})   \Big( \frac{1}{ p(y|u, \boldsymbol{\hat{\theta}})} \frac{\partial^2 p(y|u, \boldsymbol{\hat{\theta}})}{\partial\theta_i \partial\theta_j}  - \frac{1}{ p(y|u, \boldsymbol{\hat{\theta}})^2} \frac{\partial p(y|u, \boldsymbol{\hat{\theta}}) }{\partial\theta_i} \frac{\partial p(y|u, \boldsymbol{\hat{\theta}})}{ \partial\theta_j}   \Big) dy du\\
		& = - \frac{1}{2} \int q(u) \sum_{i=1}^{d} \sum_{j=1}^{d} \Delta \theta_i \Delta \theta_j \int  \Big( \frac{\partial^2 p(y|u, \boldsymbol{\hat{\theta}})}{\partial\theta_i \partial\theta_j}  - p(y|u, \boldsymbol{\hat{\theta}}) \frac{\partial \log p(y|u, \boldsymbol{\hat{\theta}}) }{\partial\theta_i} \frac{\partial \log p(y|u, \boldsymbol{\hat{\theta}})}{ \partial\theta_j} \Big)  dy du\\
		& = - \frac{1}{2} \int q(u) \sum_{i=1}^{d} \sum_{j=1}^{d} \Delta \theta_i \Delta \theta_j \underbrace{\frac{\partial^2}{\partial\theta_i \partial\theta_j} \underbrace{\int p(y|u, \boldsymbol{\hat{\theta}}) dy}_{=1}}_{=0} du  + \frac{1}{2} \int q(u) \sum_{i=1}^{d} \sum_{j=1}^{d} \Delta \theta_i \Delta \theta_j \mathbb{E}_{P_{y|u}(\boldsymbol{\hat{\theta}})} \Big[  \frac{\partial \log p(y|u, \boldsymbol{\hat{\theta}}) }{\partial\theta_i} \frac{\partial \log p(y|u, \boldsymbol{\hat{\theta}})}{ \partial\theta_j} \Big] du\\
		& = \frac{1}{2}  \Delta \boldsymbol{\theta}^T {\bf F} \Delta \boldsymbol{\theta}
		\end{align*}
	}
	where
	\[{\bf F}_{ij} = \mathbb{E}_{Q_{u}} \Bigg[  \mathbb{E}_{P_{y|u}(\boldsymbol{\hat{\theta}})} \Big[  \frac{\partial \log p(y|u, \boldsymbol{\hat{\theta}}) }{\partial\theta_i} \frac{\partial \log p(y|u, \boldsymbol{\hat{\theta}})}{ \partial\theta_j} \Big] \Bigg]\]
	Since we don't have access to $Q_u$ we will use the empirical training distribution $\hat{Q}_u$:
	\[{\bf \hat{F}}_{ij} = \frac{1}{|\mathcal{S}|} \sum_{u \in \mathcal{S}_u}   \mathbb{E}_{P_{y|u}(\boldsymbol{\hat{\theta}})} \Big[  \frac{\partial \log p(y|u, \boldsymbol{\hat{\theta}}) }{\partial\theta_i} \frac{\partial \log p(y|u, \boldsymbol{\hat{\theta}})}{ \partial\theta_j} \Big] \]
	We received that:
	\begin{align*}
	D_{\text{KL}} \big(P_{u,y}(\boldsymbol{\theta} + \Delta \boldsymbol{\theta})|| P_{u,y}(\boldsymbol{\theta}) \big) = 
	\frac{1}{2}  (\boldsymbol{\theta} - \boldsymbol{\hat{\theta}})^T {\bf \hat{F}} (\boldsymbol{\theta} - \boldsymbol{\hat{\theta}}) + \mathcal{O}(\|\Delta \boldsymbol{\theta}\|^3)
	\end{align*}
\end{proof}
Now we can summarize the proof for Theorem 2: 
\begin{proof}
	Adding the relationship from \cite{ollivier2018online}: ${\bf \hat{F}}_{t|t-1} = \frac{1}{t} {\bf P}_{t|t-1}^{-1}$, and combining the results from Lemma \ref{supp:Lemma1} and Lemma \ref{supp:Lemma2}, our objective function can be approximated as:
	{\small 
		\begin{align*}
		& L^{\text{EKF}}_t(\boldsymbol{\theta}_t)	 = \frac{1}{2}  \delta({\bf u}_t; \boldsymbol{\theta}_t)^\top {\bf P}_{{\bf n}_t}^{-1}  \delta({\bf u}_t; \boldsymbol{\theta}_t) +  \frac{1}{2}(\boldsymbol{\theta}_t - \boldsymbol{\hat{\theta}}_{t|t-1})^\top {\bf P}_{t|t-1}^{-1} (\boldsymbol{\theta}_t - \boldsymbol{\hat{\theta}}_{t|t-1}) \\
		& = C +  N \mathbb{E}_{\hat{Q}_u} [D_{\text{KL}}(\hat{Q}_{y|u} || P_{y|u}(\boldsymbol{\theta}))]  +  \frac{t}{2}(\boldsymbol{\theta}_t - \boldsymbol{\hat{\theta}}_{t|t-1})^\top {\bf \hat{F}}_{t|t-1} (\boldsymbol{\theta}_t - \boldsymbol{\hat{\theta}}_{t|t-1})\\
		& \approx C +  N \mathbb{E}_{\hat{Q}_u} [D_{\text{KL}}(\hat{Q}_{y|u} || P_{y|u}(\boldsymbol{\theta}))]  +  t \cdot D_{\text{KL}} \big(P_{u,y}(\boldsymbol{\theta} + \Delta \boldsymbol{\theta})|| P_{u,y}(\boldsymbol{\theta}) \big) \\
		\end{align*}
	}
	which completes the proof.
\end{proof}

\newpage 
\section{KTD and GPTD models}
For completeness, we present here the Kalman Temporal Difference (KTD) algorithm \cite{geist2010kalman} and the Gaussian Process Temporal Difference (GPTD) algorithm \cite{engel2003bayes,engel2005reinforcement}, which we discussed and compared to KOVA in the main paper.

KTD formulates the parameter estimation problem through the following model: 
\begin{equation*}
\label{eq:UKF}
\begin{dcases}
\boldsymbol{\theta}_t = \boldsymbol{\theta}_{t-1} + {\bf v}_t\\
r_t = \begin{dcases}
\hat{V}(s_t; \boldsymbol{\theta}_t) - \gamma \hat{V}(s_{t+1}; \boldsymbol{\theta}_t) + n_t \\
\hat{Q}(s_t, a_t; \boldsymbol{\theta}_t) - \gamma \hat{Q}(s_{t+1}, s_{t+1}; \boldsymbol{\theta}_t) + n_t \\
\hat{Q}(s_t, a_t; \boldsymbol{\theta}_t) - \gamma \max_{a} \hat{Q}(s_{t+1}, a; \boldsymbol{\theta}_t) + n_t
\end{dcases} 
\end{dcases}
\end{equation*} 
where $\boldsymbol{\theta}_t \in \mathbb{R}^{d \times 1}$ is a parameter vector evaluated at time $t$, ${\bf v}_t$ is additive white evolution noise with covariance ${\bm P}_{{\bm v}_t}$ and $n_t$ is additive white observation noise with variance $P_{n_t}$. $\hat{V}(\cdot; \boldsymbol{\theta})$ and $\hat{Q}(\cdot; \boldsymbol{\theta})$  are non-linear approximations of the state value function and state-action value function, respectively, with parameters $\boldsymbol{\theta}$. 
In KTD, both the rewards $r_t$ and the parameters $\boldsymbol{\theta}$ are modeled as RVs.

GPTD formulates the parameter estimation problem through the following model: 
\begin{equation*}
\label{eq:GPTD}
\begin{cases}
\boldsymbol{\theta}_t = \boldsymbol{\theta}_{t-1}\\
{\small \begin{pmatrix}
	r_1\\ \vdots \\ r_t
	\end{pmatrix} = \begin{pmatrix}
	1 & - \gamma & 0 & \cdots & \cdots\\ 0 & 1 & -\gamma & \cdots & 0\\ \vdots & \ddots & \ddots & \ddots & \vdots\\ 0 & \cdots & \cdots & 1 & -\gamma
	\end{pmatrix} \begin{bmatrix}
	\phi(s_1)^\top \\ \vdots \\ \phi(s_t)^\top \\ \phi(s_{t+1})^\top
	\end{bmatrix} \boldsymbol{\theta}_t + 
	\begin{pmatrix}
	n_1\\ \vdots \\ n_t
	\end{pmatrix}}
\end{cases}
\end{equation*}
where $\phi(s_t) \in \mathbb{R}^{d \times 1}$ are state features. In GPTD, the noise $n_j$ is assumed white, Gaussian and of variance $\sigma_j$, and the prior over parameters is assumed to follow a normal distribution. 

\subsection{Motivating example}
\begin{figure}[H]
	\centering{
		\includegraphics[width=.3\linewidth,keepaspectratio]{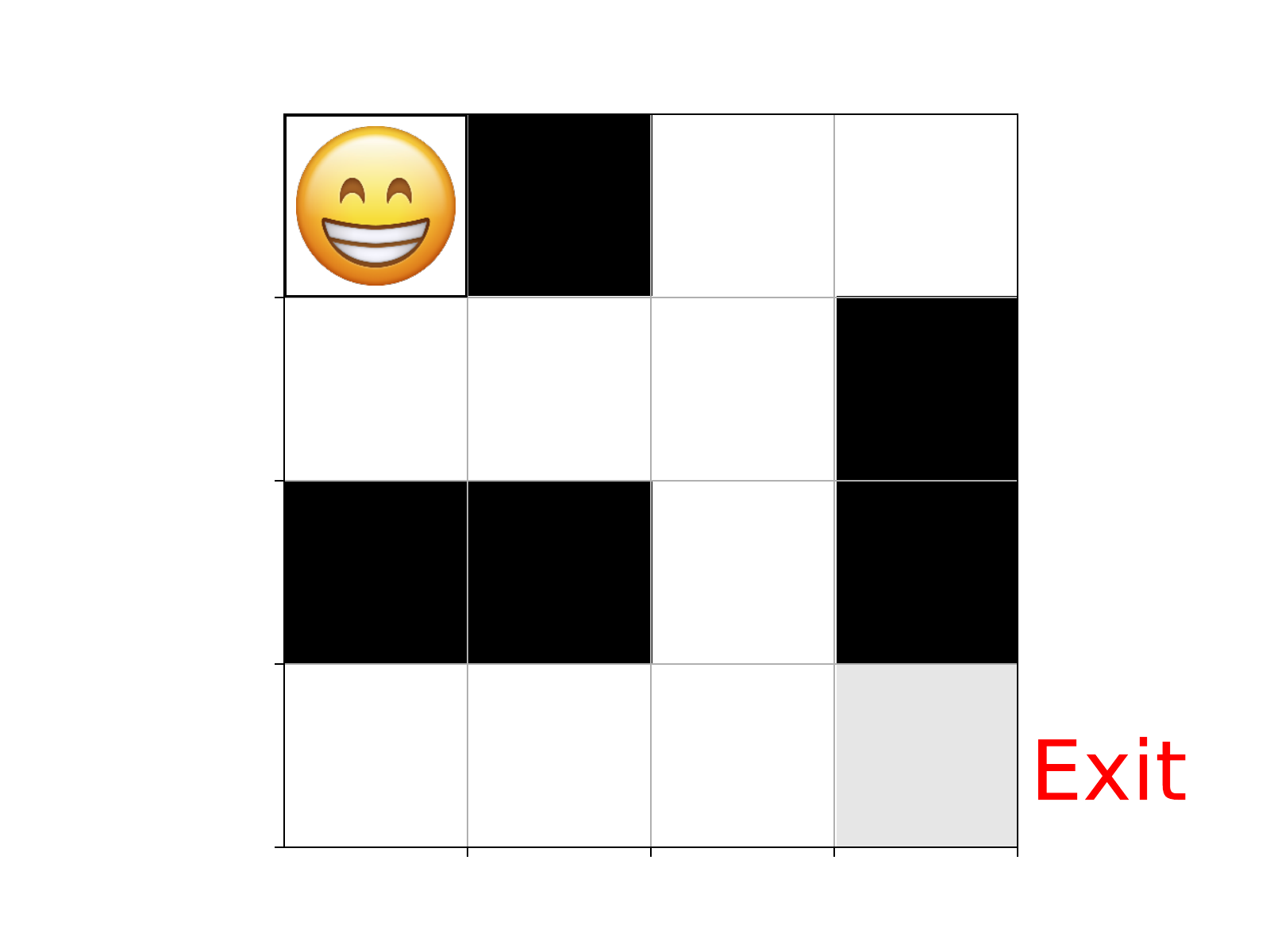}} 
	\caption{The environment we used in the motivating example is a ($4\!\times\!4$)-maze.} 
	\label{fig:maze_img_4x4}
\end{figure}
In our motivating example (Figure 1 in the main paper) we tested KOVA vs KTD on a $(4\times4)$-maze (Figure \ref{fig:maze_img_4x4}). A full description of this environment can be found in Section \ref{supp:MazeEnv}. The VBF (Q-function) is approximated with a fully connected MultiLayer Perceptron (MLP): The input size is $16$ ($4\times4$ image); one hidden layer of $16$ units and ReLU nonlinearity; output size is $4$ (number of actions). The total number of parameters in this network is $d=340$. The agent uses a discount factor of $\gamma=0.95$ and an $\epsilon$-greedy policy for selecting actions with $\epsilon=0.1$. For KTD we set batch-size $N=1$ (this is obligated by KTD), $P_{n_t}=1$, ${\bf P}_{0|0} = 10{\bf I}$ and $\eta_{\text{KTD}}=0.01$ where ${\bf P}_{{\bf v}_t} = \eta_{\text{KTD}}{\bf P}_{t-1|t-1}$. For KOVA we set batch-size $N=32$, ${\bf P}_{{\bf n}_t}=N{\bf I}$, ${\bf P}_{0|0} = {\bf I}$ and $\eta=0.01$ where ${\bf P}_{{\bf v}_t} = (\eta /\ 1-\eta) {\bf P}_{t-1|t-1}$. The sample generator $\mathcal{R}$ is an experience buffer that contains transitions form different policies. 

{\bf Running time}: We trained both KTD and KOVA for $5,000$ timesteps. The running time for KTD was $13,716$ seconds while for KOVA it was $577$ seconds. The main reason for this extreme difference in running time is due to the number of feed forward passes required by KTD in each optimization step, as explained in the main paper.

\newpage
\section{Visual Illustration of KOVA} 
We add here some visual illustrations for KOVA. In Figure \ref{fig:kalman_distribution_diagram} the Kalman perspective for policy evaluation is presented. In Figure \ref{fig:kalman_block_diagram} a block diagram for estimating the VBF parameters is presented.  
\begin{figure}[H]
	\centering{
		\includegraphics[width=.6\linewidth,keepaspectratio]{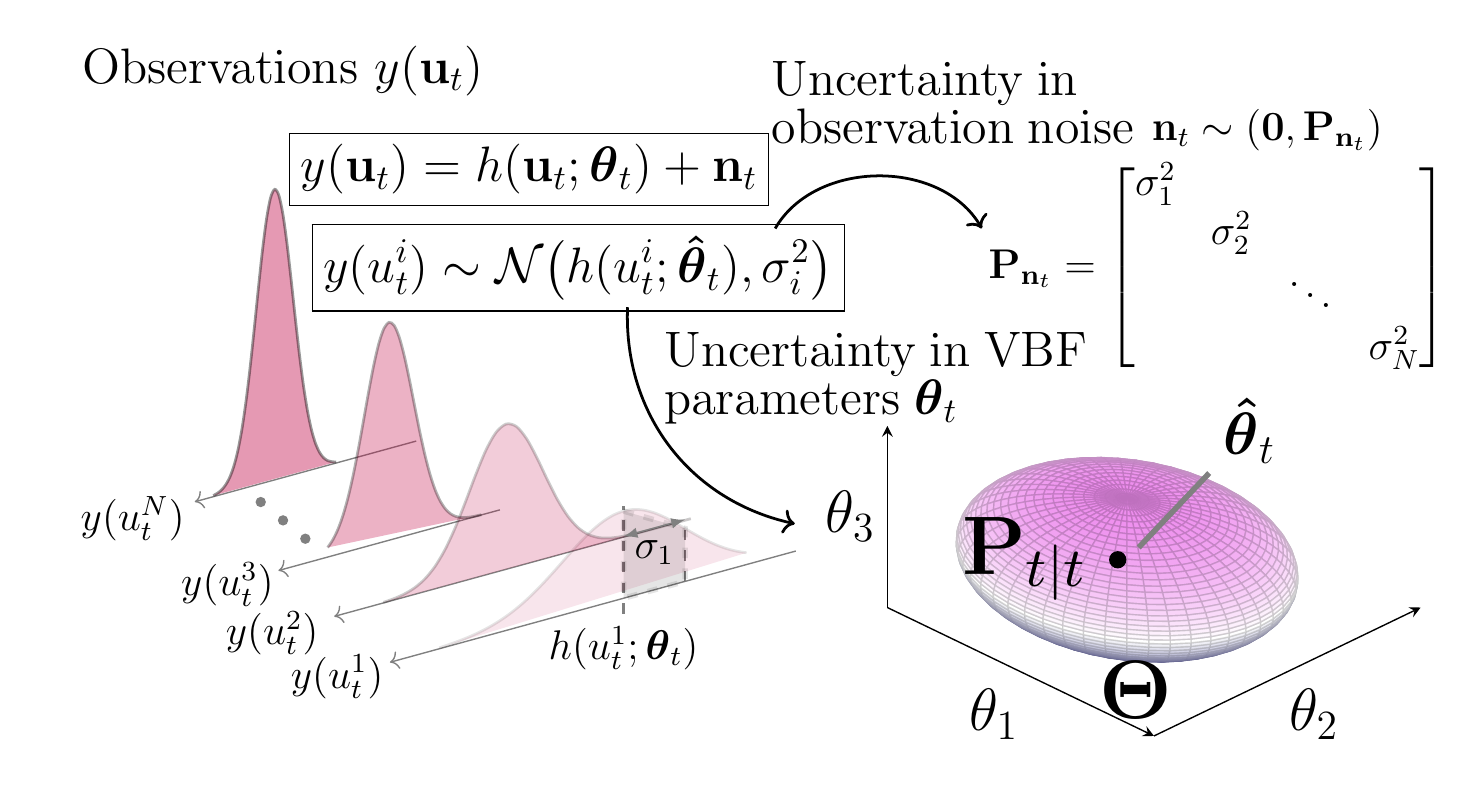}} 
	\caption{A Kalman perspective for the policy evaluation problem in RL. The randomness of a noisy observation $y(u_t)$ originates from two sources: (i) the randomness of its mean, the VBF $h(u_t; \boldsymbol{\theta}_t)$ through the dependency on the random parameters $\boldsymbol{\theta}_t$. (ii) the random zero-mean noise ${\bf n}_t$ which relates to stochastic transitions and to the possibly random policy.} 
	\label{fig:kalman_distribution_diagram}
\end{figure}

\begin{figure}[H]
	\centering{
		\includegraphics[width=0.7\linewidth,height=0.5\textheight,keepaspectratio]{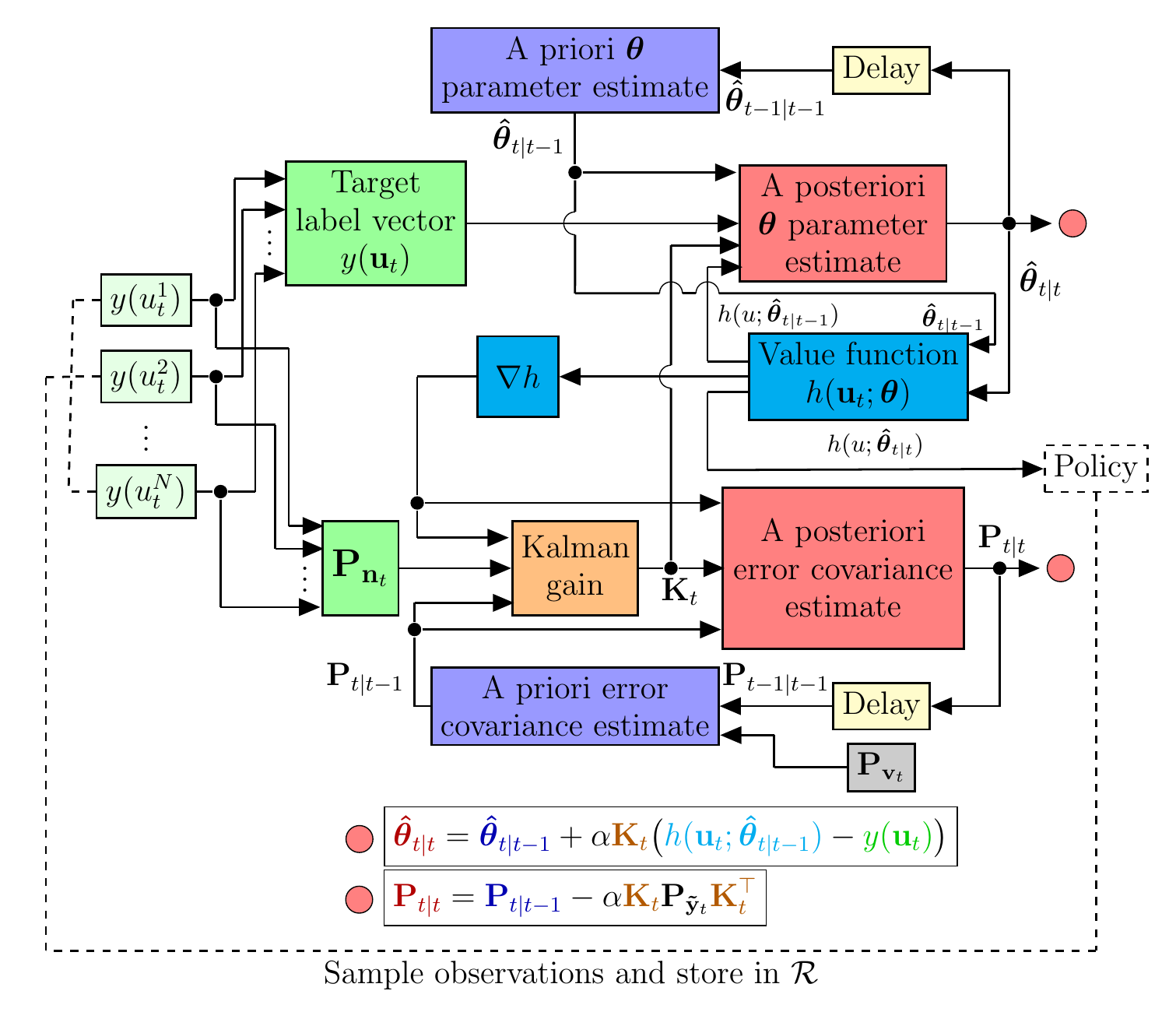}}
	\caption{KOVA optimizer block diagram. KOVA receives as input the initial general prior ${\bf P}_{0|0}$ and the covariances ${\bf P}_{{\bf v}_t}$ and ${\bf P}_{{\bf n}_t}$. It initializes $\boldsymbol{\hat{\theta}}_{0|0}$ with small random values or with the VBF parameters of the previous policy. For every $t$, it samples $N$ target labels from $\mathcal{R}$ (see Table 1 in main paper for target label examples), constructs $y({\bf u}_t)$ and $ h({\bf u}_t$, $\boldsymbol{\hat{\theta}}_{t|t-1})$ and computes $\nabla_{\boldsymbol{\theta}_t} h({\bf u}_t; \boldsymbol{\hat{\theta}}_{t|t-1})$ and the Kalman gain ${\bf K}_t$. Then it updates and outputs the MAP parameters estimator $\boldsymbol{\hat{\theta}}_{t|t}$ and the error covariance matrix ${\bf P}_{t|t}$.}
	\label{fig:kalman_block_diagram}
\end{figure}

\newpage
\section{Experimental details} 
\subsection{Hyper-parameters and grid search} 
\label{supp:hyper}
In our KOVA implementations we used a grid-search over the following hyper-parameters:
\begin{itemize}
	\item KOVA learning rate=\{1.0, 0.1, 0.01\}.
	\item ${\bf P}_{{\bf n}_t}$ type = \{batch-size, max-ratio\}. The {\it batch-size} setting is a diagonal matrix with $\sigma_i=\sigma=N$. The {\it max-ratio} setting is a diagonal matrix with $\sigma_i=N \max (1, \frac{1}{\frac{\pi_{\text{old}} (a_i|s_i)}{\pi_{\text{new}}(a_i|s_i)} + \epsilon_1})$ with $\epsilon_1=10^{-5}$.
	\item $\eta$=\{0.1, 0.01, 0.001\}, where ${\bf P}_{{\bf v}_t} = \frac{\eta}{1-\eta}{\bf P}_{t-1|t-1}$ with $\eta$ being a small number that controls the amount of fading memory.
\end{itemize} 

For Adam optimizer we tried different learning rates: Adam learning rate=$\{10^{-3}, 3 \cdot 10^{-4}, 10^{-4}\}$. For each learning rate value we tried both a constant setting and a decaying setting. 

The final hyper-parameters that we chose in our experiments appear in the tables below.

\subsection{Maze environment}
\label{supp:MazeEnv}
In the maze experiment we used a ($10\!\times\!10$)-maze as illustrated in Figure \ref{fig:maze_img}. \footnote{Our experiment is based on a Q-learning maze implementation from https://www.samyzaf.com/ML/rl/qmaze.html.}
\begin{figure}[H]
	\centering{
		\includegraphics[width=.6\linewidth,keepaspectratio]{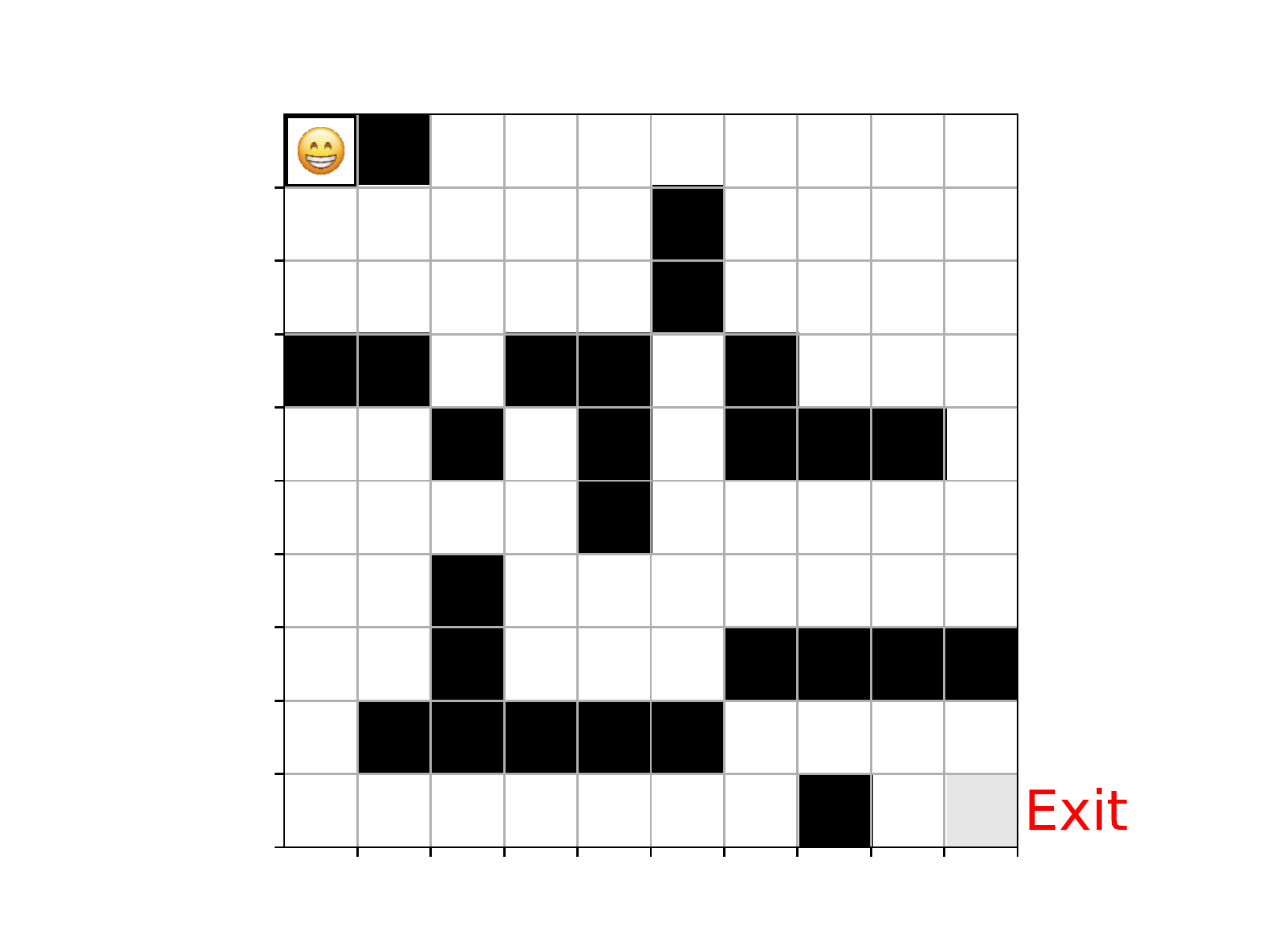}} 
	\caption{The environment we used for the maze experiment is a ($10\!\times\!10$)-maze. The agent is randomly placed at one of the free (white) cells of the maze. The bottom-right corner is the exit.} 
	\label{fig:maze_img}
\end{figure}

{\bf State space}: States are represented as images of the maze. In our $(10\!\times\!10)$-maze, $s_t \in \mathbb{R}^{10\times10}$.

{\bf Action space}: The action space is discrete with four possible actions: 'top', 'down', 'right' or 'left.

{\bf Rewards}: The agent receives $-0.04$ reward for arriving to a new cell; it receives $-0.25$ reward for returning to pre-visited cells; it receives a reward of $1$ when arriving to the exit at the bottom-right corner of the maze.  

{\bf Experiment setting} An episode begins when an agent is randomly placed at one of the maze free cells. The agent travels in the maze where its goal is to arrive to the exit. A success is defined if the agent arrives to the exit at the bottom-right corner. A loss is defined when the total reward is below $-50$ and the agent haven't arrived yet to the exit. In both cases the episode ends. The VBF (Q-function) is approximated with a fully connected MLP: The input size is $100$ ($10\times10$ image);  one hidden layer of $100$ units and ReLU nonlinearity; output size is $4$ (number of actions) . The total number of parameters in this network is $d=10,504$. The agent uses an $\epsilon$-greedy policy for selecting actions with $\epsilon=0.1$. For KOVA, ${\bf P}_{0|0} = {\bf I}$ and the sample generator $\mathcal{R}$ is an experience buffer that contains transitions form different policies.

We used double Q-learning \cite{van2016deep}, an off-policy algorithm, and tested KOVA vs. Adam for learning the Q-function parameters.  In Tables \ref{Qlearning_table} and \ref{Kalman_for_Qleaning_table} we present the hyper parameters for this experiments.

\begin{table*}[htp]
	\parbox[t]{.55\linewidth}{
		\centering
		\caption{Double Q-learning hyper-parameters used for maze.}
		\begin{tabular}{|p{42mm}|p{47mm}|} 
			\hline
			{\bf Hyper-parameter} & {\bf Value} \\ \hline \hline
			Minibatch size & $32$ \\ \hline
			Adam learning rate & few values were tested: \newline $\{10^{-3}, 3 \cdot 10^{-4}\text{-decaying}, 10^{-4}\}$  \\ \hline
			Discount $(\gamma)$ & $0.95$ \\ \hline
			Total timesteps & $100,000$ \\ \hline
			$t_\text{update}$ (Target network  \newline update every $t_\text{update}$ timesteps) & $200$ \\ \hline
		\end{tabular}
		\label{Qlearning_table}
	} \quad \quad
	\parbox[t]{.35\linewidth}{
		\centering
		\caption{KOVA hyper-parameters used for Q-function optimization in maze.}
		\begin{tabular}{|l|p{35mm}|}
			\hline
			{\bf Hyper-parameter} & {\bf Value} \\ \hline \hline
			KOVA learning rate & $1.0$ \\ \hline
			${\bf P}_{{\bf n}_t}$ type & batch-size \\ \hline
			$\eta$ &  few values were tested: \newline $\{0.1, 0.01, 0.001\}$  \\ \hline
		\end{tabular}
		\label{Kalman_for_Qleaning_table}
	}
\end{table*} 

{\bf Running time}: In Table \ref{clock_time_maze} we compare between the running time of the algorithms we tested on the maze environment (presented in Figure 2 in the main paper). We can see that the running time of KOVA is longer by 29\% compared to Adam optimizer.

\begin{table*}[htp]
	\centering
	\caption{Running time (seconds) for maze experiment. We trained over $100,000$ timesteps.}
	\begin{tabular}{|c|c|}
		\hline
		& {\bf ${\bf (10\!\times\!10)}$-maze} \\ \hline \hline
		{\bf Adam} & 7710 \\ \hline
		{\bf KOVA} & 9910 \\ \hline
		{\bf KOVA increased time percentage (\%)} & $29\%$ \\ \hline
	\end{tabular}
	\label{clock_time_maze}
\end{table*}

\subsection{Mujoco environment}

Our experiments are based on the baselines implementation  \cite{baselines} for PPO, TRPO and ACKTR, and on the stable-baseline implementation \cite{stable-baselines} for SAC. We used their default hyper parameters, and only changed the optimizer for the value function from Adam to KOVA. For brevity, we bring here the network architecture and the hyper parameters for each algorithm. 

{\bf PPO:}
Following \citep{schulman2017proximal}, the policy network is a fully-connected MLP with two hidden layers, 64 units and tanh nonlinearities. The output of the policy network is the mean and standard deviations of a Gaussian distribution of actions for a given (input) state.
The value network is a fully-connected MLP with two hidden layers, 64 units and tanh nonlinearities. The output of the value network is a scalar, representing the value function for a given (input) state. The policy loss for PPO is $\mathbb{\hat{E}}_{\phi} \big[ \min(r_t(\phi) \hat{A}_t, \text{clip} (r_t(\phi)) , 1-\epsilon_p, 1+\epsilon_p) \hat{A}_t \big] $ where $r_t(\phi) = \frac{\pi_\phi (a_t|s_t)}{\pi_{\phi_\text{old}} (a_t|s_t)}$, $\phi$ are the policy parameters and $\hat{A}_t$ is the GAE estimator for the advantage function \cite{schulman2015high}. The policy entropy is $\mathbb{\hat{E}}_t [\log(\sigma_{\pi_{\phi}} \sqrt{2 \pi e })]$. For KOVA, we set the initial prior ${\bf P}_{0|0} = {\bf I}$ and the sample generator $\mathcal{R}$ contains transitions from the current policy. In Tables \ref{PPO_table} and \ref{Kalman_for_PPO_table} we present the hyper parameters for the PPO experiments. The Horizon represents the number of timesteps per each policy rollout. 

{\bf TRPO:}
The policy network and the value network are the same as described for PPO, only with 32 units instead of 64. The policy loss for TRPO is $\mathbb{\hat{E}}_{\phi} \big[ \frac{\pi_\phi (a_t|s_t)}{\pi_{\phi_\text{old}} (a_t|s_t)}\big] \hat{A}_t$, where $\phi$ are the policy parameters and $\hat{A}_t$ is the GAE estimator. The policy entropy is $\mathbb{\hat{E}}_t [\log(\sigma_{\pi_{\phi}} \sqrt{2 \pi e })]$. For KOVA, we set the initial prior ${\bf P}_{0|0} = {\bf I}$ and the sample generator $\mathcal{R}$ contains transitions from the current policy. In Tables \ref{TRPO_table} and \ref{Kalman_for_TRPO_table} we present the hyper parameters for the TRPO experiments.

{\bf SAC:}
Following \citep{haarnoja2018soft}, the policy network is a fully-connected MLP with two hidden layers, 64 units and ReLU nonlinearities. The output of the policy network is the mean and standard deviations of a Gaussian distribution of actions for a given (input) state.
The value network and the Q-function networks are fully-connected MLP with two hidden layers, 64 units and ReLU nonlinearities. The output of the value network is a scalar, representing the value function for a given (input) state. The output of the Q-function network is a scalar, representing the value function for a given state and action. For KOVA, we set the initial prior ${\bf P}_{0|0} = {\bf I}$ and the sample generator $\mathcal{R}$ is an experience buffer that contains transitions form different policies.  In Tables \ref{SAC_table} and \ref{Kalman_for_SAC_table} we present the hyper parameters for the SAC experiments.

{\bf ACKTR:}
Following \cite{wu2017scalable}, there are two separate neural networks with 64 hidden units per layer in a two-layer network. The Tanh and ReLU nonlinearities are used for the policy network and value network, respectively, for all layers except the output layer, which doesn't have any nonlinearity. The log standard deviation of a Gaussian policy is parameterized as a bias in the final layer of policy network that doesn't depend on input state. ACKTR uses the asynchronous advantage actor critic (A3C) method for the advantage function estimation \cite{mnih2016asynchronous}. We used the baselines ACKTR implementation \cite{baselines} with all their default hyper-parameters, expect of the horizon (which in this algorithm is also the batch-size). In \cite{wu2017scalable}, ACKTR was trained with a horizon of 2500. At our experiments we tried horizon=\{250,  2500, 25000\} and chose to use horizon=250 in our experiments which gained the best results. In Table \ref{ACKTR_table} we present the hyper parameters we used for the ACKTR experiments.

\begin{table*}[htp]
	\parbox[t]{.45\linewidth}{
		\centering
		\caption{PPO hyper-parameters used for the Mujoco tasks}
		\begin{tabular}{|c|c|}
			\hline
			{\bf Hyper-parameter} & {\bf Value} \\ \hline \hline
			Horizon &  2048 \\ \hline 
			Minibatch size & 64 \\ \hline
			Adam learning rate & $3 \cdot 10^{-4}$ (decaying)  \\ \hline
			Num. epochs & 10 \\ \hline
			Discount $(\gamma)$ & 0.99 \\ \hline
			GAE parameter $(\lambda)$ & 0.95 \\ \hline
			Clip range $\epsilon_p$& 0.2 \\ \hline
		\end{tabular}
		\label{PPO_table}
	} \quad \quad
	\parbox[t]{.50\linewidth}{
		\centering
		\caption{KOVA hyper-parameters used for value function optimization in PPO}
		\begin{tabular}{|l|l|}
			\hline
			{\bf Hyper-parameter} & {\bf Value} \\ \hline \hline
			KOVA learning rate & $1.0$ (Swimmer, HalfCheetah,   \\ 
			& \quad \ \ \ Walker2d, Ant) \\ 
			& $0.1$ (Hopper) \\ \hline
			${\bf P}_{{\bf n}_t}$ type & max-ratio \\ \hline
			$\eta$ &  $0.1$ (Hopper, HalfCheetah) \\ 
			& $0.01$ (Swimmer, Walker2d)  \\ 
			& $0.001$ (Ant)  \\ \hline
		\end{tabular}
		\label{Kalman_for_PPO_table}
	}
\end{table*} 

\begin{table*}[htp]
	\parbox[t]{.45\linewidth}{
		\centering
		\caption{TRPO hyper-parameters used for Mujoco tasks}
		\begin{tabular}{|c|c|}
			\hline
			{\bf Hyper-parameter} & {\bf Value} \\ \hline \hline
			Horizon &  1024 \\ \hline 
			Batch size & 64 \\ \hline
			Discount $(\gamma)$ & 0.99 \\ \hline
			GAE parameter $(\lambda)$ & 0.98 \\ \hline
			Max KL & 0.01 \\ \hline
			Conjugate gradient iterations & 10 \\ \hline
			Conjugate gradient damping & 0.1 \\ \hline
			Value function iterations & 5 \\ \hline
			Value function learning rate & $10^{-3}$ \\ \hline
			Normalize observations & True \\ \hline
		\end{tabular}
		\label{TRPO_table}
	}
	\quad \quad 
	\parbox[t]{.50\linewidth}{
		\centering
		\caption{KOVA hyper-parameters used for value function optimization in TRPO}
		\begin{tabular}{|l|l|}
			\hline
			{\bf Hyper-parameter} & {\bf Value} \\ \hline \hline
			KOVA learning rate & $1.0$ (Swimmer, Hopper)  \\ 
			& $0.1$ (HalfCheetah) \\ 
			& $0.01$ (Ant, Walker2d) \\ \hline
			${\bf P}_{{\bf n}_t}$ type & max-ratio \\ \hline
			$\eta$ & $0.01$ (Swimmer, Walker2d)\\ 
			& $0.001$ (Hopper, HalfCheetah, Ant) \\ \hline
		\end{tabular}
		\label{Kalman_for_TRPO_table}
	}
\end{table*}


\begin{table*}[htp]
	\parbox[t]{.45\linewidth}{
		\centering
		\caption{SAC hyper-parameters used for Mujoco tasks}
		\begin{tabular}{|c|c|}
			\hline
			{\bf Hyper-parameter} & {\bf Value} \\ \hline \hline
			Batch size & 64 \\ \hline
			Adam learning rate & $3 \cdot 10^{-4}$ \\ \hline
			Discount $(\gamma)$ & 0.99 \\ \hline
			Replay buffer size & $50,000$ \\ \hline
			Target smoothing coefficient $(\tau)$ & $0.005$ \\ \hline
		\end{tabular}
		\label{SAC_table}
	}
	\quad \quad 
	\parbox[t]{.50\linewidth}{
		\centering
		\caption{KOVA hyper-parameters used for value function optimization in SAC}
		\begin{tabular}{|l|l|}
			\hline
			{\bf Hyper-parameter} & {\bf Value} \\ \hline \hline
			KOVA learning rate & $1.0$ (Swimmer, HalfCheetah)  \\ 
			& $0.1$ (Hopper, Ant, Walker2d) \\ \hline 
			${\bf P}_{{\bf n}_t}$ type & batch-size \\ \hline
			$\eta$ & $0.1$ (Ant) \\ 
			& $0.01$ (HalfCheetah, Walker2d) \\ 
			& $0.001$ (Swimmer, Hopper) \\ \hline
		\end{tabular}
		\label{Kalman_for_SAC_table}
	}
\end{table*}

\begin{table}[htp]
	\centering
	\caption{ACKTR hyper-parameters used for Mujoco tasks}
	\begin{tabular}{|c|c|}
		\hline
		{\bf Hyper-parameter} & {\bf Value} \\ \hline \hline 
		Horizon (is also the batch size)&  250 \\ \hline
		Discount $(\gamma)$ & 0.99 \\ \hline
		Entropy coefficient & 0.01 \\ \hline
		Value function coefficient & 0.5 \\ \hline
		Value function Fisher coefficient & 1.0 \\ \hline
		Learning rate & 0.25 \\ \hline
		Max gradient norm & 0.5 \\ \hline
		KFAC clip & 0.001 \\ \hline
		Momentum & 0.9 \\ \hline
		Damping $\epsilon_d$ & 0.01 \\ \hline
	\end{tabular}
	\label{ACKTR_table}
\end{table}

\subsection{Algorithms running time}

In Table \ref{clock_time} we compare between the running time of the algorithms we tested on the Mujoco environment (presented in Figure 3 in the main paper). We can see that for PPO and TRPO, the running time of KOVA is longer by 44-60\% compared to Adam optimizer. For SAC, the running time for KOVA is much longer than Adam. Further research is needed to understand the reason and to develop solutions that would decrease the running time. ACKTR running time is similar to TRPO, although in our experiments it gained the lowest mean episode reward over time. 

\begin{table*}[htp]
	\centering
	\caption{Running time (seconds) for each tested algorithm in each tested environment. We trained over 1 million timesteps for Swimmer, Hopper and HalfCheetah. We trained over 2 million timesteps for Ant, and Walker2d.}
	\begin{tabular}{|c|c|c|c|c|c|}
		\hline
		& {\bf Swimmer-v2} & {\bf Hopper-v2} & {\bf HalfCheetah-v2} & {\bf Ant-v2} & {\bf Walker2d-v2} \\ \hline \hline
		{\bf PPO+ADAM} & 1820 & 1900 & 1880 & 6260 & 3950\\ \hline
		{\bf PPO+KOVA} & 2690 & 2790 & 2770 & 9850 & 5700\\ \hline
		{\bf PPO: KOVA increased time percentage (\%)} & $48\%$ & $47\%$ & $47\%$ & $57\%$ & $44\%$\\ \hline \hline
		{\bf TRPO+ADAM} & 1010 & 1070 & 1030 & 2720 & 2270  \\ \hline
		{\bf TRPO+KOVA} & 1570 & 1720 & 1620 & 4210 & 3540 \\ \hline
		{\bf TRPO: KOVA increased time percentage (\%)} & $55\%$ & $60\%$ & $57\%$ & $55\%$ & $56\%$\\ \hline \hline
		{\bf SAC+ADAM} & 12300 & 21110 & 13255 & 26250 & 24010 \\ \hline
		{\bf SAC+KOVA} & 43530 & 116070 & 48980 & 152940 & 108610\\ \hline 
		{\bf SAC: KOVA increased time percentage (\%)} & $254\%$ & $450\%$ & $269\%$ & $482\%$ & $352\%$\\ \hline \hline
		{\bf ACKTR} & 1120 & 1220 & 1140 & 3220 & 2630\\ \hline
	\end{tabular}
	\label{clock_time}
\end{table*}

\subsection{Additional experiment}

\subsubsection{Investigating the evolution and observation noises:}

As discussed in Section 3.4 in the main paper, the most interesting hyper-parameters in KOVA are related to the covariances ${\bf P}_{{\bf v}_t}$ and ${\bf P}_{{\bf n}_t}$. If we define the evolution noise covariance as ${\bf P}_{{\bf v}_t} = \frac{\eta}{1-\eta}{\bf P}_{t-1|t-1}$ then we can control the hyper-parmeter $\eta$. We investigated the effect of different values of $\eta$ and ${\bf P}_{{\bf n}_t}$ in the Swimmer and HalfCheetah environments, where KOVA gained the most success. The results are depicted in Figure \ref{supfig:entropy_ppo}. We tested two different ${\bf P}_{{\bf n}_t}$ settings: the {\it batch-size} setting and the {\it max-ratio} setting as explained in Section \ref{supp:hyper}. Interestingly, although using KOVA results in lower policy loss (which we try to maximize), it actually increases the policy entropy and encourages exploration, which we believe helps in gaining higher rewards during training. We can clearly see how the mean rewards increases as the policy entropy increases, for different values of $\eta$. This insight was observed in both tested Mujoco environments and in both settings of ${\bf P}_{{\bf n}_t}$.

\begin{figure}[H] 
	\centering
	\subfigure[]{\label{fig:entropy_ppo_max_ratio}\includegraphics[width=0.9\linewidth,height=0.6\textheight,keepaspectratio]{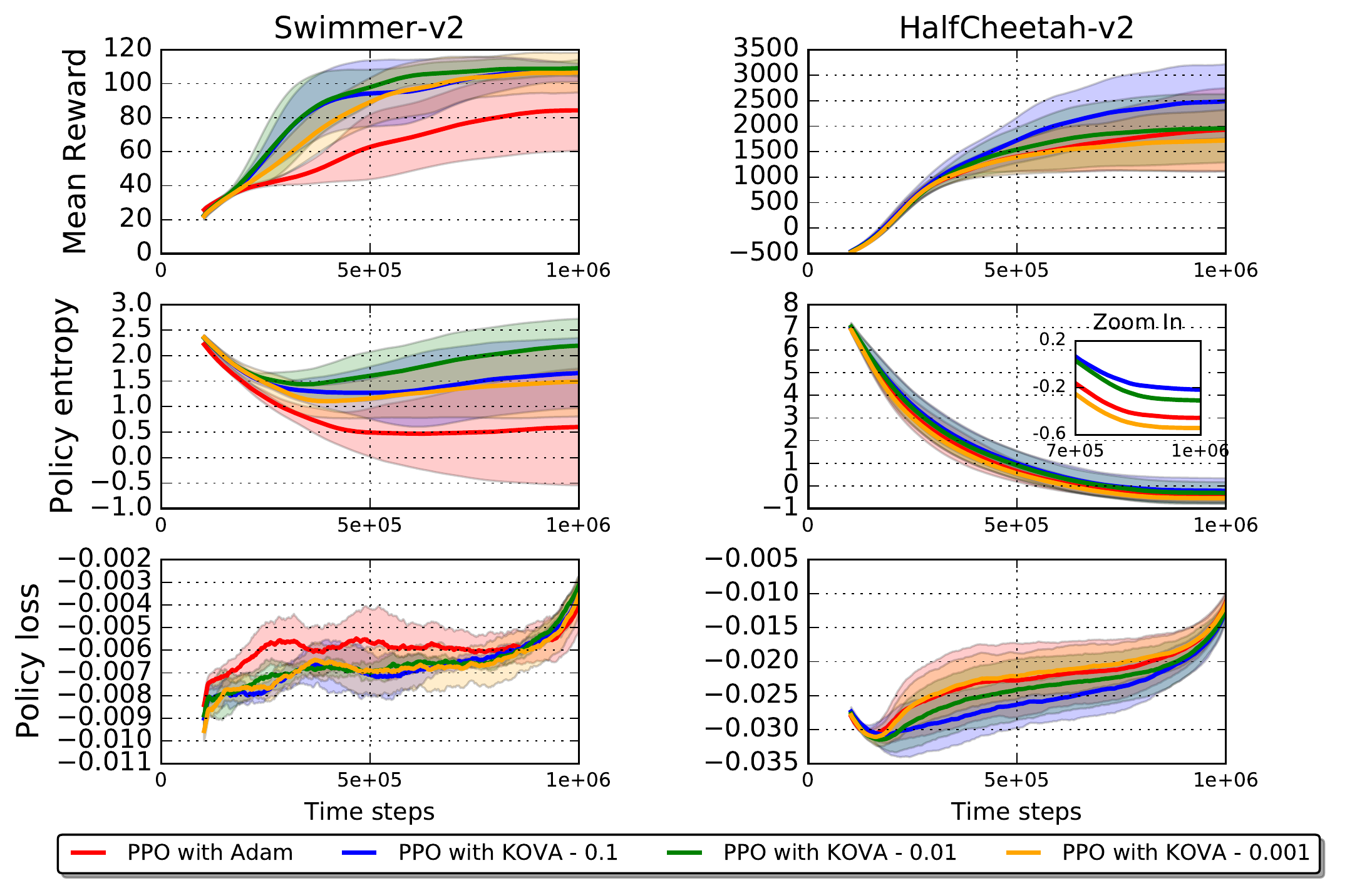}}
	\subfigure[]{\label{fig:entropy_ppo_batch_size}\includegraphics[width=0.9\linewidth,height=0.6\textheight,keepaspectratio]{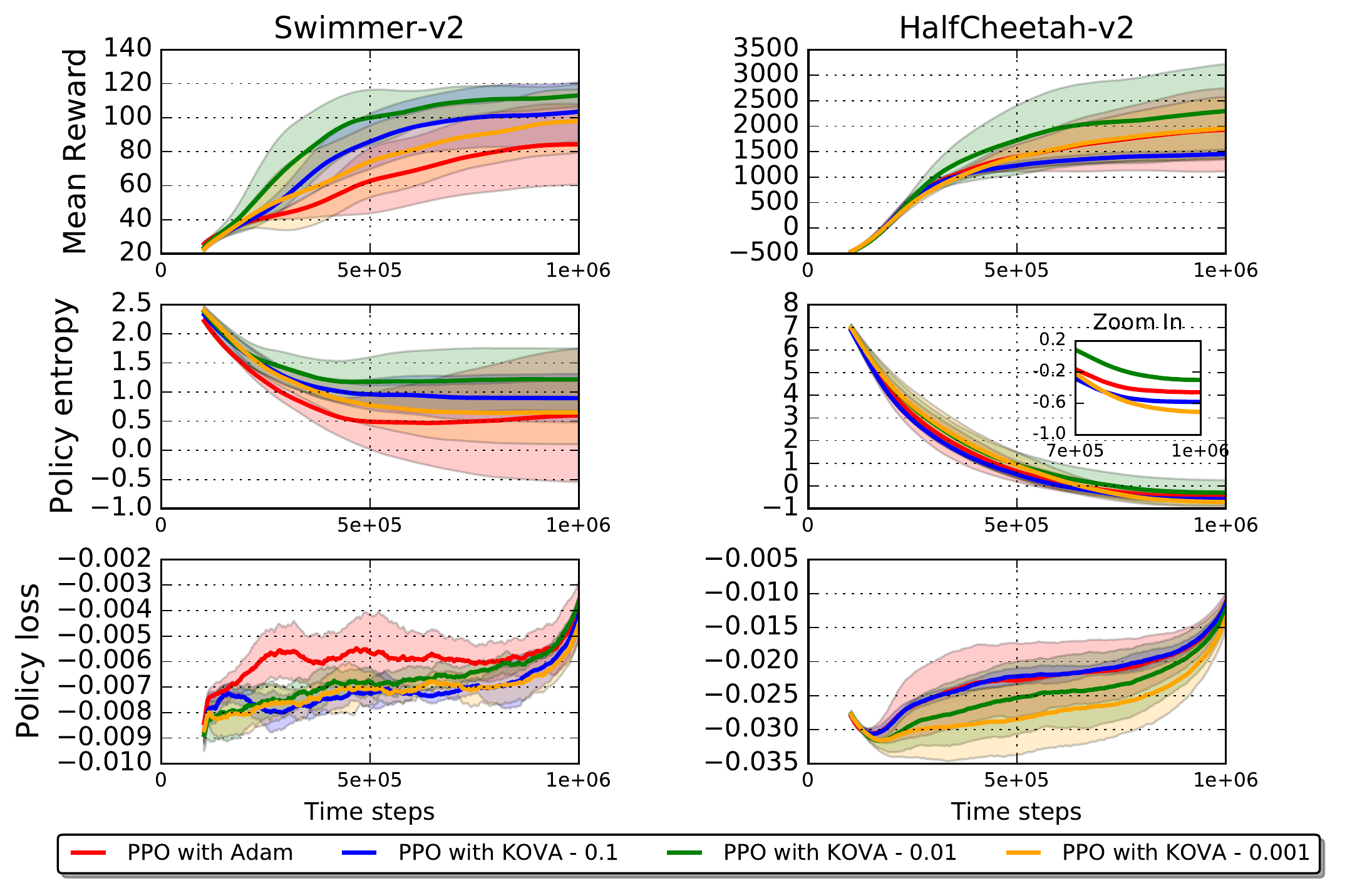}}
	\caption{Mean episode reward, policy entropy and the policy loss for a PPO agent in Swimmer-v2 and HalfCheetah-v2. We compare between optimizing the value function  with Adam vs. our KOVA optimizer. We present three different values for $\eta = 0.1, 0.01, 0.001$ and two different values for the diagonal elements in ${\bf P}_{{\bf n}_t}$: {\bf (a)} max-ratio and {\bf (b)} batch-size. We present the average (solid lines) and standard deviation (shaded area) of the mean episode reward over 8 runnings, generated from random seeds.} 
	\label{supfig:entropy_ppo}
\end{figure}

\small{
\bibliography{KalmanOptimizationforValueApproximation}
\bibliographystyle{icml2020}}

\end{document}